  \ifnum\value{page}>1 %
\newcommandx{\unsure}[2][1=]{\todo[inline,linecolor=red,backgroundcolor=red!25,bordercolor=red,#1]{#2}}
\newcommandx{\change}[2][1=]{\todo[linecolor=blue,backgroundcolor=blue!25,bordercolor=blue,#1]{#2}}
\newcommandx{\info}[2][1=]{\todo[linecolor=OliveGreen,inline,backgroundcolor=OliveGreen!25,bordercolor=OliveGreen,#1]{#2}}
\newcommandx{\improvement}[2][1=]{\todo[linecolor=Plum,inline,backgroundcolor=Plum!25,bordercolor=Plum,#1]{#2}}
\newtheorem{thm}{Theorem}
\newtheorem{lem}{Lemma}
\newtheorem{cor}{Corollary}
\newtheorem{ass}{Assumption}
\newtheorem{remark}{Remark}
\newcolumntype{C}[1]{>{\centering\let\newline\\\arraybackslash\hspace{0pt}}m{#1}}
\newcommand\tagthis{\addtocounter{equation}{1}\tag{\theequation}}
\DeclareMathOperator{\Exp}{\mathbb{E}}           
\DeclareMathOperator{\R}{\mathbb{R}} 
\DeclareMathOperator{\Ocal}{\mathcal{O}} 
\newcommand{\eqdef}{\stackrel{\text{def}}{=}}
\newcommand{\setn}{[n]}
\renewcommand{\top}{T}
\newcounter{subthm} 
\let\savedc@thm\c@hyp
\newenvironment{subthm}
 {%
  \setcounter{subthm}{0}%
  \stepcounter{thm}%
  \edef\saved@thm{\thethm}
  \let\c@thm\c@subthm     
  \renewcommand{\thethm}{\saved@thm\alph{thm}}%
 }
 {}
\newcommand{\normhyp}{%
  \let\c@hyp\savedc@hyp 
  \renewcommand\thehyp{\arabic{hyp}}%
} 
\newcounter{subass} 
\let\savedc@ass\c@hyp
\newenvironment{subass}
 {%
  \setcounter{subass}{0}%
  \stepcounter{ass}%
  \edef\saved@ass{\theass}
  \let\c@ass\c@subass     
  \renewcommand{\theass}{\saved@ass\alph{ass}}%
 }
 {}
\icmltitlerunning{SARAH: A Novel Method for Machine Learning Problems 
			Using Stochastic Recursive Gradient}
\begin{document} 

\twocolumn[
\icmltitle{SARAH: A Novel Method for Machine Learning Problems \\ 
			Using Stochastic Recursive Gradient}




\begin{icmlauthorlist}
\icmlauthor{Lam M. Nguyen}{to}
\icmlauthor{Jie Liu}{to}
\icmlauthor{Katya Scheinberg}{to,go}
\icmlauthor{Martin Tak\'{a}\v{c}}{to}
\end{icmlauthorlist}

\icmlaffiliation{to}{Department of Industrial and Systems Engineering,
            Lehigh University, USA.}
\icmlaffiliation{go}{On leave at The University of Oxford, UK. All authors were supported by NSF Grant CCF-1618717. Katya Scheinberg was partially supported by NSF Grants DMS 13-19356, CCF-1320137 and CCF-1618717}            

\icmlcorrespondingauthor{Lam M. Nguyen}{lamnguyen.mltd@gmail.com}
\icmlcorrespondingauthor{Jie Liu}{jie.liu.2018@gmail.com}
\icmlcorrespondingauthor{Katya Scheinberg}{katyas@lehigh.edu}
\icmlcorrespondingauthor{Martin Tak\'{a}\v{c}}{Takac.MT@gmail.com}

\icmlkeywords{boring formatting information, machine learning, ICML}

\vskip 0.3in
]



\printAffiliationsAndNotice{}  

\begin{abstract} 
In this paper, we propose a StochAstic Recursive grAdient algoritHm (SARAH), as well as its practical variant SARAH+, as a novel approach to the finite-sum minimization problems. Different from the vanilla SGD and other modern stochastic methods such as SVRG, S2GD, SAG and SAGA, SARAH admits a simple recursive framework for updating stochastic gradient estimates; when comparing to SAG/SAGA, SARAH does not require a storage of past gradients. The linear convergence rate of SARAH is proven under strong convexity assumption. We also prove a linear convergence rate (in the strongly convex case) for an inner loop of SARAH, the property that SVRG does not possess. Numerical experiments demonstrate the efficiency of our algorithm.
\end{abstract} 


\section{Introduction}
\label{introduction}
We are interested in solving a problem of the form
\begin{gather}\label{eq:problem}
\min_{w\in\R^d} \left\{ \ P(w) \eqdef \frac{1}{n} \sum_{i\in\setn} f_i(w)\right\},
\end{gather}
where each $f_i$, $i \in \setn\eqdef\{1,\dots,n\}$, is convex 
with a Lipschitz continuous gradient. Throughout the paper, we assume that there exists an optimal solution $w^{*}$ of \eqref{eq:problem}. 

Problems of this type arise frequently in supervised learning applications~\cite{ESL}. Given a training set $\{(x_i,y_i)\}_{i=1}^n$ with $x_i \in\R^{d}, y_i\in\R$, the  least squares regression model, for example, is written as \eqref{eq:problem} with $f_i(w)\eqdef (x_i^Tw-y_i)^2 + \frac{\lambda}{2} \|w\|^2$, where $\|\cdot\|$ denotes the $\ell_2$-norm. The $\ell_2$-regularized logistic regression  for binary classification is written with $f_i(w) \eqdef \log (1+\exp(-y_ix_i^Tw)) + \frac{\lambda}{2}\|w\|^2$ $(y_i\in\{-1,1\})$.

In recent years, many advanced optimization methods have been developed for problem  \eqref{eq:problem}. While the objective function is smooth and convex, the traditional optimization methods, such as gradient descent (GD) or Newton method are often impractical for this problem, when $n$ -- 
the number of training samples and hence the number of $f_i$'s -- is very large. In particular, GD updates iterates as follows
 $$w_{t+1} = w_{t} - \eta_t \nabla P(w_{t}), \quad t=0, 1, 2, \ldots .$$
Under strong convexity assumption on $P$ and with appropriate choice of $\eta_t$, GD converges at a linear rate in terms of objective function values $P(w_t)$. 
However, when $n$ is large, computing $ \nabla P(w_{t})$ at each iteration can be prohibitive. 

As an alternative, stochastic gradient descent (SGD)\footnote{We mark here that even though stochastic gradient is referred to as SG in literature, the term stochastic gradient descent (SGD) has been widely used in many important works of large-scale learning, including SAG/SAGA, SDCA, SVRG and MISO.}, originating from the seminal work of Robbins and Monro in 1951 \cite{RM1951}, has become the method of choice for solving \eqref{eq:problem}. At each step, SGD picks an index $i\in \setn$ uniformly at random, and updates the iterate as
$w_{t+1} = w_{t} - \eta_t \nabla f_i(w_{t})$, which is up-to  $n$ times cheaper than an iteration of a full gradient method. The convergence rate of SGD is 
slower than that of GD, in particular, it is sublinear in the strongly convex case. The tradeoff, however, is advantageous due to the tremendous per-iteration savings and the fact that low accuracy solutions are sufficient. This trade-off has been thoroughly analyzed in \cite{Bottou1998}.  Unfortunately, in practice SGD method is often too slow and its performance is too sensitive  to the variance in the sample gradients $\nabla f_i(w_{t})$. Use of mini-batches (averaging multiple  sample gradients $\nabla f_i(w_{t})$) was used in  \cite{pegasosICML, acceleratedmb, takac2013ICML} to reduce the variance and improve convergence rate by  constant factors. Using diminishing sequence $\{\eta_t\}$ is used to control the variance \cite{pegasos, bottou2016optimization}, but the practical convergence of SGD is known to be very sensitive to the choice of this sequence, which needs to be hand-picked.

Recently, a class of more sophisticated
algorithms have emerged, which use the specific finite-sum form of \eqref{eq:problem} and combine some deterministic and stochastic aspects to reduce variance of the steps.  The examples of these methods  are 
SAG/SAGA \cite{SAG, SAGA}, SDCA \cite{SDCA}, SVRG \cite{SVRG, Xiao2014}, DIAG \cite{mokhtari2017double}, MISO \cite{mairal2013optimization}
and S2GD \cite{S2GD}, all of which enjoy  faster convergence rate than that of SGD and use a fixed learning rate parameter $\eta$. In this paper we introduce a new method in this category, SARAH, which further improves several aspects of the existing methods. 
In Table~\ref{table:summary} we summarize complexity and some other properties of the existing methods and SARAH when applied to  strongly convex problems.
Although SVRG and SARAH have the same convergence rate, we introduce a practical variant of SARAH that outperforms SVRG in our experiments. 
 \begin{table}
 \small
\caption{Comparisons between different algorithms for strongly convex functions. $\kappa = L/\mu$ is the condition number.
}
\label{table:summary}
\centering
\begin{tabular}{C{1.25cm} c C{1.4cm} C{1.3cm} }
Method & Complexity  & Fixed Learning Rate  & Low Storage Cost \\
\hline
GD & $\Ocal\left(n\kappa \log\left({1}/{\epsilon}\right)\right)$ & \ding{51} & \ding{51}  \\
\hline
SGD & $\Ocal\left({1}/{\epsilon}\right)$  & \ding{55}  & \ding{51} \\
\hline
SVRG & $\Ocal\left((n+\kappa) \log\left({1}/{\epsilon}\right)\right)$   & \ding{51} & \ding{51} \\
\hline
SAG/SAGA & $\Ocal\left((n + \kappa) \log\left({1}/{\epsilon}\right)\right)$   & \ding{51} & \ding{55} \\
\hline
 {\textbf{SARAH}} &  {$\Ocal\left((n + \kappa) \log\left({1}/{\epsilon}\right)\right)$} & {\ding{51}} &  {\ding{51}} \\
\hline
\end{tabular}

\vskip0.3cm
\caption{Comparisons between different algorithms for convex functions.}
\label{table:summary_convex}
 
\begin{tabular}{C{3cm} c }
Method & Complexity  \\
\hline
GD & $\Ocal\left(n/\epsilon \right)$  \\
\hline
SGD & $\Ocal\left({1}/{\epsilon^2}\right)$  \\
\hline
SVRG & $\Ocal\left(n + (\sqrt{n}/\epsilon) \right)$   \\
\hline
SAGA & $\Ocal\left(n + (n/\epsilon) \right)$  \\
\hline
{\textbf{SARAH}} & {$\Ocal\left((n + (1/\epsilon)) \log(1/\epsilon) \right)$} \\
\hline
 {\textbf{SARAH (one outer loop)}} &  {$\Ocal\left(n + (1/\epsilon^2) \right)$} \\
\hline
\end{tabular}
\end{table}

In addition, theoretical results for complexity of the methods or their variants when applied to general convex functions have been derived~\cite{SAGjournal, SAGA, nonconvexSVRG, SVRG++, Katyusha}. In Table~\ref{table:summary_convex}  we summarize the key complexity results, noting that  convergence rate is now sublinear.

\paragraph{Our Contributions.} In this paper, we propose a novel algorithm which  combines some of the good properties of existing algorithms, such as SAGA and SVRG, while aiming to improve on both of these methods. In particular, our algorithm does not take steps along a stochastic gradient direction, but rather along an accumulated direction using past stochastic gradient information (as in SAGA) and occasional exact gradient information (as in SVRG).  We summarize the key properties of the proposed algorithm below. 
\begin{itemize}[noitemsep,nolistsep]
\item Similarly to SVRG, SARAH's iterations are divided into the outer loop where a full gradient is computed and the inner loop where only stochastic gradient is computed.  Unlike the case of SVRG, the steps of the inner loop of SARAH are based on accumulated stochastic information. 
\item Like SAG/SAGA and SVRG, SARAH has a sublinear rate of convergence  for general convex functions, and a linear rate of convergence for strongly convex functions.
\item SARAH uses a constant learning rate, whose size is larger than that of SVRG.  We analyze and discuss the optimal choice of the learning rate and the number of inner loop steps. However, unlike SAG/SAGA but similar to SVRG, SARAH does not require a storage of $n$ past stochastic gradients.
\item We also prove a linear convergence rate (in the strongly convex case) for the inner loop of SARAH, the property that SVRG does not possess. We show that the variance of the steps inside the inner loop goes to zero, thus SARAH is theoretically more stable and reliable than SVRG.
\item We provide a practical variant of SARAH  based on the convergence properties of the inner loop, where the simple stable stopping criterion for the inner loop is used (see Section \ref{sec:sarahplus} for more details). This variant shows how SARAH can be made more stable than SVRG in practice. 
\end{itemize}

%


\section{Stochastic Recursive Gradient Algorithm} 

Now we are ready to present our SARAH (Algorithm \ref{sarah}).

\begin{algorithm}
   \caption{SARAH}
   \label{sarah}
\begin{algorithmic}
   \STATE {\bfseries Parameters:} the learning rate $\eta > 0$ and the inner loop size $m$.
   \STATE {\bfseries Initialize:} $\tilde{w}_0$
   \STATE {\bfseries Iterate:}
   \FOR{$s=1,2,\dots$}
   \STATE $w_0 = \tilde{w}_{s-1}$
   \STATE $v_0 = \frac{1}{n}\sum_{i=1}^{n} \nabla f_i(w_0)$
   \STATE $w_1 = w_0 - \eta v_0$
   \STATE {\bfseries Iterate:}
   \FOR{$t=1,\dots,m-1$}
   \STATE Sample $i_{t}$ uniformly at random from $\setn$
   \STATE $v_{t} = \nabla f_{i_{t}} (w_{t}) - \nabla f_{i_{t}}(w_{t-1}) + v_{t-1}$
   \STATE $w_{t+1} = w_{t} - \eta v_{t}$
   \ENDFOR
   \STATE Set $\tilde{w}_s = w_{t}$ with $t$ chosen uniformly at random from $\{0,1,\dots,m\}$
   \ENDFOR
\end{algorithmic}
\end{algorithm} 

The key step of the algorithm is a recursive update of the stochastic gradient estimate \textit{(SARAH update)}
\begin{equation}\label{eq:vt}
   v_{t} = \nabla f_{i_{t}} (w_{t}) - \nabla f_{i_{t}}(w_{t-1}) + v_{t-1},
\end{equation}
followed by the iterate update:
\begin{equation}\label{eq:iterate}
w_{t+1} = w_{t} - \eta v_{t}.
\end{equation}
For comparison, SVRG update can be written in a similar way as
\begin{equation}\label{eq:svrgvt}
   v_{t} = \nabla f_{i_{t}} (w_{t}) - \nabla f_{i_{t}}(w_{0}) + v_{0}. 
\end{equation}

Observe that in SVRG, $v_t$ is an unbiased estimator of the gradient, while it is not true for SARAH. Specifically,  \footnote{
$\Exp [\cdot | \mathcal{F}_{t}] = \Exp_{i_{t}}[\cdot]$, which is  expectation with respect to the random choice of index $i_{t}$ (conditioned on
$w_0, i_1, i_2, \dots, i_{t-1}$).}
\begin{equation}\label{eq:NotSGD}
\Exp[v_{t} | \mathcal{F}_{t}]  = \nabla P(w_{t}) - \nabla P(w_{t-1}) + v_{t-1}\neq \nabla P(w_{t}),
\end{equation} 
where \footnote{$\mathcal{F}_{t}$ also contains all the information of $w_0,\dots,w_{t}$ as well as $v_0,\dots,v_{t-1}.$} $\mathcal{F}_{t} = \sigma(w_0,i_1,i_2,\dots,i_{t-1})$ is the $\sigma$-algebra generated by $w_0,i_1,i_2,\dots,i_{t-1}$; $\mathcal{F}_{0} = \mathcal{F}_{1} = \sigma(w_0)$.  
Hence, SARAH is different from SGD and SVRG type of methods, however, the following total expectation holds, $\Exp[v_{t}] = \Exp[\nabla P(w_{t})]$, differentiating SARAH from SAG/SAGA.

SARAH is similar to SVRG since they both contain outer loops which require one full gradient evaluation per outer iteration followed by one full gradient descent step  with a given learning rate. The difference lies in the inner loop, where SARAH updates the stochastic step direction $v_t$ recursively by adding and subtracting component gradients to and from the previous $v_{t-1}\ (t\geq 1)$ in \eqref{eq:vt}. Each inner iteration evaluates 
$2$ stochastic gradients and hence the total work per outer iteration is $\Ocal(n+m)$ in terms of the number of gradient evaluations. 
Note that due to its nature, without running the inner loop, i.e., $m = 1$, SARAH reduces to the  GD algorithm.


\section{Theoretical Analysis}

To proceed with the analysis of the proposed algorithm, we will make the following common assumptions.
\begin{ass}[$L$-smooth]
\label{ass_Lsmooth}
Each $f_i: \mathbb{R}^d \to \mathbb{R}$, $i \in \setn$, is $L$-smooth, i.e., there exists a constant $L > 0$ such that
$$
\| \nabla f_i(w) - \nabla f_i(w') \| \leq L \| w - w' \|, \ \forall w,w' \in \mathbb{R}^d.
$$
\end{ass}
 Note that this assumption implies that $P(w) = \frac{1}{n}\sum_{i=1}^n f_i(w)$ is also \emph{L-smooth}. The following strong convexity assumption
 will be made for the appropriate parts of the analysis, otherwise, it would be dropped.  
\begin{subass}
\begin{ass}[$\mu$-strongly convex]
\label{ass_stronglyconvex}
The function $P: \mathbb{R}^d \to \mathbb{R}$, is $\mu$-strongly convex, i.e., there exists a constant $\mu > 0$ such that $\forall w,w' \in \mathbb{R}^d$, 
\begin{gather*}
P(w)   \geq  P(w') + \nabla P(w')^\top (w - w') + \tfrac{\mu}{2}\|w - w'\|^2.
\end{gather*}
\end{ass}
Another, stronger, assumption of $\mu$-strong convexity for \eqref{eq:problem} will also be imposed when required in our analysis. Note that Assumption~\ref{ass_stronglyconvex2} implies Assumption~\ref{ass_stronglyconvex} but not vice versa.
\begin{ass}
\label{ass_stronglyconvex2}
Each function $f_i: \mathbb{R}^d \to \mathbb{R}$, $i \in \setn$, is strongly convex with $\mu>0$. 
\end{ass}
\end{subass}
Under Assumption \ref{ass_stronglyconvex}, let us define the (unique) optimal solution of \eqref{eq:problem} as $w^{*}$, 
Then strong convexity of $P$  implies that 
 \begin{equation}\label{eq:strongconvexity2}
 2\mu [ P(w) - P(w^{*})] \leq  \| \nabla P(w)\|^2, \ \forall w \in \mathbb{R}^d. 
 \end{equation}
We note here, for future use, that for strongly convex functions of the form \eqref{eq:problem}, arising in machine learning applications, the condition number is defined as $\kappa\eqdef L/\mu$. Furthermore, we should also notice that Assumptions \ref{ass_stronglyconvex} and \ref{ass_stronglyconvex2} both cover a wide range of problems, e.g. $l_2$-regularized empirical risk minimization problems with convex losses. 


Finally, as a special case of the strong convexity of all $f_i$'s with $\mu=0$, we state the general convexity assumption, which we will use for convergence analysis.
\begin{ass}
\label{ass_convex}
Each function $f_i: \mathbb{R}^d \to \mathbb{R}$, $i \in \setn$, is convex, i.e.,
\begin{gather*}
f_i(w)    \geq f_i(w') + \nabla f_i(w')^\top (w - w'), \quad \forall i\in\setn. 
\end{gather*}
\end{ass}
Again, we note that Assumption~\ref{ass_stronglyconvex2} implies Assumption~\ref{ass_convex}, but Assumption  \ref{ass_stronglyconvex} does not.
Hence in our analysis, depending on the result we aim at, we will require Assumption~\ref{ass_convex} to hold by itself, or Assumption  \ref{ass_stronglyconvex}  and Assumption~\ref{ass_convex} to hold together, or Assumption~\ref{ass_stronglyconvex2} to hold by itself. We will always use Assumption  \ref{ass_Lsmooth}.

Our iteration complexity analysis aims to bound the number of outer iterations $\mathcal{T}$ (or total number of stochastic gradient evaluations) which is needed to  guarantee that 
$\|\nabla P(w_\mathcal{T})\|^2\leq \epsilon$. In this case we will say that $w_\mathcal{T}$ is an $\epsilon$-accurate solution. 
However, as is  common practice for  
stochastic gradient
 algorithms, we aim to obtain the bound on the number of iterations, which is required to guarantee the bound on the expected squared norm of a gradient, i.e.,
\begin{equation}\label{eq:accuracy}
\Exp [\| \nabla P(w_\mathcal{T}) \|^2] \leq \epsilon.
\end{equation}

\subsection{Linearly Diminishing Step-Size  in a Single Inner Loop}\label{sec:linearconvergence}

The most important property of the SVRG algorithm is  the variance reduction of the steps. This property holds as the number of outer iteration grows, but it does not hold, if only the number of inner iterations increases.  In other words, if we simply run the inner loop for many iterations (without executing additional outer loops), the variance of the steps does not reduce in the case of SVRG, while it goes to zero in the case of SARAH. 
To illustrate this effect, let us take a look at Figures~\ref{fig:VR1} and \ref{fig:VR2}. 

In Figure~\ref{fig:VR1}, we applied one outer loop of SVRG and SARAH to a sum of $5$  quadratic functions in a two-dimensional space, where the optimal solution is at the origin, the black lines and black dots indicate the trajectory of each algorithm and the red point indicates the final iterate. 
Initially, both SVRG and SARAH take steps along stochastic gradient directions towards the optimal solution. However, later iterations of SVRG wander randomly around the origin with large deviation from it, while SARAH follows a much more stable convergent trajectory, with a final iterate falling in a small neighborhood of the optimal solution. 

\begin{figure}  
\centering
 \epsfig{file=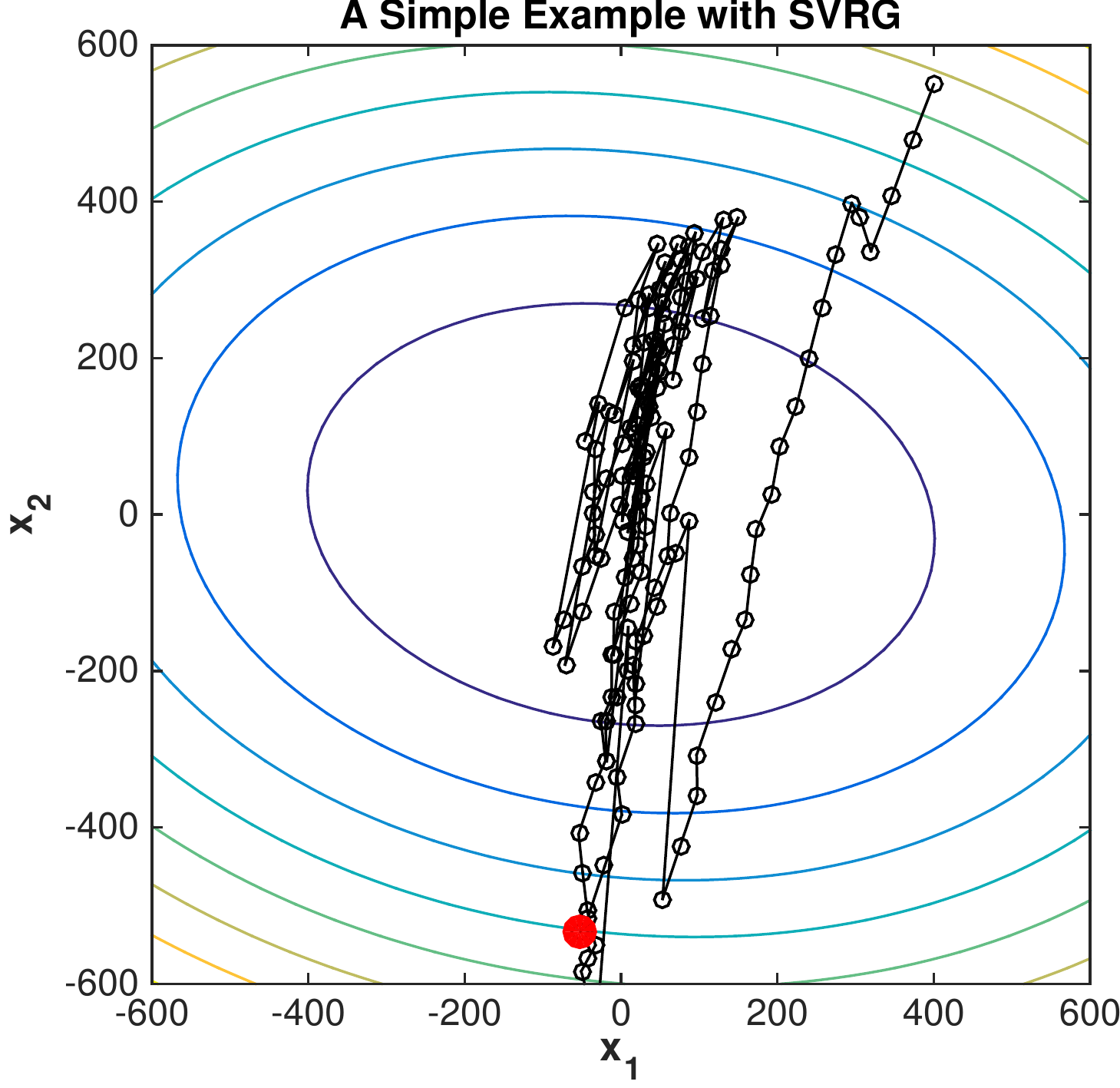,width=0.22\textwidth}
 \hspace{4mm}
 \epsfig{file=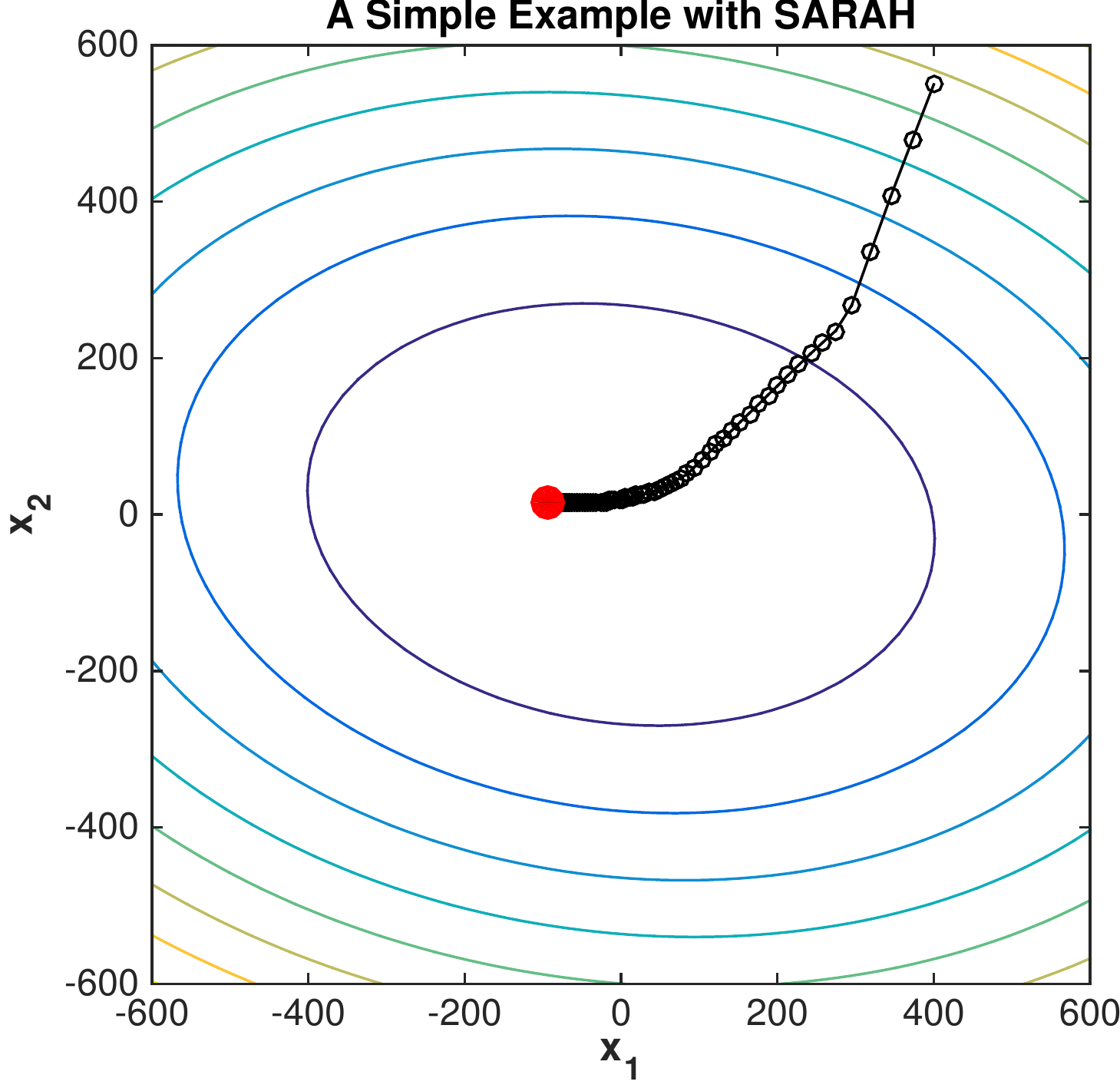,width=0.22\textwidth}
  \caption{\footnotesize A two-dimensional example of $\min_w P(w)$ with $n=5$ for SVRG (left) and SARAH (right).}
  \label{fig:VR1}
  
$\ $\\$\ $\\ 
\centering
 \epsfig{file=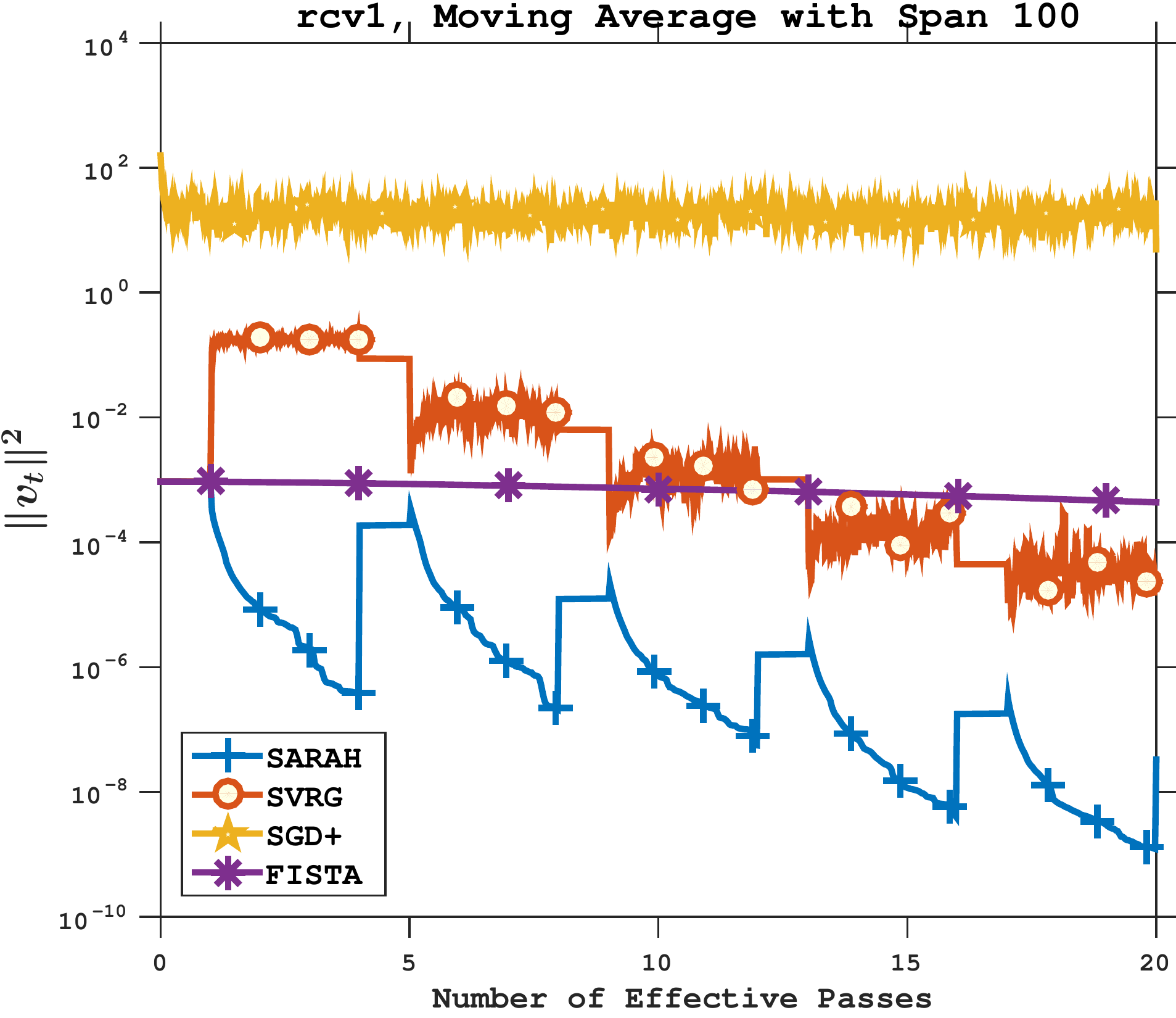,width=0.22\textwidth}
 \hspace{4mm}
 \epsfig{file=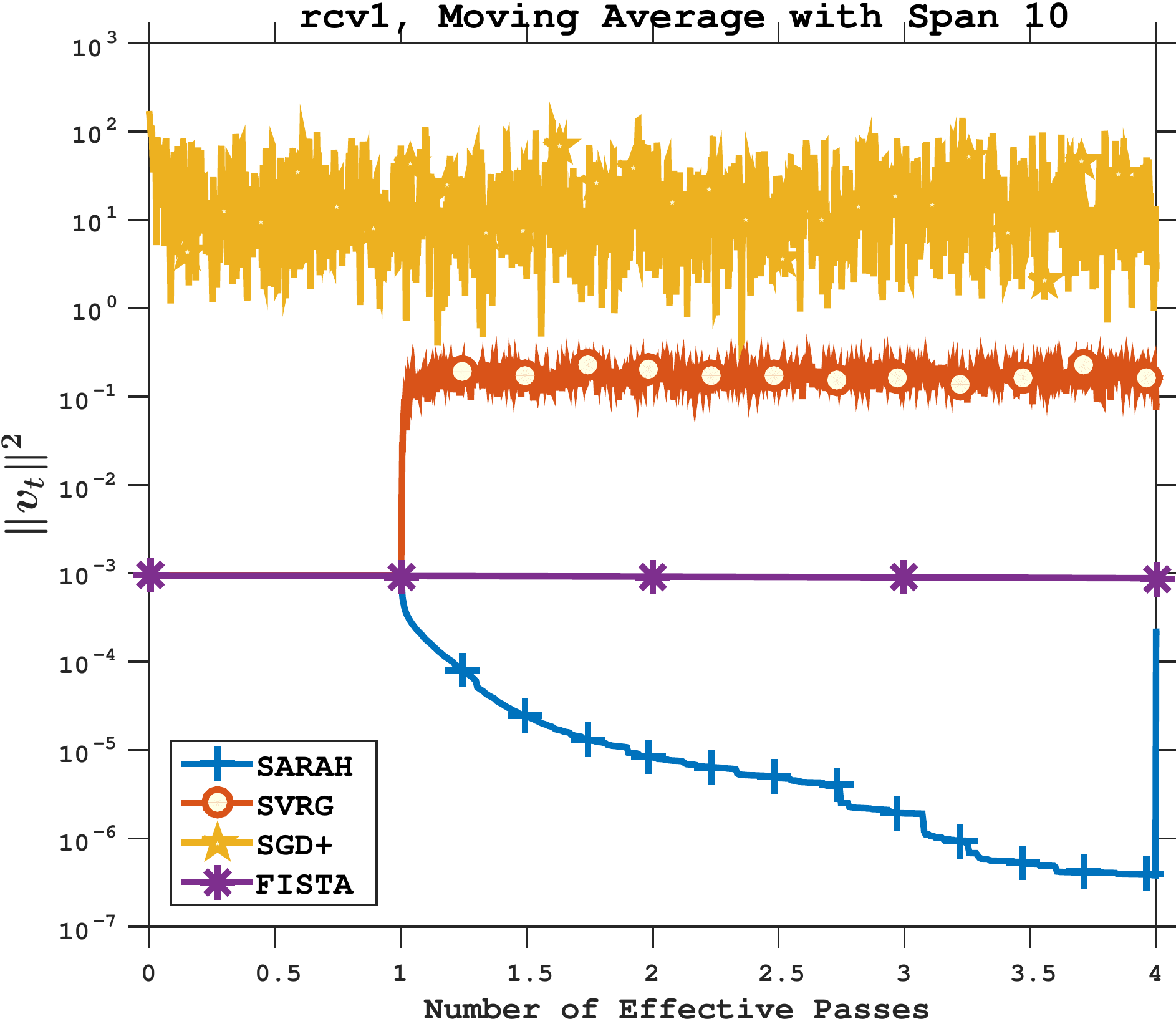,width=0.22\textwidth}
  \caption{\footnotesize An example of $\ell_2$-regularized logistic regression on \emph{rcv1} training dataset for SARAH, SVRG, SGD+ and FISTA with multiple outer iterations (left) and a single outer iteration (right).}
  \label{fig:VR2}
 \end{figure}

In Figure~\ref{fig:VR2}, the x-axis  denotes the  \emph{number of effective passes} which is equivalent to the  number of passes through all of the data in the dataset, the cost of each pass being  equal to the cost of one full gradient evaluation; and y-axis represents $\|v_t\|^2$.
Figure~\ref{fig:VR2} shows the evolution of  $\|v_t\|^2$ for SARAH, SVRG, SGD+ (SGD with decreasing learning rate) and FISTA (an accelerated version of  GD~\cite{fista}) with $m=4n$, 
where the left plot shows the trend over multiple outer iterations and the right plot shows a single outer iteration\footnote{In the plots of Figure~\ref{fig:VR2}, since the data for SVRG is noisy, we smooth it by using moving average filters with spans 100 for the left plot and 10 for the right one.}. We can see that for SVRG, $\|v_t\|^2$  decreases over the outer iterations, while it has an increasing trend or oscillating trend for each inner loop. In contrast, SARAH enjoys decreasing trends both in the outer  and the inner loop iterations.

We will now show that the stochastic steps computed by SARAH converge linearly in the inner loop.  We present two linear convergence results  based on our two different assumptions of $\mu$-\emph{strong convexity}. These results substantiate our conclusion that SARAH  uses more stable stochastic gradient estimates than SVRG. The following theorem is our first result to demonstrate the linear convergence of our stochastic recursive step $v_t$.

\begin{subthm}
\begin{thm}\label{lem_bouned_moment_stronglyconvexP}
Suppose that Assumptions \ref{ass_Lsmooth}, \ref{ass_stronglyconvex} and \ref{ass_convex} hold. Consider $v_{t}$ defined by \eqref{eq:vt} in SARAH (Algorithm \ref{sarah}) with $\eta < 2/L$. Then, for any $t\geq 1$,
\begin{align*}
\mathbb{E}[\|v_{t}\|^2]
 &\leq \left[ 1 - \left(\tfrac{2}{\eta L} - 1 \right) \mu^2 \eta^2  \right] \mathbb{E}[\|v_{t-1}\|^2]
 \\
 &\leq \left[ 1 - \left(\tfrac{2}{\eta L} - 1 \right) \mu^2 \eta^2  \right]^{t} \mathbb{E}[\| \nabla P(w_{0}) \|^2].
\end{align*}
\end{thm}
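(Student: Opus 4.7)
The plan is to derive a one-step contraction for $\mathbb{E}[\|v_t\|^2\mid\mathcal{F}_t]$ in terms of $\|v_{t-1}\|^2$, and then iterate it using the tower property together with the fact that $v_0 = \nabla P(w_0)$ (which is exact by construction). The contraction factor should come out to $1 - (\tfrac{2}{\eta L}-1)\mu^2\eta^2$, and the second displayed inequality then follows immediately by induction on $t$.

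First I would expand the SARAH recursion \eqref{eq:vt} to get
\begin{align*}
\|v_t\|^2 &= \|v_{t-1}\|^2 + 2\,v_{t-1}^\top\bigl(\nabla f_{i_t}(w_t) - \nabla f_{i_t}(w_{t-1})\bigr) \\
&\quad + \|\nabla f_{i_t}(w_t) - \nabla f_{i_t}(w_{t-1})\|^2,
\end{align*}
and then take the conditional expectation with respect to $\mathcal{F}_t$, under which $v_{t-1}$, $w_{t-1}$, $w_t$ are all measurable. The cross term becomes $2\,v_{t-1}^\top\bigl(\nabla P(w_t) - \nabla P(w_{t-1})\bigr)$, and using the iterate rule \eqref{eq:iterate} to substitute $v_{t-1} = -(w_t - w_{t-1})/\eta$, it rewrites as $-\tfrac{2}{\eta}(w_t - w_{t-1})^\top(\nabla P(w_t) - \nabla P(w_{t-1}))$.

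The main obstacle is controlling the third (variance) term in a way that lets it be absorbed by the (negative) cross term. The trick is to avoid the crude bound $L^2\|w_t-w_{t-1}\|^2$ and instead exploit co-coercivity for each convex $L$-smooth $f_i$ (Assumptions \ref{ass_Lsmooth}, \ref{ass_convex}): $\|\nabla f_i(x)-\nabla f_i(y)\|^2 \leq L\,(\nabla f_i(x)-\nabla f_i(y))^\top(x-y)$. Averaging over the uniform choice of $i_t$ yields
\begin{equation*}
\mathbb{E}\bigl[\|\nabla f_{i_t}(w_t) - \nabla f_{i_t}(w_{t-1})\|^2 \bigm|\mathcal{F}_t\bigr] \leq L\,(w_t - w_{t-1})^\top\bigl(\nabla P(w_t) - \nabla P(w_{t-1})\bigr).
\end{equation*}
Combining with the cross term, the inner-product pieces collect with coefficient $-(\tfrac{2}{\eta}-L)$, which is strictly positive under the assumption $\eta<2/L$, so both remaining inner products can be bounded from below.

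Finally I would use strong convexity of $P$, together with co-coercivity of $\nabla P$ (or equivalently the chain $(x-y)^\top(\nabla P(x)-\nabla P(y))\geq \tfrac{1}{L}\|\nabla P(x)-\nabla P(y)\|^2 \geq \tfrac{\mu^2}{L}\|x-y\|^2$ obtained from $L$-smoothness and $\|\nabla P(x)-\nabla P(y)\|\geq\mu\|x-y\|$), to get
\begin{equation*}
(w_t - w_{t-1})^\top(\nabla P(w_t) - \nabla P(w_{t-1})) \geq \tfrac{\mu^2}{L}\|w_t-w_{t-1}\|^2 = \tfrac{\mu^2\eta^2}{L}\|v_{t-1}\|^2.
\end{equation*}
Plugging this in gives exactly the coefficient $1 - (\tfrac{2}{\eta L}-1)\mu^2\eta^2$ on $\|v_{t-1}\|^2$. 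Taking total expectation produces the first line of the theorem; then induction on $t$, combined with $\mathbb{E}[\|v_0\|^2] = \|\nabla P(w_0)\|^2$, yields the second line.
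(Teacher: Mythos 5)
Your proposal is correct and takes essentially the same route as the paper: expand $\|v_t\|^2$ via the recursion \eqref{eq:vt}, use co-coercivity of the convex $L$-smooth $f_i$'s together with $w_t - w_{t-1} = -\eta v_{t-1}$, and finish with $\|\nabla P(w_t)-\nabla P(w_{t-1})\|\geq \mu\|w_t-w_{t-1}\|$ before taking total expectation and iterating from $v_0=\nabla P(w_0)$. The only cosmetic difference is that the paper applies co-coercivity to the cross term and then passes from $\mathbb{E}[\|\nabla f_{i_t}(w_t)-\nabla f_{i_t}(w_{t-1})\|^2\mid\mathcal{F}_t]$ to $\|\nabla P(w_t)-\nabla P(w_{t-1})\|^2$ by Jensen, whereas you apply co-coercivity to the quadratic term and lower-bound the resulting inner product via co-coercivity of $P$; both yield the identical contraction factor $1-\left(\tfrac{2}{\eta L}-1\right)\mu^2\eta^2$.
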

This result implies that by choosing $\eta=\Ocal(1/L)$, we obtain the linear  convergence of $\|v_t\|^2$ in expectation with the rate $(1-1/\kappa^2)$. Below we show that
 a better convergence rate can be obtained under a  stronger convexity assumption. 

\begin{thm}\label{thm:bound_moment}
Suppose that Assumptions \ref{ass_Lsmooth} and \ref{ass_stronglyconvex2} hold.  Consider $v_{t}$ defined by \eqref{eq:vt} in SARAH (Algorithm \ref{sarah}) with $\eta \leq 2/(\mu+L)$. Then the following bound holds, $\forall\ t\geq 1$, 
\begin{align*}
\mathbb{E}[\| v_{t} \|^2 ]
& \leq \left( 1 - \tfrac{2 \mu L \eta}{\mu + L} \right) \Exp[  \|v_{t-1} \|^2 ]
\\
& \leq \left(1 - \tfrac{2\mu L \eta}{\mu + L} \right)^{t} \Exp[ \| \nabla P(w_{0}) \|^2]. 
\end{align*}
\end{thm}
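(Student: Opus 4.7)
The plan is to track $\|v_t\|^2$ directly via the SARAH recursion, exploiting the fact that under Assumption~\ref{ass_stronglyconvex2} each summand $f_i$ (not just $P$) enjoys the combined strong-convexity/Lipschitz bound of Nesterov, which is what unlocks the tighter rate compared to Theorem~1a.

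First I would expand
\[
\|v_t\|^2 = \|v_{t-1}\|^2 + 2\langle v_{t-1},\,\nabla f_{i_t}(w_t)-\nabla f_{i_t}(w_{t-1})\rangle + \|\nabla f_{i_t}(w_t)-\nabla f_{i_t}(w_{t-1})\|^2,
\]
using the SARAH update \eqref{eq:vt}. Everything except $i_t$ is $\mathcal{F}_t$-measurable, so I would keep this identity as is rather than pre-emptively taking expectations, because the key inequality I want to apply is $i_t$-wise.

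Second, I would invoke the well-known two-term bound: since $f_{i_t}$ is $\mu$-strongly convex and $L$-smooth (Assumptions~\ref{ass_Lsmooth} and \ref{ass_stronglyconvex2}),
\[
\langle \nabla f_{i_t}(w_t)-\nabla f_{i_t}(w_{t-1}),\, w_t - w_{t-1}\rangle \;\geq\; \tfrac{\mu L}{\mu+L}\|w_t-w_{t-1}\|^2 + \tfrac{1}{\mu+L}\|\nabla f_{i_t}(w_t)-\nabla f_{i_t}(w_{t-1})\|^2.
\]
Using $w_t - w_{t-1} = -\eta v_{t-1}$ from the iterate update \eqref{eq:iterate}, dividing by $-\eta$ gives
\[
\langle \nabla f_{i_t}(w_t)-\nabla f_{i_t}(w_{t-1}),\, v_{t-1}\rangle \;\leq\; -\tfrac{\mu L\eta}{\mu+L}\|v_{t-1}\|^2 - \tfrac{1}{\eta(\mu+L)}\|\nabla f_{i_t}(w_t)-\nabla f_{i_t}(w_{t-1})\|^2.
\]
Substituting this into the expansion of $\|v_t\|^2$ yields
\[
\|v_t\|^2 \;\leq\; \bigl(1-\tfrac{2\mu L\eta}{\mu+L}\bigr)\|v_{t-1}\|^2 + \bigl(1 - \tfrac{2}{\eta(\mu+L)}\bigr)\|\nabla f_{i_t}(w_t)-\nabla f_{i_t}(w_{t-1})\|^2.
\]

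Third, I would invoke the step-size restriction $\eta\le 2/(\mu+L)$, which makes the coefficient $1-\tfrac{2}{\eta(\mu+L)}$ nonpositive, so the second term can simply be dropped. Taking $\mathbb{E}[\cdot\mid\mathcal{F}_t]$ (which only averages out $i_t$, and the right-hand side is already $\mathcal{F}_t$-measurable apart from that single stochastic term that we have just discarded) and then the tower property give the one-step bound
\[
\mathbb{E}[\|v_t\|^2] \;\leq\; \bigl(1-\tfrac{2\mu L\eta}{\mu+L}\bigr)\,\mathbb{E}[\|v_{t-1}\|^2].
\]
Iterating this back to $t=0$ and using $v_0 = \nabla P(w_0)$ (from the outer-loop initialization in Algorithm~\ref{sarah}) yields the second inequality.

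I do not expect any serious obstacle: the heart of the argument is the clean combination of the strong-convexity/smoothness inequality with the linear relation between $w_t-w_{t-1}$ and $v_{t-1}$. The only delicate point is realizing that one must apply the bound per-index before averaging over $i_t$---this is precisely why Assumption~\ref{ass_stronglyconvex2} (strong convexity of \emph{each} $f_i$) is needed and why the rate $1-\tfrac{2\mu L \eta}{\mu+L}$, rather than the weaker rate of Theorem~1a, can be achieved.
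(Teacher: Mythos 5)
Your proposal is correct and follows essentially the same route as the paper: expand $\|v_t\|^2$ via the SARAH recursion, apply Nesterov's strong-convexity/smoothness inequality (Lemma~\ref{lem_convex_lowerbound}) to each $f_{i_t}$ using $w_t - w_{t-1} = -\eta v_{t-1}$, drop the term whose coefficient $1-\tfrac{2}{\eta(\mu+L)}$ is nonpositive under $\eta \leq 2/(\mu+L)$, and recurse after taking expectations. The only cosmetic difference is that you apply the inequality per realization of $i_t$ before taking the conditional expectation, while the paper works directly under $\mathbb{E}[\cdot\mid\mathcal{F}_t]$; the substance is identical.
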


\end{subthm}

Again, by setting $\eta=\Ocal(1/L)$, we derive the linear convergence with the rate of $(1-1/\kappa)$, which is a significant improvement over the result of Theorem~\ref{lem_bouned_moment_stronglyconvexP}, when the problem is severely ill-conditioned.

\subsection{Convergence Analysis}


In this section, we derive the general convergence rate results for Algorithm \ref{sarah}. First, we present two important Lemmas as the foundation of our theory. Then, we proceed to prove sublinear convergence rate of a single outer iteration when applied to general convex functions. In the end, we prove that the algorithm with multiple outer iterations has linear convergence rate in the strongly convex case.

We begin with proving two useful lemmas that do not require any convexity assumption.
The first Lemma 
\ref{lem_main_derivation}
bounds the  sum of expected values of
$\|\nabla P(w_t)\|^2$.
The second, Lemma \ref{lem:var_diff_01},
bounds $\mathbb{E}[ \| \nabla P(w_{t}) - v_{t} \|^2 ]$. 

\begin{lem}\label{lem_main_derivation}
Suppose that Assumption \ref{ass_Lsmooth} holds. Consider SARAH (Algorithm \ref{sarah}). Then, we have 
\begin{align*}
& \sum_{t=0}^{m} \mathbb{E}[ \| \nabla P(w_{t})\|^2 ]  \leq \frac{2}{\eta} \mathbb{E}[ P(w_{0}) - P(w^{*})]
\tagthis \label{eq:001} 
\\&\quad+ \sum_{t=0}^{m} \mathbb{E}[ \| \nabla P(w_{t}) - v_{t} \|^2 ]  
 - ( 1 - L\eta ) \sum_{t=0}^{m} \mathbb{E} [ \| v_{t} \|^2 ].
\end{align*}
\end{lem}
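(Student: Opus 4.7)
The plan is to reduce Lemma \ref{lem_main_derivation} to a per-step descent inequality that we then sum and telescope. Since the statement requires only $L$-smoothness and no convexity, the whole argument should rest on the standard quadratic upper bound from Assumption \ref{ass_Lsmooth} combined with a polarization identity that cleanly splits $\nabla P(w_t)^{\top} v_t$ into the three terms appearing in \eqref{eq:001}.

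First I would apply the descent lemma implied by $L$-smoothness of $P$ to the pair $(w_t,w_{t+1})$, together with the update rule $w_{t+1}=w_t-\eta v_t$, to obtain
\begin{equation*}
P(w_{t+1}) \le P(w_t) - \eta\, \nabla P(w_t)^{\top} v_t + \tfrac{L\eta^2}{2}\|v_t\|^2.
\end{equation*}
Next I would use the identity $-2 a^{\top} b = \|a-b\|^2 - \|a\|^2 - \|b\|^2$ with $a = \nabla P(w_t)$, $b=v_t$, which gives
\begin{equation*}
-\eta\, \nabla P(w_t)^{\top} v_t = \tfrac{\eta}{2}\|\nabla P(w_t)-v_t\|^2 - \tfrac{\eta}{2}\|\nabla P(w_t)\|^2 - \tfrac{\eta}{2}\|v_t\|^2.
\end{equation*}
Substituting and collecting terms yields the per-step inequality
\begin{equation*}
\tfrac{\eta}{2}\|\nabla P(w_t)\|^2 \le P(w_t) - P(w_{t+1}) + \tfrac{\eta}{2}\|\nabla P(w_t)-v_t\|^2 - \tfrac{\eta}{2}(1-L\eta)\|v_t\|^2.
\end{equation*}

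Then I would take total expectations, sum this bound from $t=0$ to $t=m$ (extending the update rule one more step to define $w_{m+1}=w_m-\eta v_m$ if needed so that the telescoping term $\sum_t (P(w_t)-P(w_{t+1}))=P(w_0)-P(w_{m+1})$ is valid), and finally use $P(w_{m+1})\ge P(w^*)$ to replace $-\mathbb{E}[P(w_{m+1})]$ by $-P(w^*)$. Dividing through by $\eta/2$ produces exactly \eqref{eq:001}.

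There is no real obstacle here — the only point requiring a touch of care is the bookkeeping at the upper end of the telescoping sum, which is handled by the trivial extension of the iterate sequence and the optimality of $w^*$. Notably, no convexity of the $f_i$'s nor any property of the SARAH estimator $v_t$ is used in this lemma: the inequality holds for any sequence $\{v_t\}$ and any update of the form $w_{t+1}=w_t-\eta v_t$, with the SARAH-specific content being postponed to the subsequent lemma bounding $\mathbb{E}\|\nabla P(w_t)-v_t\|^2$.
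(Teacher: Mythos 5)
Your proposal is correct and follows essentially the same route as the paper's own proof: the $L$-smoothness descent bound applied to $w_{t+1}=w_t-\eta v_t$, the polarization identity splitting $\nabla P(w_t)^\top v_t$, then summing, telescoping, and bounding $P(w_{m+1})\ge P(w^*)$. The only (immaterial) difference is that the paper takes expectations before rearranging, while you take them after; your remark that the lemma needs no convexity or SARAH-specific structure also matches the paper's usage.
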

\begin{lem}\label{lem:var_diff_01}
Suppose that Assumption \ref{ass_Lsmooth} holds. Consider $v_{t}$ defined by \eqref{eq:vt} in SARAH (Algorithm \ref{sarah}). Then for any $t\geq 1$, 
\begin{align*}
&\mathbb{E}[ \| \nabla P(w_{t}) - v_{t} \|^2 ] 
= \sum_{j = 1}^{t} \mathbb{E}[ \| v_{j} - v_{j-1} \|^2 ]  
 \\&\qquad\qquad
 - \sum_{j = 1}^{t} \mathbb{E}[ \| \nabla P(w_{j}) - \nabla P(w_{j-1}) \|^2 ]. 
\end{align*}
\end{lem}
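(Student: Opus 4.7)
The plan is to prove the identity by writing $\nabla P(w_t) - v_t$ as a sum of mean-zero increments and then exploiting orthogonality of martingale differences. First I would record the basic observation that at the start of each outer iteration, $v_0 = \frac{1}{n}\sum_i \nabla f_i(w_0) = \nabla P(w_0)$, so $\nabla P(w_0) - v_0 = 0$. From the recursion $v_j - v_{j-1} = \nabla f_{i_j}(w_j) - \nabla f_{i_j}(w_{j-1})$ together with this initial condition, I can then telescope
\begin{equation*}
\nabla P(w_t) - v_t = \sum_{j=1}^{t} X_j,\qquad X_j := \bigl(\nabla P(w_j) - \nabla P(w_{j-1})\bigr) - (v_j - v_{j-1}).
\end{equation*}

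The second step is to identify the martingale-difference structure. Since $w_{j-1}, w_j$ are $\mathcal{F}_j$-measurable and $i_j$ is drawn uniformly from $[n]$ independently of $\mathcal{F}_j$, I would compute
\begin{equation*}
\mathbb{E}[v_j - v_{j-1}\mid \mathcal{F}_j] = \mathbb{E}[\nabla f_{i_j}(w_j) - \nabla f_{i_j}(w_{j-1})\mid \mathcal{F}_j] = \nabla P(w_j) - \nabla P(w_{j-1}),
\end{equation*}
so $\mathbb{E}[X_j\mid \mathcal{F}_j] = 0$.

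Third, I would expand $\|\sum_{j=1}^t X_j\|^2$ and handle the diagonal and off-diagonal terms separately. For $j<k$, $X_j$ is $\mathcal{F}_k$-measurable, so by the tower property $\mathbb{E}[\langle X_j, X_k\rangle] = \mathbb{E}[\langle X_j, \mathbb{E}[X_k\mid \mathcal{F}_k]\rangle] = 0$, which kills all cross terms. For the diagonal terms, a direct expansion of $\|X_j\|^2$ combined with $\mathbb{E}[\langle v_j-v_{j-1},\nabla P(w_j)-\nabla P(w_{j-1})\rangle\mid \mathcal{F}_j] = \|\nabla P(w_j)-\nabla P(w_{j-1})\|^2$ yields
\begin{equation*}
\mathbb{E}[\|X_j\|^2] = \mathbb{E}[\|v_j - v_{j-1}\|^2] - \mathbb{E}[\|\nabla P(w_j) - \nabla P(w_{j-1})\|^2],
\end{equation*}
and summing over $j$ gives the claim.

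The proof is essentially routine once the martingale-difference viewpoint is in place; the only mild subtlety is to make sure the conditioning aligns, namely that $w_{j-1},w_j,v_{j-1}$ are all $\mathcal{F}_j$-measurable so that the only randomness in $X_j$ (given $\mathcal{F}_j$) comes from $i_j$. I do not anticipate a genuine obstacle here: neither convexity nor smoothness of the $f_i$'s is needed for the identity itself (Assumption~\ref{ass_Lsmooth} is assumed only to stay consistent with the lemma's hypotheses and for later use of the bound), which is why the statement is a pure algebraic/probabilistic identity rather than an inequality.
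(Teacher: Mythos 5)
Your proof is correct and is essentially the paper's own argument: both rest on the same two facts, namely $v_0=\nabla P(w_0)$ and $\mathbb{E}[v_j-v_{j-1}\mid\mathcal{F}_j]=\nabla P(w_j)-\nabla P(w_{j-1})$, which is exactly the martingale-difference property you isolate. The only difference is presentational: the paper derives the one-step recursion $\mathbb{E}[\|\nabla P(w_j)-v_j\|^2]=\mathbb{E}[\|\nabla P(w_{j-1})-v_{j-1}\|^2]+\mathbb{E}[\|v_j-v_{j-1}\|^2]-\mathbb{E}[\|\nabla P(w_j)-\nabla P(w_{j-1})\|^2]$ and sums it over $j$, whereas you telescope first and cancel all cross terms at once via the tower property—the two computations coincide term by term.
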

Now we are ready to provide our main theoretical results. 

\subsubsection{General Convex Case}

Following from Lemma \ref{lem:var_diff_01}, we can obtain the following upper bound for $\mathbb{E}[ \| \nabla P(w_{t}) - v_{t} \|^2 ]$ for convex functions $f_i, i\in\setn$.

\begin{lem}\label{lem_bound_var_diff_str_02}
Suppose that Assumptions \ref{ass_Lsmooth} and \ref{ass_convex} hold. Consider $v_{t}$ defined as \eqref{eq:vt} in SARAH (Algorithm \ref{sarah}) with $\eta < 2/L$. Then we have that for any $t\geq 1$, 
\begin{align*}
\mathbb{E}[ \| \nabla P(w_{t}) - v_{t} \|^2 ] 
&\leq  \frac{\eta L}{2 - \eta L} \Big[ \mathbb{E}[ \|v_{0} \|^2] - \mathbb{E}[\| v_{t} \|^2] \Big]
\\
&\leq  \frac{\eta L}{2 - \eta L}  \mathbb{E}[ \|v_{0} \|^2].
\tagthis\label{eq:bound1}
\end{align*}
\end{lem}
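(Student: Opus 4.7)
The plan is to start from Lemma~\ref{lem:var_diff_01}, which already expresses $\mathbb{E}[\|\nabla P(w_t) - v_t\|^2]$ as a sum of $\mathbb{E}[\|v_j - v_{j-1}\|^2]$ minus a nonnegative sum. So immediately I can drop the $-\sum \mathbb{E}[\|\nabla P(w_j)-\nabla P(w_{j-1})\|^2]$ term and focus entirely on bounding $\sum_{j=1}^t \mathbb{E}[\|v_j - v_{j-1}\|^2]$ in a way that telescopes into $\mathbb{E}[\|v_0\|^2] - \mathbb{E}[\|v_t\|^2]$.

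The core inequality I would use is the co-coercivity of convex $L$-smooth functions applied to $f_{i_j}$: since $v_j - v_{j-1} = \nabla f_{i_j}(w_j) - \nabla f_{i_j}(w_{j-1})$ and $w_j - w_{j-1} = -\eta v_{j-1}$, co-coercivity yields
\begin{equation*}
\|v_j - v_{j-1}\|^2 \;\leq\; L\,\langle \nabla f_{i_j}(w_j) - \nabla f_{i_j}(w_{j-1}),\, w_j - w_{j-1}\rangle \;=\; -L\eta\,\langle v_j - v_{j-1},\, v_{j-1}\rangle.
\end{equation*}
Next I would apply the polarization-type identity $-2\langle v_j - v_{j-1}, v_{j-1}\rangle = \|v_j - v_{j-1}\|^2 - \|v_j\|^2 + \|v_{j-1}\|^2$ to rewrite the right-hand side and collect like terms, obtaining
\begin{equation*}
(2 - L\eta)\,\|v_j - v_{j-1}\|^2 \;\leq\; L\eta\,(\|v_{j-1}\|^2 - \|v_j\|^2).
\end{equation*}
The condition $\eta < 2/L$ keeps $2 - L\eta > 0$, so dividing is legal and gives the per-step bound $\|v_j - v_{j-1}\|^2 \leq \tfrac{\eta L}{2-\eta L}(\|v_{j-1}\|^2 - \|v_j\|^2)$.

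With that per-step estimate, the finishing move is purely mechanical: take expectations, sum from $j=1$ to $t$, and observe that the right-hand side telescopes to $\tfrac{\eta L}{2-\eta L}(\mathbb{E}[\|v_0\|^2] - \mathbb{E}[\|v_t\|^2])$. Plugging back into Lemma~\ref{lem:var_diff_01} and discarding the negative term proves the first inequality of \eqref{eq:bound1}; the second inequality is just $\mathbb{E}[\|v_t\|^2] \geq 0$.

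The only place that requires care is the step linking co-coercivity to the telescoping form. Co-coercivity is naturally a deterministic statement for each random index $i_j$, so it holds pointwise before taking expectations; the algebraic identity that converts $\langle v_j-v_{j-1}, v_{j-1}\rangle$ into the difference $\|v_{j-1}\|^2 - \|v_j\|^2$ (up to a $\|v_j-v_{j-1}\|^2$ term that is then absorbed on the left-hand side) is the main obstacle—once that rearrangement is in place, everything else is routine.
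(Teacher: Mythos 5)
Your proposal is correct and follows essentially the same route as the paper: both rest on the co-coercivity inequality for the convex $L$-smooth $f_{i_j}$ combined with the expansion of $\|v_j\|^2$ (your polarization identity is just this expansion rearranged), yielding the per-step bound $(2-\eta L)\|v_j-v_{j-1}\|^2 \leq \eta L\left(\|v_{j-1}\|^2-\|v_j\|^2\right)$, which is then telescoped and inserted into Lemma~\ref{lem:var_diff_01} after discarding the nonnegative subtracted sum. The only cosmetic difference is that the paper phrases the per-step estimate via conditional expectations given $\mathcal{F}_j$, whereas you correctly observe it already holds pointwise.
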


%

Using  the above  lemmas, we can state and prove one of our core theorems as follows.
\begin{thm}\label{thm:generalconvex_01}
Suppose that Assumptions \ref{ass_Lsmooth} and \ref{ass_convex} hold. Consider  SARAH (Algorithm \ref{sarah}) with $\eta \leq 1/L$. Then  for any $s\geq 1$, we have 
\begin{align*}
\mathbb{E}[ \| \nabla P(\tilde{w}_s)\|^2 ] 
&\leq \frac{2}{\eta (m + 1)} \mathbb{E}[ P(\tilde w_{s-1}) - P(w^{*})]  
\\ 
&\qquad + \frac{ \eta L}{2 - \eta L}  \mathbb{E}[ \| \nabla P(\tilde w_{s-1})\|^2 ]. \tagthis \label{eq:agasgsasw}
\end{align*}
\end{thm}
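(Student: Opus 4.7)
The goal is to combine Lemma~\ref{lem_main_derivation} (which bounds $\sum_{t=0}^{m}\mathbb{E}[\|\nabla P(w_t)\|^2]$ in terms of a function-value gap plus a variance surplus minus a $\|v_t\|^2$ term) with Lemma~\ref{lem_bound_var_diff_str_02} (which controls the variance surplus under general convexity) and then average, exploiting the fact that $\tilde w_s$ is chosen uniformly at random from $\{w_0,\dots,w_m\}$ inside the outer iteration.

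My plan is as follows. First, I would translate the random choice of $\tilde w_s$ into an averaging identity: since $\tilde w_s$ is uniform on $\{w_0,\dots,w_m\}$ conditional on the inner trajectory, tower property gives
\[
\mathbb{E}[\|\nabla P(\tilde w_s)\|^2] \;=\; \frac{1}{m+1}\sum_{t=0}^{m}\mathbb{E}[\|\nabla P(w_t)\|^2].
\]
Next, I would plug in the bound from Lemma~\ref{lem_main_derivation}, obtaining
\[
\mathbb{E}[\|\nabla P(\tilde w_s)\|^2] \;\leq\; \frac{2}{\eta(m+1)}\,\mathbb{E}[P(w_0)-P(w^*)] + \frac{1}{m+1}\sum_{t=0}^{m}\mathbb{E}[\|\nabla P(w_t)-v_t\|^2] - \frac{1-L\eta}{m+1}\sum_{t=0}^{m}\mathbb{E}[\|v_t\|^2].
\]
The hypothesis $\eta \leq 1/L$ makes $1-L\eta \geq 0$, so the last term is nonpositive and can simply be dropped.

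Second, I would apply Lemma~\ref{lem_bound_var_diff_str_02} termwise to the variance surplus. The lemma needs $\eta < 2/L$, which is implied by $\eta \leq 1/L$. Using the looser consequence $\mathbb{E}[\|\nabla P(w_t)-v_t\|^2]\leq \frac{\eta L}{2-\eta L}\mathbb{E}[\|v_0\|^2]$ uniformly in $t$, the average over $t=0,\dots,m$ gives $\frac{1}{m+1}\sum_{t=0}^{m}\mathbb{E}[\|\nabla P(w_t)-v_t\|^2] \leq \frac{\eta L}{2-\eta L}\,\mathbb{E}[\|v_0\|^2]$ (the $t=0$ summand contributes zero anyway, but this bound absorbs it harmlessly).

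Finally, I would identify the initializations prescribed by Algorithm~\ref{sarah}: $w_0 = \tilde w_{s-1}$ and $v_0 = \frac{1}{n}\sum_i \nabla f_i(w_0) = \nabla P(\tilde w_{s-1})$, so $\mathbb{E}[P(w_0)-P(w^*)] = \mathbb{E}[P(\tilde w_{s-1})-P(w^*)]$ and $\mathbb{E}[\|v_0\|^2] = \mathbb{E}[\|\nabla P(\tilde w_{s-1})\|^2]$. Substituting yields exactly \eqref{eq:agasgsasw}. I do not foresee a serious obstacle here: all the work is in the two lemmas, and the theorem is essentially a clean bookkeeping step; the only mild care needed is verifying that the $\eta\leq 1/L$ assumption is strong enough to both make $1-L\eta$ nonnegative (to discard the $\|v_t\|^2$ term) and to satisfy the $\eta<2/L$ hypothesis of Lemma~\ref{lem_bound_var_diff_str_02}, which it clearly is.
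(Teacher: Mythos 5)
Your proposal is correct and follows essentially the same route as the paper: average the gradient norms via the uniform choice of $\tilde w_s$, bound the sum with Lemma~\ref{lem_main_derivation} (dropping the $\|v_t\|^2$ term since $\eta \leq 1/L$), control the variance terms with Lemma~\ref{lem_bound_var_diff_str_02}, and substitute $w_0 = \tilde w_{s-1}$, $v_0 = \nabla P(\tilde w_{s-1})$. The only cosmetic difference is that the paper bounds the sum of variance terms by $\frac{m\eta L}{2-\eta L}\mathbb{E}[\|v_0\|^2]$ (exploiting that the $t=0$ term vanishes) before averaging, whereas you bound each term uniformly, which is marginally looser but yields the identical final inequality.
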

\begin{proof}
Since $v_0=\nabla P(w_0)$ implies $\| \nabla P(w_{0}) - v_{0} \|^2 = 0$ then by Lemma \ref{lem_bound_var_diff_str_02}, we can write
\begin{align*}
& \textstyle{\sum}_{t=0}^{m} \mathbb{E}[ \| \nabla P(w_{t}) - v_{t} \|^2 ] 
 \leq  \frac{m\eta L}{2 - \eta L} \mathbb{E}[ \|v_{0} \|^2]. \tagthis \label{eq:abcdef}
\end{align*}
Hence, by Lemma \ref{lem_main_derivation} with $\eta\leq 1/L$, we have
\begin{align*}
& \textstyle{\sum}_{t=0}^{m} \mathbb{E}[ \| \nabla P(w_{t})\|^2 ]  \\
 & \leq \tfrac{2}{\eta} \mathbb{E}[ P(w_{0}) - P(w^{*})] + \textstyle{\sum}_{t=0}^{m} \mathbb{E}[ \| \nabla P(w_{t}) - v_{t} \|^2 ] 
 \\
& \overset{\eqref{eq:abcdef}}{\leq} \tfrac{2}{\eta} \mathbb{E}[ P(w_{0}) - P(w^{*})]  + \tfrac{m\eta L}{2 - \eta L} \mathbb{E}[ \| v_{0} \|^2 ]. \tagthis\label{eq:thm1conv}
\end{align*}
Since we are considering one outer iteration, with $s\geq 1$, then we have $v_0 = \nabla P(w_0) = \nabla P(\tilde w_{s-1})$ (since $w_0 =\tilde{w}_{s-1}$), and $\tilde{w}_s = w_t$, where $t$ is picked uniformly at random from $\{0,1,\dots,m\}$. Therefore, the following holds,
\begin{align*}
\mathbb{E}[ \| \nabla P(\tilde{w}_s)\|^2 ] &= \tfrac{1}{m+1}\textstyle{\sum}_{t=0}^{m} \mathbb{E}[ \| \nabla P(w_{t})\|^2 ]
 \\
&\overset{\eqref{eq:thm1conv}}{\leq} \tfrac{2}{\eta (m + 1)} \mathbb{E}[ P(\tilde w_{s-1}) - P(w^{*})]  
\\ 
&\qquad + \tfrac{ \eta L}{2 - \eta L}  \mathbb{E}[ \| \nabla P(\tilde w_{s-1})\|^2 ]. \qedhere 
\end{align*}
\end{proof}
 Theorem~\ref{thm:generalconvex_01}, in the case when $\eta\leq 1/L$ implies that
\begin{align*}
\mathbb{E}[ \| \nabla P(\tilde{w}_s)\|^2 ] 
&\leq \tfrac{2}{\eta (m + 1)} \mathbb{E}[ P(\tilde w_{s-1}) - P(w^{*})]  
\\ 
&\qquad + { \eta L} \mathbb{E}[ \| \nabla P(\tilde w_{s-1})\|^2].
\end{align*}
By choosing the learning rate $\eta = \sqrt{\frac{2}{L(m+1)}}$ (with $m$ such that   $\sqrt{\frac{2}{L(m+1)}}\leq 1/L$) we can derive the following convergence result,
\begin{align*}
&\mathbb{E}[ \| \nabla P(\tilde{w}_s)\|^2 ] 
\\ 
&\qquad \leq \sqrt{\tfrac{2L}{m + 1}} \mathbb{E}[ P(\tilde w_{s-1}) - P(w^{*}) + \| \nabla P(\tilde w_{s-1})\|^2].
\end{align*}
Clearly, this result shows a sublinear convergence rate for SARAH under general convexity assumption within a single inner loop, with increasing $m$, 
and consequently, we have the following result  for complexity bound. 
\begin{cor}\label{cor:generalconvex_1}
Suppose that Assumptions \ref{ass_Lsmooth} and \ref{ass_convex} hold. Consider SARAH (Algorithm \ref{sarah}) within a single outer iteration with the learning rate $\eta = \sqrt{\frac{2}{L(m+1)}}$ where $m\geq 2L - 1$ is the total number of iterations, then  $\|\nabla P(w_t)\|^2$ converges
sublinearly in expectation  with a rate of $\sqrt{\frac{2L}{m+1}}$, and therefore, the total complexity to achieve an $\epsilon$-accurate solution defined in \eqref{eq:accuracy} is $\Ocal(n+1/\epsilon^2)$.   
\end{cor}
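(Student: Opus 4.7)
The plan is to obtain this corollary as a direct consequence of Theorem~\ref{thm:generalconvex_01} together with the simplification already displayed in the text just after its proof. Specifically, once the bound
\[
\mathbb{E}[\|\nabla P(\tilde w_s)\|^2] \leq \tfrac{2}{\eta(m+1)} \mathbb{E}[P(\tilde w_{s-1})-P(w^*)] + \eta L\, \mathbb{E}[\|\nabla P(\tilde w_{s-1})\|^2]
\]
is in hand (valid whenever $\eta\leq 1/L$, since then $\eta L/(2-\eta L)\leq \eta L$), the work reduces to a calibration of $\eta$ against $m$.

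First I would verify the admissibility of the proposed step size: plugging $\eta=\sqrt{2/(L(m+1))}$, the condition $\eta\leq 1/L$ is equivalent to $2/(L(m+1))\leq 1/L^2$, i.e., $m+1\geq 2L$, which is exactly the hypothesis $m\geq 2L-1$. So the bound from Theorem~\ref{thm:generalconvex_01} (in its post-proof simplified form) applies. Next I would compute the two coefficients on the right-hand side at this $\eta$: a short manipulation gives $2/(\eta(m+1))=\sqrt{2L/(m+1)}$ and $\eta L=\sqrt{2L/(m+1)}$, so both terms share the same factor $\sqrt{2L/(m+1)}$. Collecting them yields the displayed inequality
\[
\mathbb{E}[\|\nabla P(\tilde w_s)\|^2]\leq \sqrt{\tfrac{2L}{m+1}}\, \mathbb{E}\bigl[P(\tilde w_{s-1})-P(w^*)+\|\nabla P(\tilde w_{s-1})\|^2\bigr],
\]
which is precisely the sublinear rate $\sqrt{2L/(m+1)}$ claimed for a single outer iteration.

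For the complexity count I would argue as follows: to obtain $\mathbb{E}[\|\nabla P(w_t)\|^2]\leq \epsilon$, the rate $\sqrt{2L/(m+1)}$ forces $m=\mathcal{O}(1/\epsilon^2)$ (where the constants $P(\tilde w_0)-P(w^*)$ and $\|\nabla P(\tilde w_0)\|^2$ are absorbed). With a single outer iteration, the cost is the one full gradient evaluation at $w_0$ (cost $n$) plus $2$ component-gradient evaluations per inner step (cost $2m$), totalling $\mathcal{O}(n+m)=\mathcal{O}(n+1/\epsilon^2)$ stochastic gradient evaluations.

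There is no real obstacle here; the only thing to be careful about is the algebraic simplification showing that the two coefficients in Theorem~\ref{thm:generalconvex_01} coincide at the chosen $\eta$, and the sanity check that $m\geq 2L-1$ is exactly the regime in which $\eta\leq 1/L$. Because both points are mechanical, the proof will be essentially a paragraph of substitution and bookkeeping rather than a new argument.
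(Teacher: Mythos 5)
Your proposal is correct and follows exactly the paper's own route: the paper proves this corollary via the displayed derivation immediately after Theorem~\ref{thm:generalconvex_01} (bounding $\tfrac{\eta L}{2-\eta L}\leq \eta L$ for $\eta\leq 1/L$, substituting $\eta=\sqrt{2/(L(m+1))}$ so both coefficients equal $\sqrt{2L/(m+1)}$, and counting $n+2m$ gradient evaluations for one outer loop). Your admissibility check $m\geq 2L-1 \Leftrightarrow \eta\leq 1/L$ and the complexity bookkeeping match the paper's argument, so there is nothing to add.
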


\begin{figure*}
\centering
 \epsfig{file=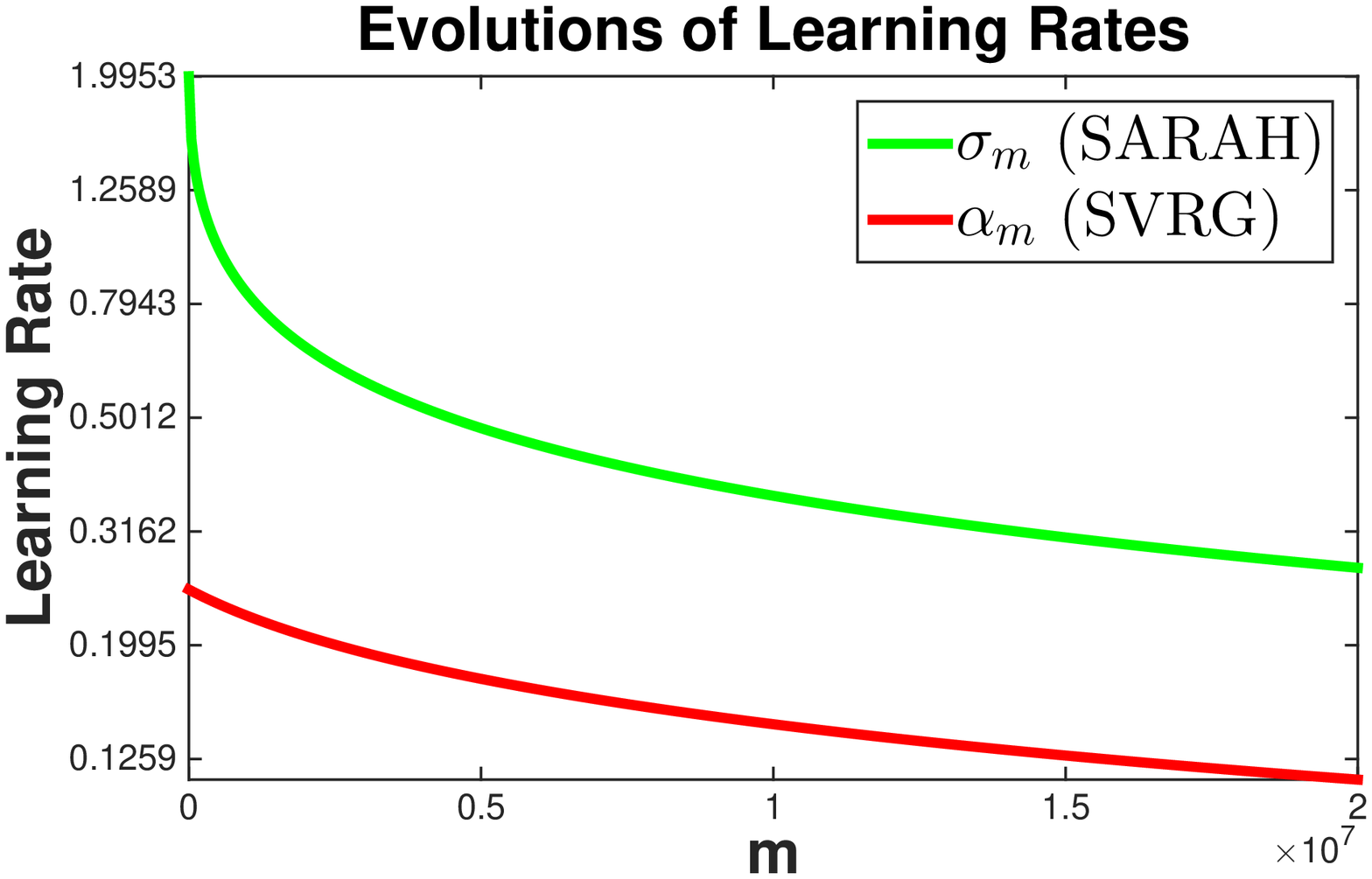,width=0.3\textwidth} 	 
 \epsfig{file=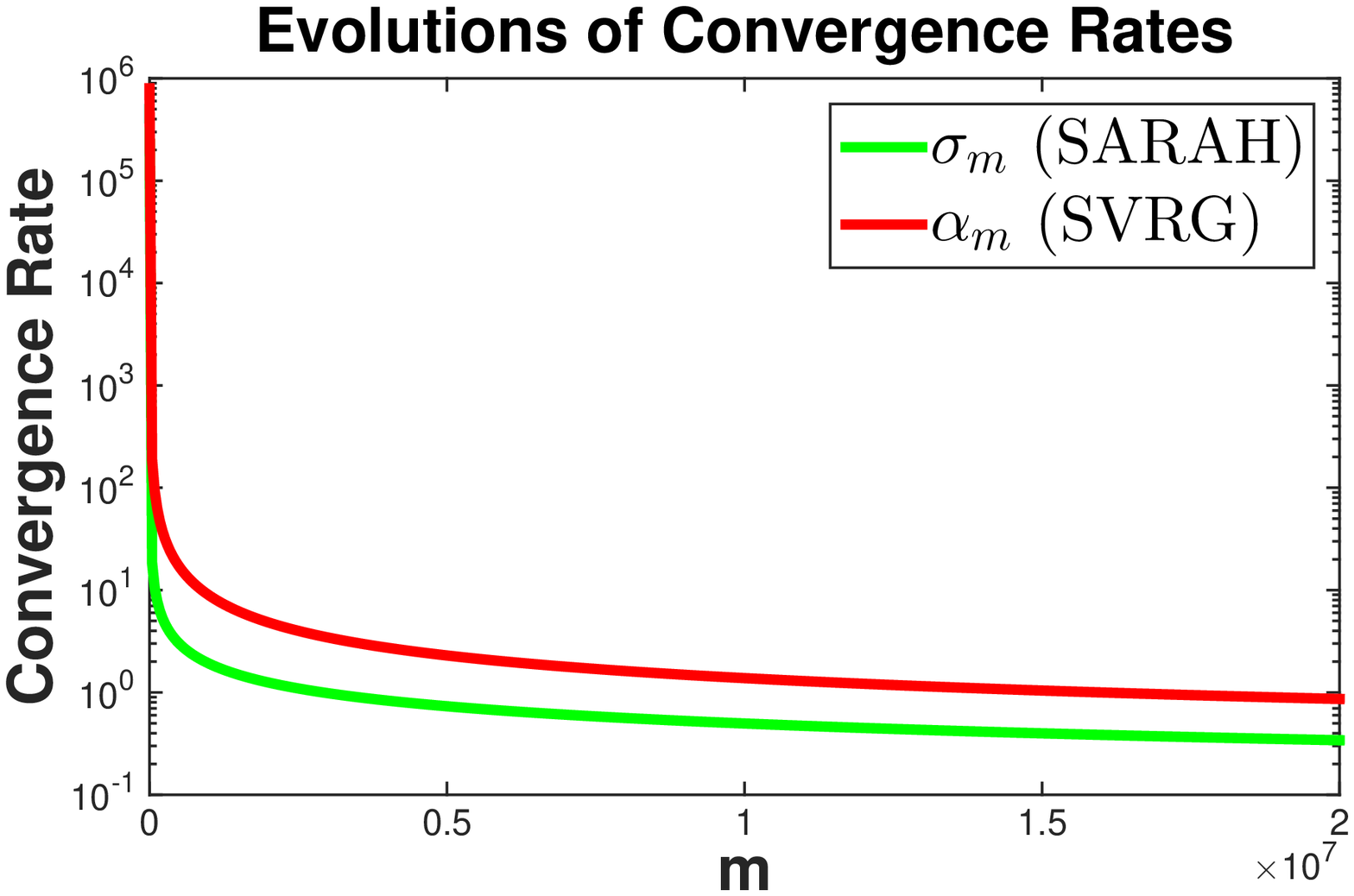,width=0.3\textwidth}
 \epsfig{file=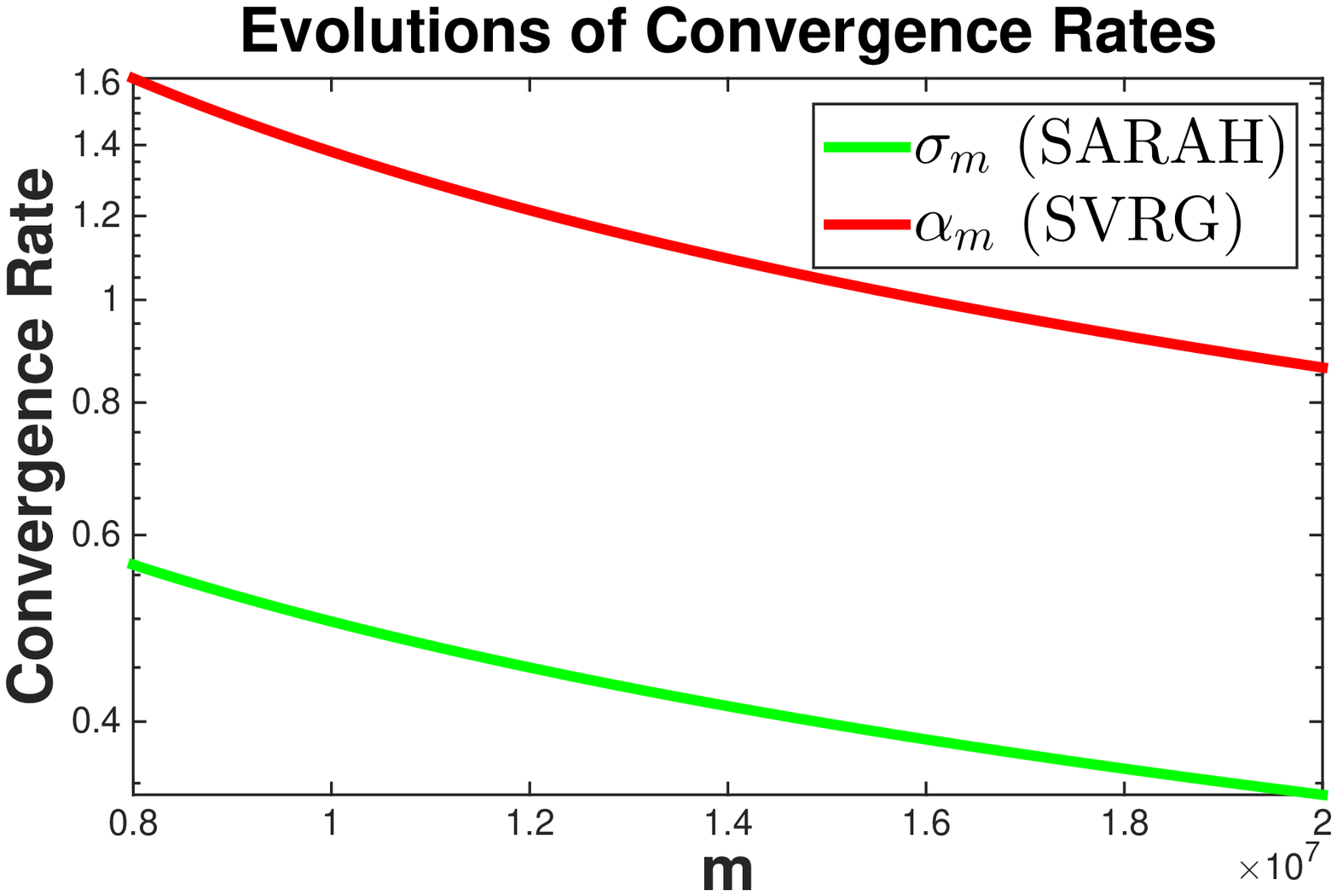,width=0.3\textwidth}
  \caption{\footnotesize Theoretical comparisons of learning rates (left) and convergence rates (middle and right) with $n=1,000,000$ for SVRG and SARAH in one inner loop.}
  \label{fig:comparison}
 \end{figure*}

We now turn to estimating convergence of SARAH with multiple outer steps. Simply using  Theorem~\ref{thm:generalconvex_01} for each of the outer steps we have the following result.
\begin{thm}\label{thm:generalconvex_02}
Suppose that Assumptions \ref{ass_Lsmooth} and \ref{ass_convex} hold. Consider SARAH (Algorithm \ref{sarah}) and define 
\begin{align*}
\delta_{k} &= \tfrac{2}{\eta(m+1)} \mathbb{E}[ P(\tilde{w}_{k}) - P(w^{*})], \ k = 0,1,\dots,s-1, 
\end{align*}
and $\delta = \max_{0 \leq k \leq s-1} \delta_{k}$. Then we have
\begin{align*}
\tagthis\label{asdfsfas}
\mathbb{E}[ \| \nabla P(\tilde{w}_s)\|^2 ] - \Delta  \leq \alpha^s ( \| \nabla P(\tilde{w}_0)\|^2 - \Delta),  
\end{align*}
where
$\Delta  = \delta\left(1 + \tfrac{\eta L}{2(1 - \eta L)} \right),$ and
$\alpha  = \tfrac{\eta L}{2 - \eta L}.$
\end{thm}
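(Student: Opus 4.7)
The plan is to unroll the one-step bound from Theorem~\ref{thm:generalconvex_01} into a geometric recursion and recognize $\Delta$ as the fixed point of that affine recursion. Write $a_s = \mathbb{E}[\|\nabla P(\tilde{w}_s)\|^2]$. Applying Theorem~\ref{thm:generalconvex_01} to the $s$-th outer iteration gives, directly from \eqref{eq:agasgsasw},
\begin{equation*}
a_s \;\leq\; \delta_{s-1} + \alpha\, a_{s-1},
\end{equation*}
where $\delta_{s-1}$ and $\alpha$ are exactly as in the theorem statement. Since $\delta_{s-1} \leq \delta$ by definition of the maximum, I get the uniform recursion $a_s \leq \delta + \alpha\, a_{s-1}$.

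Next, I would identify $\Delta$ as the fixed point of the affine map $x \mapsto \delta + \alpha x$. A short algebraic check shows
\begin{equation*}
\Delta(1-\alpha) \;=\; \delta\left(1 + \tfrac{\eta L}{2(1-\eta L)}\right)\cdot \tfrac{2(1-\eta L)}{2-\eta L} \;=\; \delta,
\end{equation*}
using $1 - \alpha = \frac{2(1-\eta L)}{2-\eta L}$ (which requires $\eta < 1/L$, consistent with the hypothesis of Theorem~\ref{thm:generalconvex_01}; the boundary case $\eta = 1/L$ is degenerate and can be excluded or handled separately). Subtracting $\Delta$ on both sides of the recursion and using $\delta = \Delta - \alpha\Delta$ gives
\begin{equation*}
a_s - \Delta \;\leq\; \alpha(a_{s-1} - \Delta).
\end{equation*}

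Finally, I would iterate this contraction from $s$ down to $0$, noting that $\tilde{w}_0$ is deterministic so $a_0 = \|\nabla P(\tilde{w}_0)\|^2$, to obtain \eqref{asdfsfas}. The argument is essentially bookkeeping: the main (minor) obstacle is verifying the algebraic identity $\Delta = \delta/(1-\alpha)$, which justifies the particular constants appearing in the theorem's statement and makes the telescoping clean. No new analytic ideas beyond Theorem~\ref{thm:generalconvex_01} are needed.
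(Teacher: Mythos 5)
Your proposal is correct and follows essentially the same route as the paper: both start from the one-step bound of Theorem~\ref{thm:generalconvex_01}, bound $\delta_{s-1}\leq\delta$, and exploit the identity $\delta/(1-\alpha)=\Delta$; the paper unrolls the recursion into a geometric sum while you equivalently subtract the fixed point and iterate the contraction, which is the same bookkeeping. Your remark that $\eta<1/L$ is needed for $1-\alpha>0$ (and for $\Delta$ to be well defined) is a fair point that the paper leaves implicit.
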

Based on Theorem \ref{thm:generalconvex_02}, we have the following total complexity for SARAH in the general convex case. 
\begin{cor}\label{cor:thm_3_complexity}
 Let us choose $\Delta = \epsilon/4$, $\alpha=1/2$ (with $\eta = 2/(3 L)$), and $m = \Ocal(1/\epsilon)$ in Theorem \ref{thm:generalconvex_02}. Then, the total complexity to achieve an $\epsilon$-accuracy solution defined in \eqref{eq:accuracy} is $\Ocal((n + (1/\epsilon))\log(1/\epsilon))$. 
\end{cor}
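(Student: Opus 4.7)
The plan is to directly substitute the stated parameter choices into the recursion of Theorem~\ref{thm:generalconvex_02} and then multiply the number of outer iterations by the per-outer-iteration gradient cost. First I would verify the algebra: with $\eta = 2/(3L)$ we have $\eta L = 2/3$, so
\[
\alpha = \tfrac{\eta L}{2 - \eta L} = \tfrac{2/3}{4/3} = \tfrac{1}{2}, \qquad 1 + \tfrac{\eta L}{2(1-\eta L)} = 1 + \tfrac{2/3}{2/3} = 2,
\]
giving $\Delta = 2\delta$. Substituting into \eqref{asdfsfas} yields
\[
\mathbb{E}[\|\nabla P(\tilde w_s)\|^2] \;\le\; \bigl(\tfrac{1}{2}\bigr)^{s}\|\nabla P(\tilde w_0)\|^2 \;+\; 2\delta.
\]

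Next I would choose $m$ to make the asymptotic error $2\delta$ small. Since
\[
\delta_k = \tfrac{2}{\eta(m+1)} \mathbb{E}[P(\tilde w_k) - P(w^{*})] = \tfrac{3L}{m+1}\,\mathbb{E}[P(\tilde w_k) - P(w^{*})],
\]
the requirement $\Delta = 2\delta \le \epsilon/4$ is implied by
\[
m + 1 \;\ge\; \tfrac{24L}{\epsilon}\cdot \max_{0\le k\le s-1}\mathbb{E}[P(\tilde w_k) - P(w^{*})].
\]
Assuming (as is standard in this sublinear-rate analysis) that the expected suboptimality along the outer iterates is bounded by a constant $C$ independent of $\epsilon$ — a fact that follows from the descent property of the outer full-gradient step $w_1 = w_0 - \eta\nabla P(w_0)$ together with smoothness, giving $\mathbb{E}[P(\tilde w_k) - P(w^*)] \le C$ for all $k$ — this choice amounts to $m = \mathcal{O}(1/\epsilon)$.

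Finally I would choose the number of outer iterations $s$ so that the contraction term is also at most $3\epsilon/4$. Since $\alpha = 1/2$, it suffices to take
\[
s \;\ge\; \log_2\!\Bigl(\tfrac{4\|\nabla P(\tilde w_0)\|^2}{3\epsilon}\Bigr) = \mathcal{O}(\log(1/\epsilon)),
\]
after which $\mathbb{E}[\|\nabla P(\tilde w_s)\|^2]\le 3\epsilon/4 + \epsilon/4 = \epsilon$, meeting the criterion \eqref{eq:accuracy}. Counting cost: each outer iteration evaluates one full gradient ($n$ component gradients for $v_0$) plus $2m$ component gradients in the inner loop, for a total of $\mathcal{O}(n + m) = \mathcal{O}(n + 1/\epsilon)$ stochastic gradient evaluations. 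Multiplying by the $\mathcal{O}(\log(1/\epsilon))$ outer iterations yields the claimed total complexity $\mathcal{O}\bigl((n + 1/\epsilon)\log(1/\epsilon)\bigr)$.

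The one part that requires care — and which I expect to be the main obstacle — is justifying that $\max_k \mathbb{E}[P(\tilde w_k) - P(w^{*})]$ is bounded by an $\epsilon$-independent constant, since $m$ is picked once and must work for all outer iterations $k < s$. The cleanest route is to invoke a uniform a priori bound on the expected function-value suboptimality of the outer iterates (for instance, $P(\tilde w_0) - P(w^*)$ when a descent-type argument is available), so that the $\delta$ in Theorem~\ref{thm:generalconvex_02} can be controlled without coupling $m$ to $s$.
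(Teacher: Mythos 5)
Your proof is correct and follows essentially the same route as the paper's: substitute $\eta = 2/(3L)$ into Theorem~\ref{thm:generalconvex_02} to get $\alpha = 1/2$ and $\Delta = 2\delta$, force $\Delta \leq \epsilon/4$ via $m = \Ocal(1/\epsilon)$, take $s = \Ocal(\log(1/\epsilon))$ outer iterations so that $(1/2)^s\|\nabla P(\tilde{w}_0)\|^2 \leq \tfrac{3}{4}\epsilon$, and count $(n+2m)s$ gradient evaluations. The caveat you flag---that $\delta = \Ocal(\epsilon)$ with $m = \Ocal(1/\epsilon)$ tacitly requires $\max_k \Exp[P(\tilde{w}_k)-P(w^{*})]$ to be bounded by an $\epsilon$-independent constant---is left implicit in the paper's proof as well, so your treatment is, if anything, slightly more explicit on that point.
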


\subsubsection{Strongly Convex Case}

We now turn to 
the discussion of the linear convergence rate of SARAH under the strong convexity assumption on $P$.
From Theorem~\ref{thm:generalconvex_01}, for any $s\geq 1$, using property \eqref{eq:strongconvexity2} of the $\mu$-\emph{strongly convex} $P$, we have
\begin{align*}
&\mathbb{E}[ \| \nabla P(\tilde{w}_s)\|^2 ] 
\leq \tfrac{2}{\eta (m + 1)} \mathbb{E}[ P(\tilde w_{s-1}) - P(w^{*})]  
\\ 
&\qquad\qquad\qquad\qquad + \tfrac{ \eta L}{2 - \eta L}  \mathbb{E}[ \| \nabla P(\tilde w_{s-1})\|^2 ] 
\\ 
&\qquad \overset{\eqref{eq:strongconvexity2}}{\leq} 
\left( \tfrac{1}{\mu \eta (m+1)} + \tfrac{\eta L}{2 - \eta L} \right) \mathbb{E}[ \| \nabla P(\tilde w_{s-1})\|^2 ],
\end{align*}
and equivalently,
\begin{equation}\label{eq:recursive}
\Exp [ \| \nabla P(\tilde{w}_s)\|^2 ] \leq \sigma_m \Exp [ \| \nabla P(\tilde w_{s-1})\|^2 ].
\end{equation}
Let us define $\sigma_m \eqdef \tfrac{1}{\mu \eta (m + 1)} +  \tfrac{\eta L}{2 - \eta L}$.  
Then by choosing  $\eta$ and $m$ such that $\sigma_m<1$, and applying \eqref{eq:recursive} recursively, we are able to reach the following convergence result.
\begin{thm}\label{thm:stronglyconvexconvergence}
Suppose that Assumptions \ref{ass_Lsmooth}, \ref{ass_stronglyconvex} and \ref{ass_convex} hold. Consider  SARAH (Algorithm \ref{sarah})
 with the choice of $\eta$ and $m$ such that
\begin{equation}\label{eq:sigma0}
\sigma_m \eqdef \frac{1}{\mu \eta (m + 1)} +  \frac{\eta L}{2 - \eta L}  < 1. 
\end{equation}
Then, we have
\begin{align*}
\mathbb{E}[ \| \nabla P(\tilde{w}_s)\|^2 ] \leq (\sigma_m)^s \| \nabla P(\tilde{w}_0)\|^2. 
\end{align*}
\end{thm}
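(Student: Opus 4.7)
The plan is to combine the one-outer-step bound of Theorem~\ref{thm:generalconvex_01} with the strong-convexity inequality \eqref{eq:strongconvexity2}, turning the mixed bound (involving both the function-value gap and the squared gradient norm) into a clean recursion in $\mathbb{E}[\|\nabla P(\tilde{w}_s)\|^2]$ alone, and then iterate this recursion.

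More concretely, I would start from
\[
\mathbb{E}[\|\nabla P(\tilde w_s)\|^2]\le \tfrac{2}{\eta(m+1)}\mathbb{E}[P(\tilde w_{s-1})-P(w^*)]+\tfrac{\eta L}{2-\eta L}\mathbb{E}[\|\nabla P(\tilde w_{s-1})\|^2],
\]
which is \eqref{eq:agasgsasw} applied with step $s$. Since $P$ is $\mu$-strongly convex by Assumption~\ref{ass_stronglyconvex}, the inequality \eqref{eq:strongconvexity2} gives
$P(\tilde w_{s-1})-P(w^*)\le \tfrac{1}{2\mu}\|\nabla P(\tilde w_{s-1})\|^2$ pointwise, hence also in expectation. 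Substituting this into the bound above and collecting terms yields
\[
\mathbb{E}[\|\nabla P(\tilde w_s)\|^2]\le\Bigl(\tfrac{1}{\mu\eta(m+1)}+\tfrac{\eta L}{2-\eta L}\Bigr)\mathbb{E}[\|\nabla P(\tilde w_{s-1})\|^2]=\sigma_m\,\mathbb{E}[\|\nabla P(\tilde w_{s-1})\|^2],
\]
which is exactly \eqref{eq:recursive}.

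With the one-step contraction in hand, the conclusion follows by a trivial induction on $s$: under the assumption $\sigma_m<1$, iterating the recursion $s$ times gives
\[
\mathbb{E}[\|\nabla P(\tilde w_s)\|^2]\le \sigma_m^{\,s}\,\|\nabla P(\tilde w_0)\|^2,
\]
where the starting term is deterministic because $\tilde w_0$ is the user-supplied initialization.

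There is essentially no serious obstacle here; the work has already been done in Theorem~\ref{thm:generalconvex_01} and in the chain of inequalities just preceding the statement. The only mild subtlety is to note that applying Theorem~\ref{thm:generalconvex_01} requires $\eta\le 1/L$, which is compatible with (and in fact needed to make sense of) the condition $\sigma_m<1$: for $\eta<2/L$ the second term $\eta L/(2-\eta L)$ lies in $[0,1)$, and then one can always make the first term $1/(\mu\eta(m+1))$ as small as desired by enlarging $m$. Thus the hypothesis $\sigma_m<1$ is consistent, and the recursive bound above delivers the claimed linear convergence rate.
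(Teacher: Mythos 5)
Your proposal is correct and follows essentially the same route as the paper, which derives the recursion \eqref{eq:recursive} inline (Theorem~\ref{thm:generalconvex_01} plus the strong-convexity bound \eqref{eq:strongconvexity2}) and then iterates it. One small slip in your closing remark: the term $\eta L/(2-\eta L)$ lies in $[0,1)$ only for $\eta<1/L$ (not for all $\eta<2/L$), but this does not affect the argument, since $\sigma_m<1$ already forces $\eta<1/L$ and hence the applicability of Theorem~\ref{thm:generalconvex_01}.
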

\begin{remark}\label{rem1}
Theorem~\ref{thm:stronglyconvexconvergence} implies that any $\eta<1/L$ will work for SARAH. Let us compare our convergence rate to that of  SVRG. The linear rate of SVRG,  as presented in \cite{SVRG}, is given by
 \begin{align*}
\alpha_m = \tfrac{1}{\mu \eta (1 - 2 L \eta) m} + \tfrac{2 \eta L}{1 - 2 \eta L} < 1. 
\end{align*}
We  observe that it implies that the learning rate  has to satisfy $\eta < 1/(4L)$, which is a tighter restriction than $\eta<1/L$ required by SARAH. In addition, with the same values of $m$ and $\eta$, the rate or convergence of (the outer iterations) of SARAH is always smaller than that of SVRG. 
\begin{align*}
\sigma_m &= \tfrac{1}{\mu \eta (m + 1)} +  \tfrac{\eta L}{2 - \eta L} = \tfrac{1}{\mu \eta (m + 1)} +  \tfrac{1}{2/(\eta L) - 1} 
 \\
&< \tfrac{1}{\mu \eta (1 - 2 L \eta) m} + \tfrac{1}{0.5/(\eta L) - 1}  = \alpha_m. 
\end{align*}
\end{remark}
 \begin{remark}\label{rem2}
 To further demonstrate the better  convergence properties of SARAH, let us consider following optimization problem
 $$\min_{0<\eta<1/L}\ \sigma_m,
 \qquad \min_{0<\eta<1/{4L}}\ \alpha_m,$$
 which can be interpreted as the best convergence rates for different values of $m$, for both SARAH and SVRG. After simple calculations, we 
  plot both learning rates and the corresponding theoretical rates of convergence, as shown in Figure~\ref{fig:comparison},
   where the right plot is a zoom-in on a part of the middle plot. The left plot shows that the  optimal learning rate for SARAH is significantly larger than that of  SVRG, while the other two plots show significant improvement upon outer iteration convergence rates for SARAH over SVRG.
  \end{remark}
Based on Theorem~\ref{thm:stronglyconvexconvergence}, we are able to derive the following total complexity for SARAH in the strongly convex case.
\begin{cor}\label{cor:complexity}
Fix $\epsilon\in(0,1)$, and let us run SARAH with $\eta = 1/(2L)$ and $m = 4.5\kappa$ for $\mathcal{T}$ iterations where
$\mathcal{T} = \lceil \log(\|\nabla P(\tilde{w}_0)\|^2/\epsilon)/\log(9/7) \rceil,$
then we can derive an $\epsilon$-accuracy solution defined in \eqref{eq:accuracy}. Furthermore, we can obtain the total complexity of SARAH, to achieve the $\epsilon$-accuracy solution, as
$\Ocal\left((n+\kappa)\log(1/\epsilon)\right).$
\end{cor}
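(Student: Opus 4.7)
The plan is to apply Theorem~\ref{thm:stronglyconvexconvergence} directly with the specified parameters, verify that the contraction factor $\sigma_m$ is bounded away from $1$ by a concrete constant, and then invert the resulting geometric decay to read off the iteration count; the total complexity then follows from the per-outer-iteration cost of $\Ocal(n+m)$ already noted in Section~2.

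First, I would substitute $\eta = 1/(2L)$ and $m = 4.5\kappa$ (recall $\kappa = L/\mu$) into
\[
\sigma_m = \frac{1}{\mu\eta(m+1)} + \frac{\eta L}{2-\eta L}.
\]
The second term becomes $\frac{1/2}{2-1/2} = \frac{1}{3}$. For the first term, $\mu\eta = 1/(2\kappa)$, so $\frac{1}{\mu\eta(m+1)} = \frac{2\kappa}{4.5\kappa + 1} < \frac{4}{9}$. Adding these yields $\sigma_m < \tfrac{4}{9} + \tfrac{1}{3} = \tfrac{7}{9}$, and in particular $\sigma_m < 1$ so the hypothesis of Theorem~\ref{thm:stronglyconvexconvergence} is satisfied.

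Next, Theorem~\ref{thm:stronglyconvexconvergence} gives $\mathbb{E}[\|\nabla P(\tilde{w}_s)\|^2] \leq (\sigma_m)^s \|\nabla P(\tilde{w}_0)\|^2 \leq (7/9)^s \|\nabla P(\tilde{w}_0)\|^2$. To guarantee $\mathbb{E}[\|\nabla P(\tilde{w}_\mathcal{T})\|^2] \leq \epsilon$, it suffices that $(7/9)^\mathcal{T} \|\nabla P(\tilde{w}_0)\|^2 \leq \epsilon$, i.e., $\mathcal{T} \geq \log(\|\nabla P(\tilde{w}_0)\|^2/\epsilon)/\log(9/7)$, which is exactly the stated choice of $\mathcal{T}$.

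Finally, each outer iteration of Algorithm~\ref{sarah} performs one full gradient evaluation (cost $n$) plus $m-1$ inner iterations of two stochastic gradient evaluations each, for a total cost of $\Ocal(n+m) = \Ocal(n+\kappa)$ stochastic gradient evaluations. Multiplying by $\mathcal{T} = \Ocal(\log(1/\epsilon))$ gives the overall complexity $\Ocal((n+\kappa)\log(1/\epsilon))$. I do not anticipate a genuine obstacle here: the only minor arithmetic care needed is in verifying the numerical bound $\sigma_m \leq 7/9$, which fixes the base $9/7$ in the logarithm; everything else is a direct specialization of the previously established Theorem~\ref{thm:stronglyconvexconvergence} together with the per-iteration cost accounting.
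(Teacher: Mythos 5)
Your proposal is correct and follows essentially the same route as the paper: substitute $\eta = 1/(2L)$, $m = 4.5\kappa$ into $\sigma_m$ to get the bound $\sigma_m < 4/9 + 1/3 = 7/9$, apply Theorem~\ref{thm:stronglyconvexconvergence} to obtain geometric decay with base $7/9$, invert to get $\mathcal{T}$, and multiply by the $\Ocal(n+m)$ per-outer-iteration cost (the paper only additionally cites Lemma~\ref{lem:complexity} to motivate the choice $\theta^* = 2$, which is not needed for the verification itself).
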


\section{A Practical Variant}
\label{sec:sarahplus}
While SVRG is an efficient variance-reducing stochastic gradient method, one of its main drawbacks is the sensitivity of the practical performance with respect to the choice of $m$. It is know that $m$ should be around $\Ocal(\kappa)$,\footnote{
In practice, when $n$ is large, $P(w)$ is often considered as a regularized Empirical Loss Minimization problem with regularization parameter $\lambda = \frac1n$, then $\kappa \sim \Ocal(n).$
}
 while it still remains unknown that what the exact best choice is. In this section, we propose a practical variant of SARAH as SARAH+ (Algorithm \ref{sarah_sc}), which provides an automatic and adaptive  choice of the inner loop size $m$. Guided by the linear convergence of the steps in the inner loop, demonstrated in Figure~\ref{fig:VR2}, we introduce a stopping criterion based on the values of $\|v_t\|^2$ while upper-bounding the total number of steps by a large enough $m$ for robustness. The other modification compared to SARAH (Algorithm \ref{sarah}) is the more practical choice $\tilde{w}_s = w_{t}$, where $t$ is the last index of the particular  inner loop, instead of randomly selected intermediate index. 
 \begin{algorithm} 
   \caption{SARAH+}
   \label{sarah_sc}
\begin{algorithmic}
   \STATE {\bfseries Parameters:} the learning rate $\eta > 0$, $0 < \gamma \leq 1$ and the maximum inner loop size $m$. 
   \STATE {\bfseries Initialize:} $\tilde{w}_0$
   \STATE {\bfseries Iterate:}
   \FOR{$s=1,2,\dots$}
   \STATE $w_0 = \tilde{w}_{s-1}$
   \STATE $v_0 = \frac{1}{n}\sum_{i=1}^{n} \nabla f_i(w_0)$
   \STATE $w_1 = w_0 - \eta v_0$
   \STATE $t = 1$
   \WHILE{$\|v_{t-1}\|^2 >  \gamma \|v_{0}\|^2$ {\bf and} $t<m$}
   \STATE Sample $i_{t}$ uniformly at random from $\setn$
   \STATE $v_{t} = \nabla f_{i_{t}} (w_{t}) - \nabla f_{i_{t}}(w_{t-1}) + v_{t-1}$
   \STATE $w_{t+1} = w_{t} - \eta v_{t}$
   \STATE $t = t + 1$
   \ENDWHILE
   \STATE Set $\tilde{w}_s = w_{t}$
   \ENDFOR
\end{algorithmic}
\end{algorithm} 
 
Different from SARAH, SARAH+ provides a possibility of earlier termination and unnecessary careful choices of $m$, and it also covers the classical gradient descent when we set $\gamma = 1$ (since the while loop does not proceed). In Figure~\ref{fig:SARAHplus} we  present the numerical performance of  SARAH+ with different $\gamma$s on \emph{rcv1} and \emph{news20} datasets. The size of the inner loop provides a trade-off between the fast sub-linear convergence in the inner loop and linear convergence in the outer loop. From the results, it appears that $\gamma=1/8$ is the optimal choice. With a larger $\gamma$, i.e. $\gamma > 1/8$, the iterates in the inner loop do not provide sufficient reduction, before another full gradient computation is required, while with  $\gamma < 1/8$
 an unnecessary number of inner steps is performed without gaining substantial progress. Clearly $\gamma$ is another parameter that requires tuning, however, in our experiments, the performance of SARAH+ has been very robust with respect to the choices of $\gamma$ and did not vary much from one data set to another. 
 

Similarly to SVRG, $\|v_t\|^2$ decreases in the outer iterations of SARAH+. However, unlike SVRG, SARAH+ also inherits from SARAH the consistent decrease of $\|v_t\|^2$ in expectation in the inner loops. It is not possible to apply the same idea of adaptively terminating the inner loop of SVRG based on the reduction in $\|v_t\|^2$, as $\|v_t\|^2$ may have side fluctuations as shown in Figure~\ref{fig:VR2}.

  \begin{figure} 
 \epsfig{file=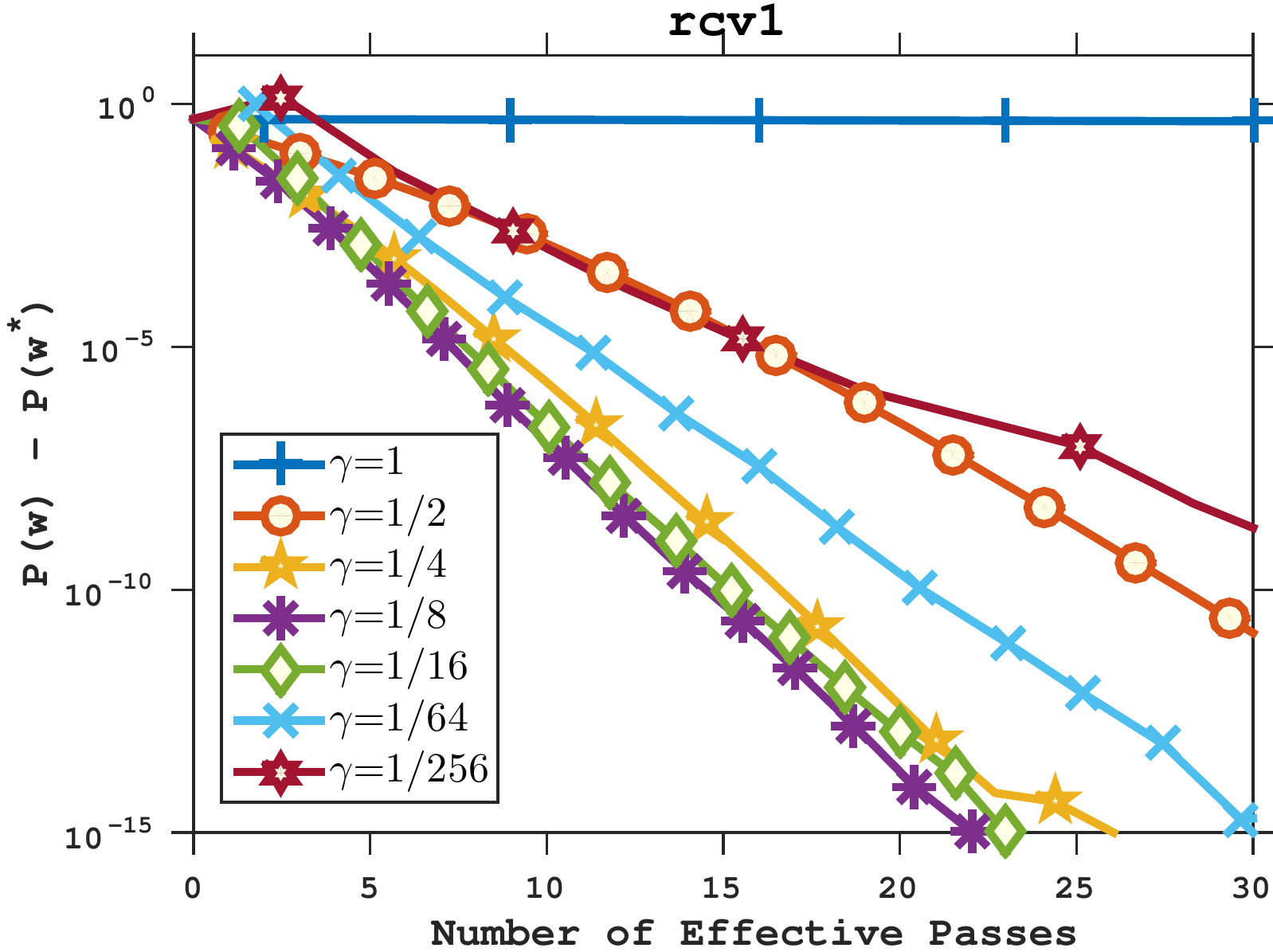,width=0.23\textwidth}
 \epsfig{file=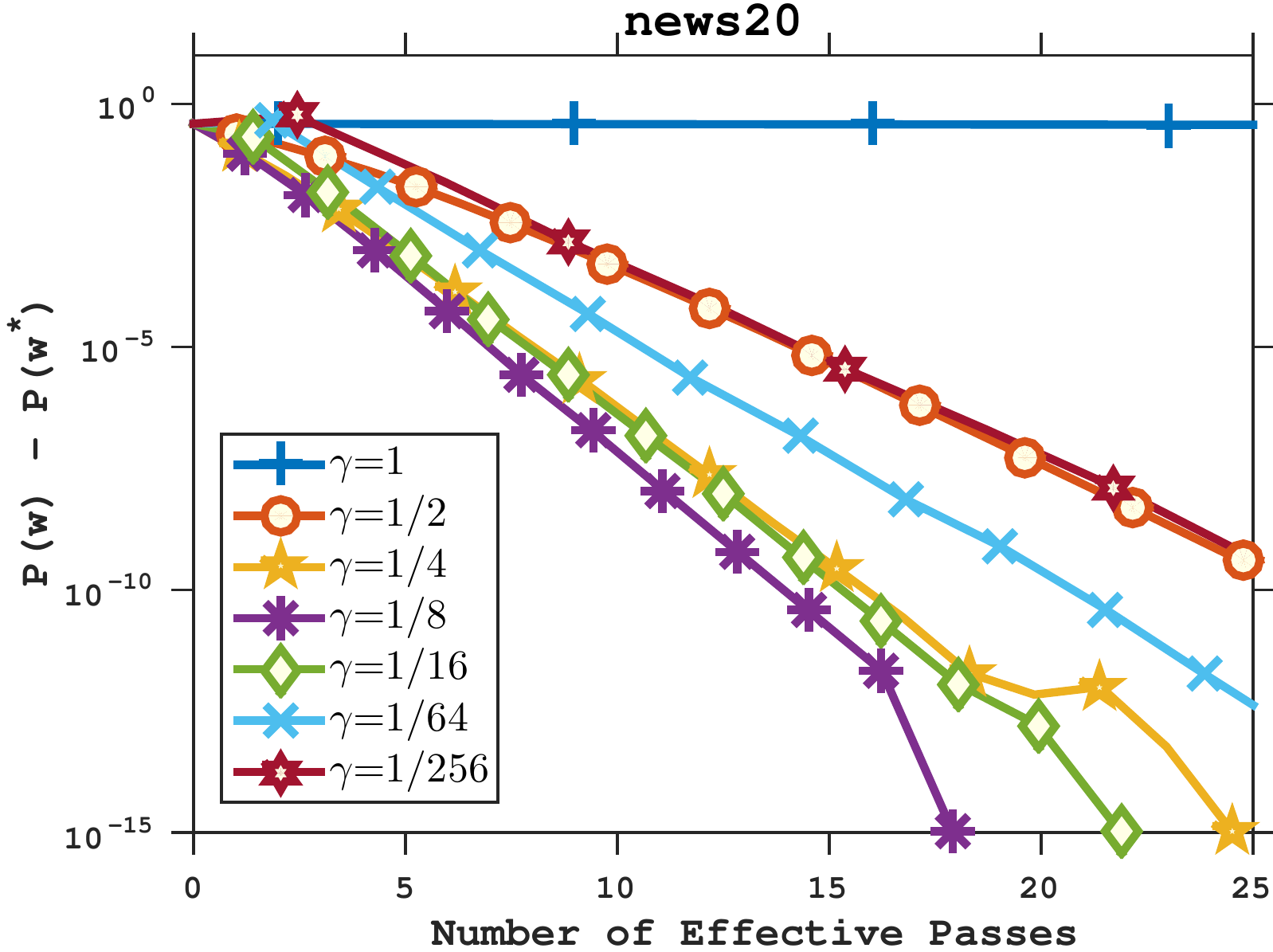,width=0.23\textwidth}
    \caption{\footnotesize An example of $\ell_2$-regularized logistic regression on \emph{rcv1} (left) and \emph{news20} (right) training datasets for SARAH+ with different $\gamma$s on loss residuals $P(w)-P(w^*)$.}
  \label{fig:SARAHplus}
 \end{figure}
 
\section{Numerical Experiments} 
 \begin{figure*} 
\centering
 \epsfig{file=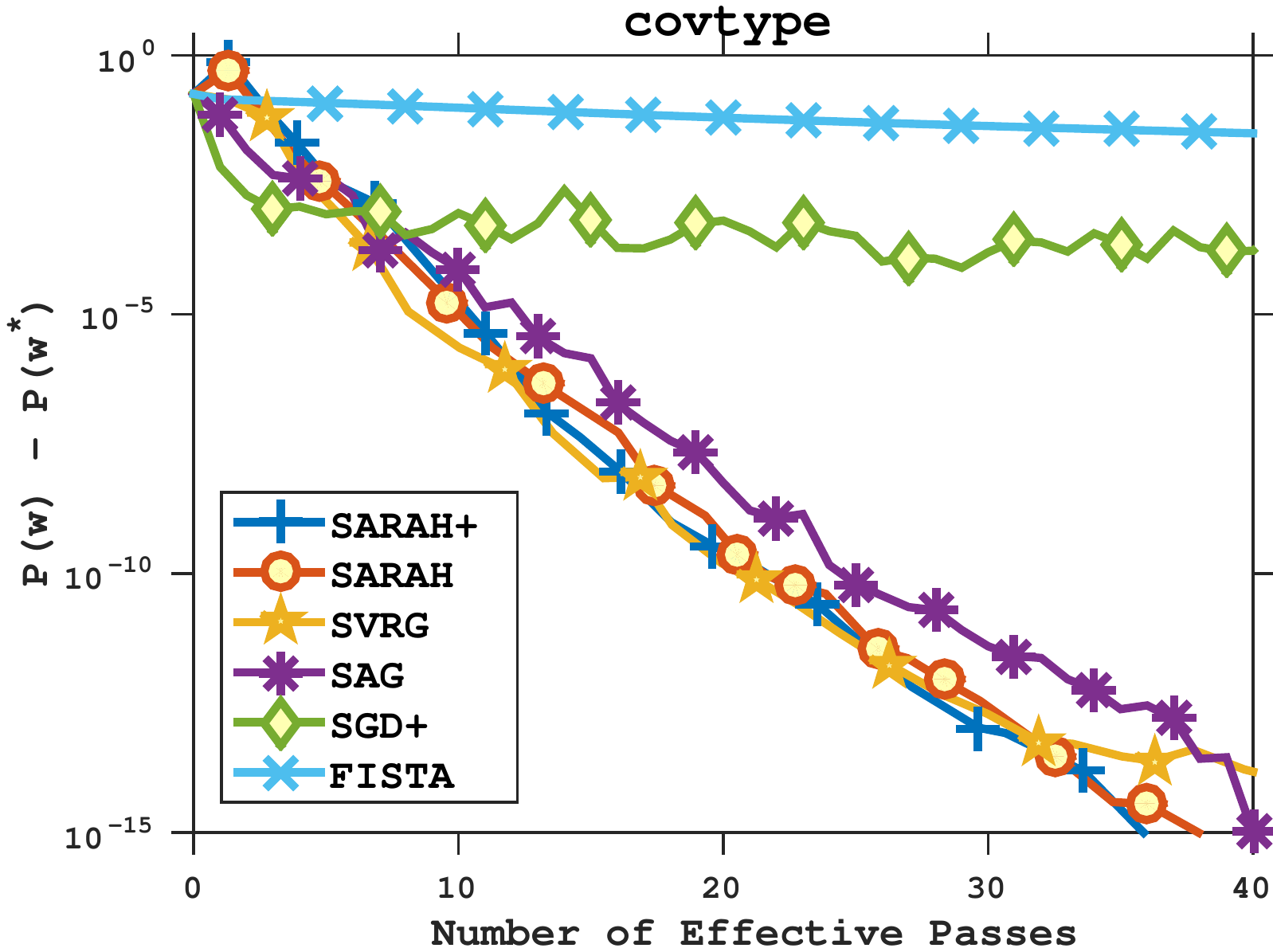,width=0.23\textwidth}
  \epsfig{file=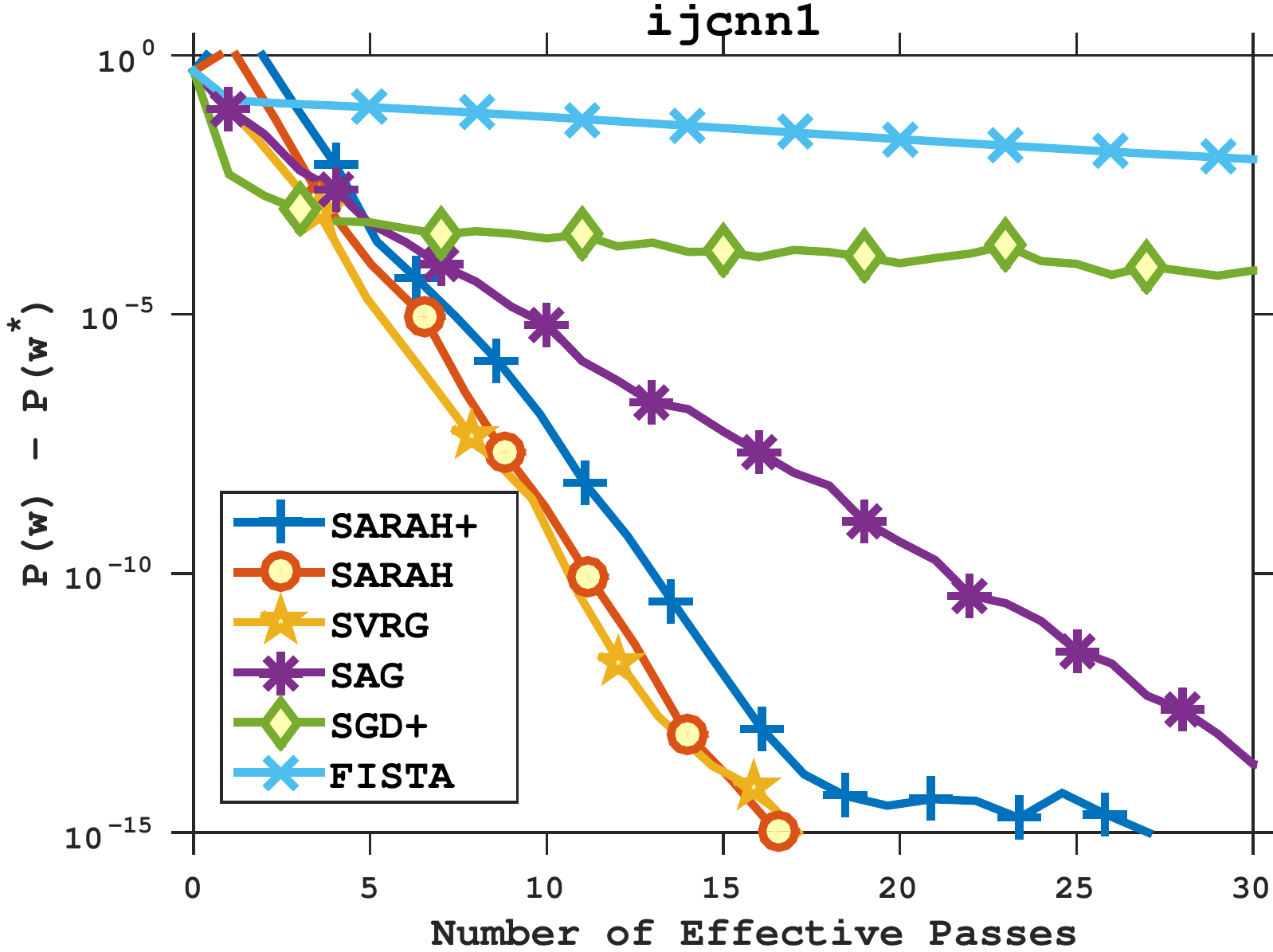,width=0.23\textwidth} 
  \epsfig{file=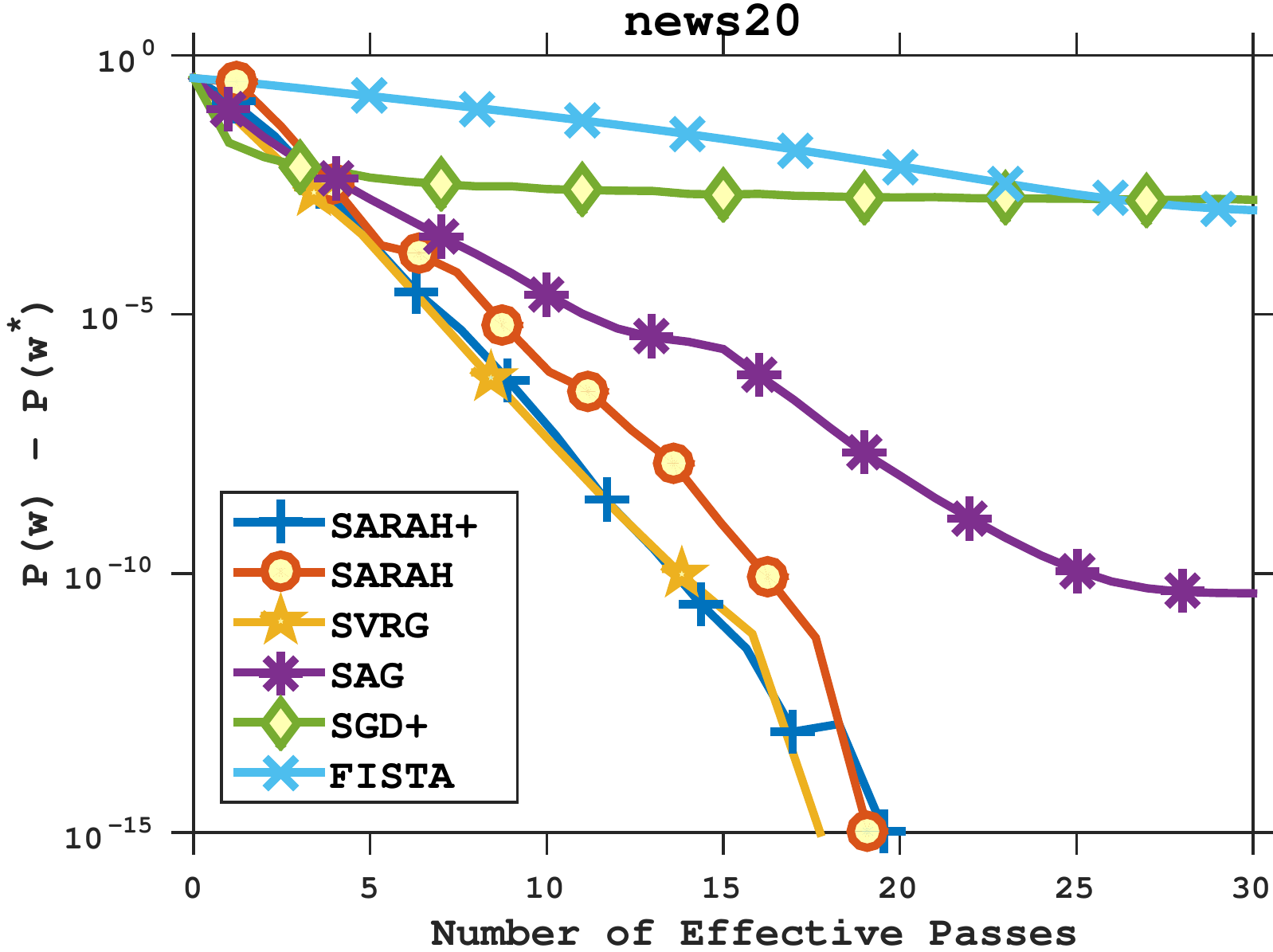,width=0.23\textwidth} 
  \epsfig{file=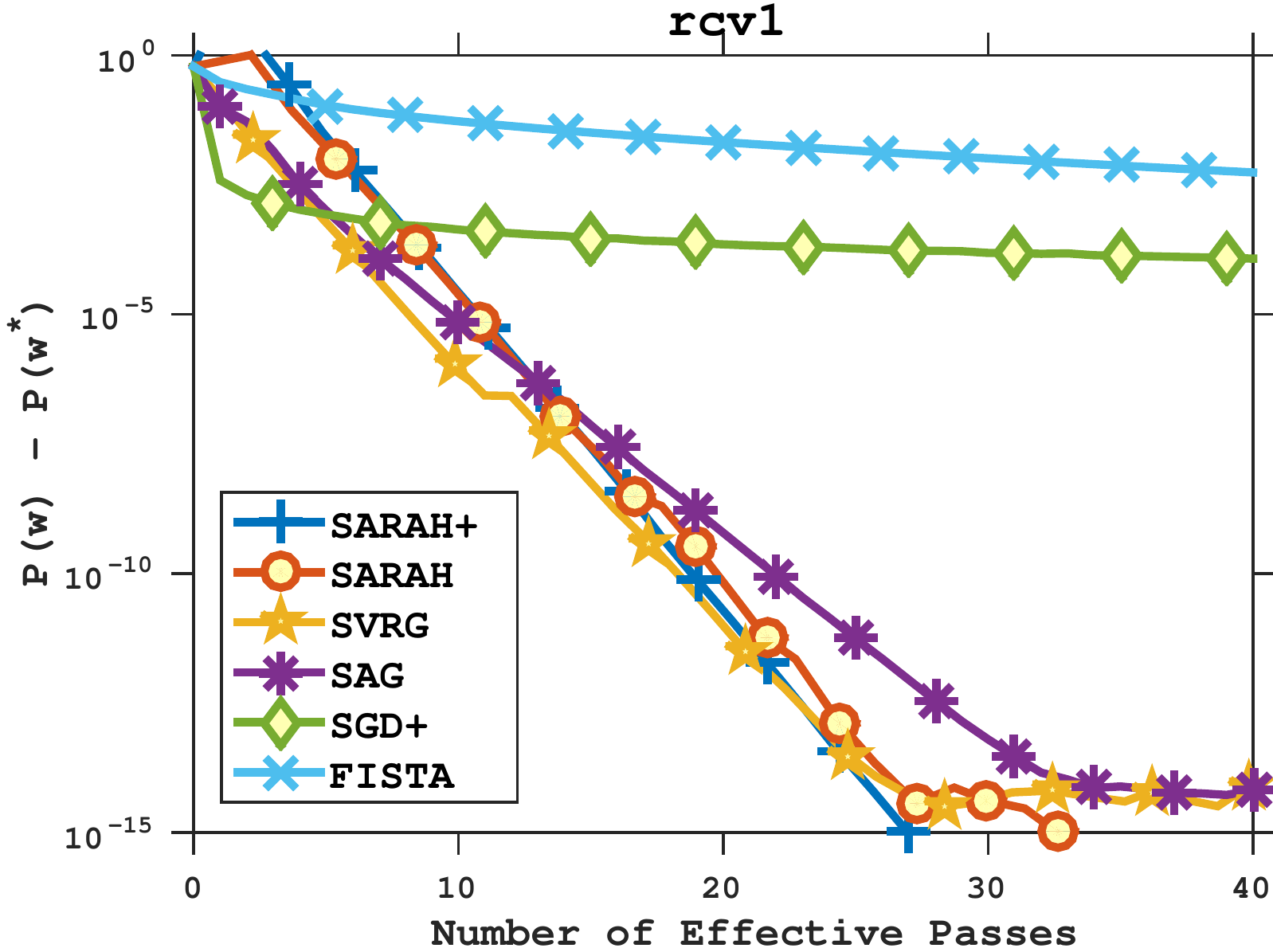,width=0.23\textwidth}

   \epsfig{file=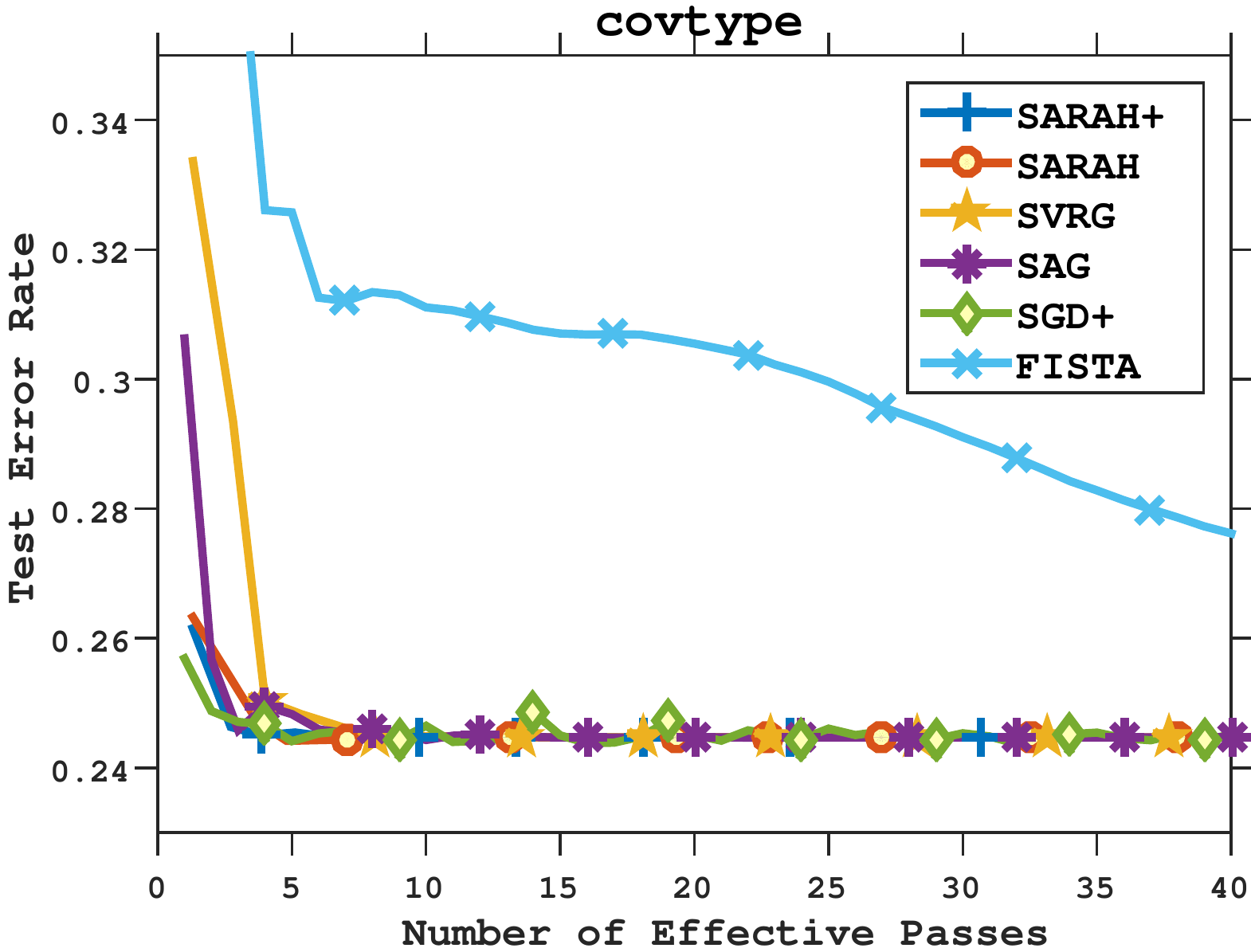,width=0.23\textwidth}
   \epsfig{file=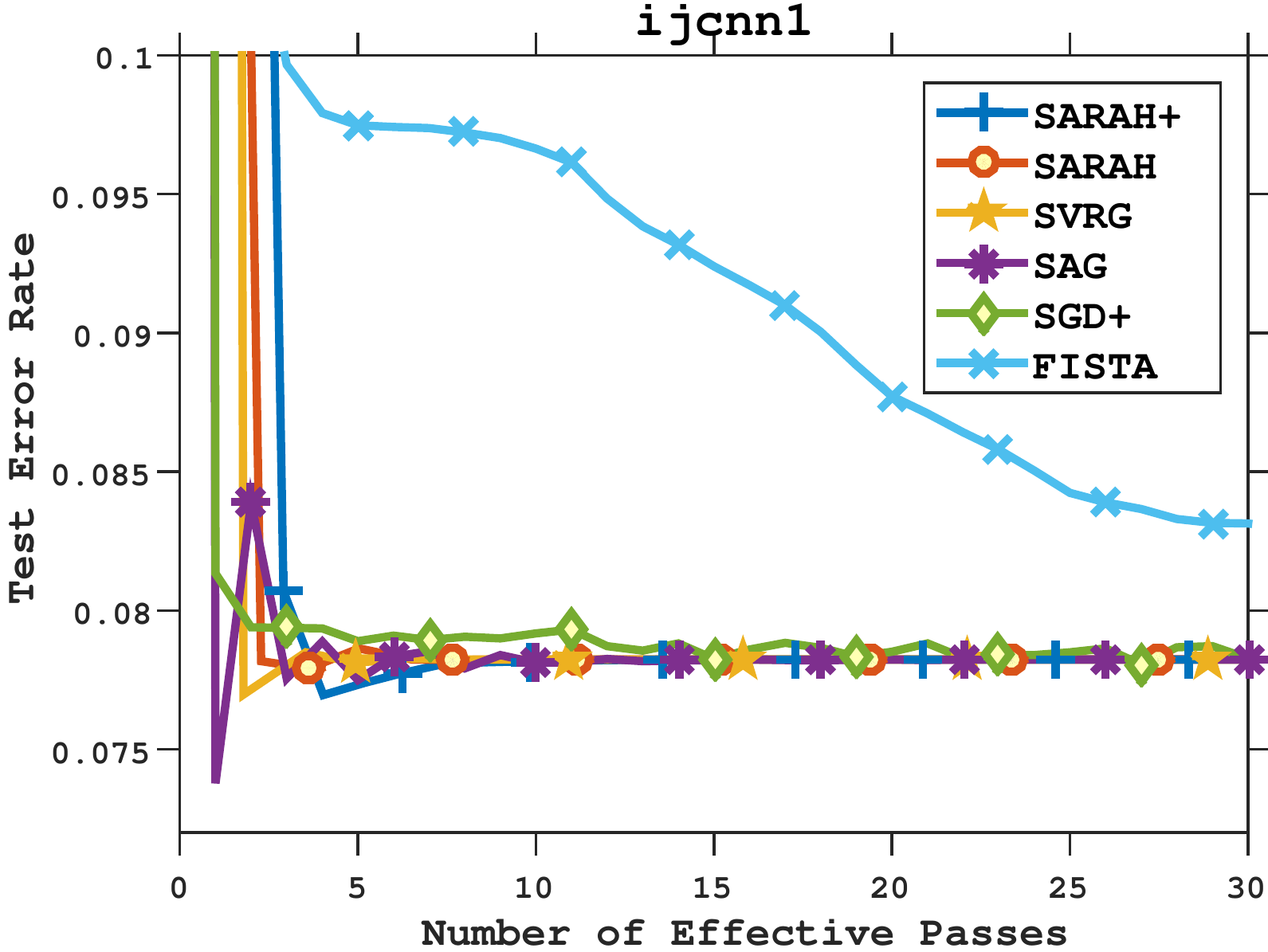,width=0.23\textwidth} 
    \epsfig{file=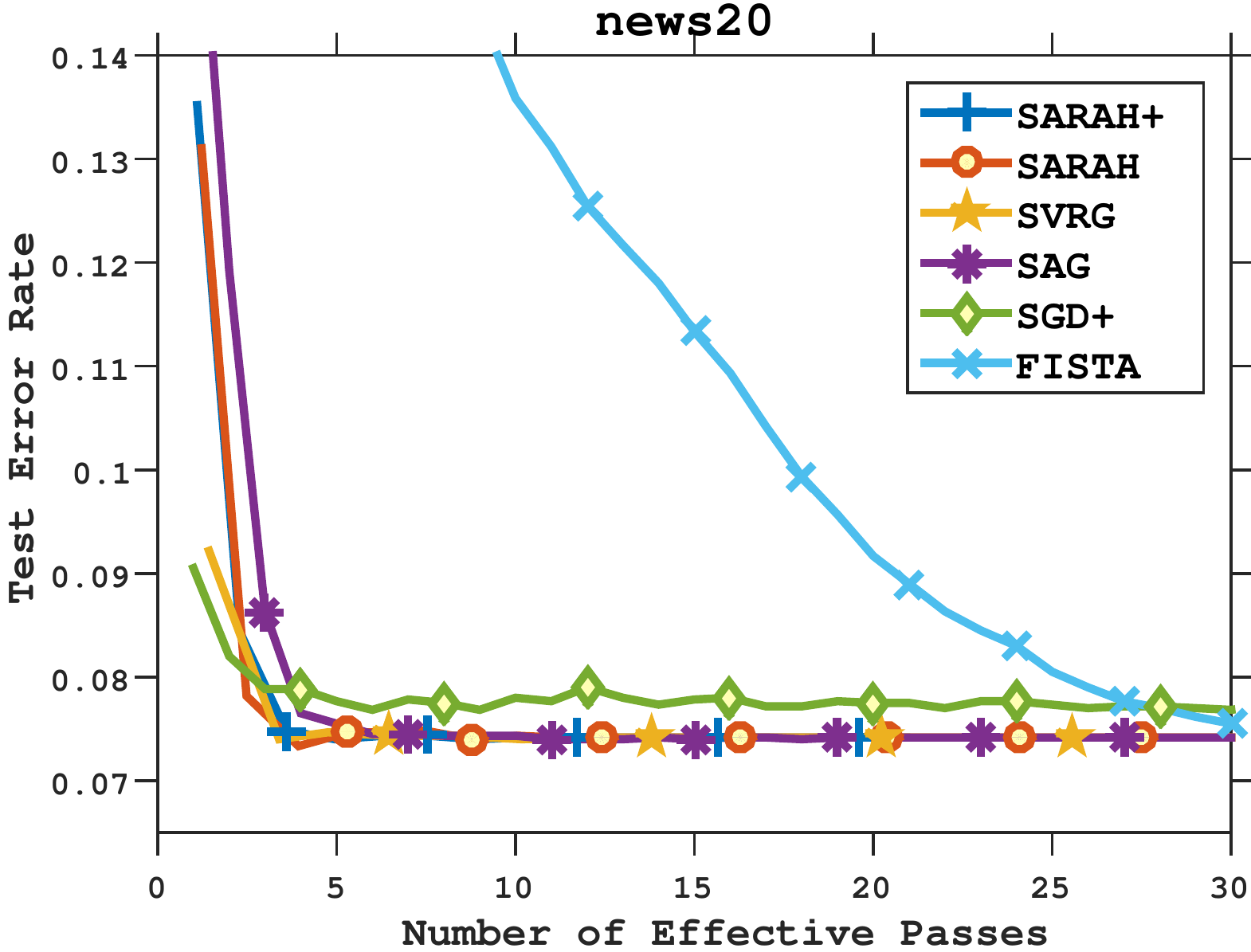,width=0.23\textwidth}
    \epsfig{file=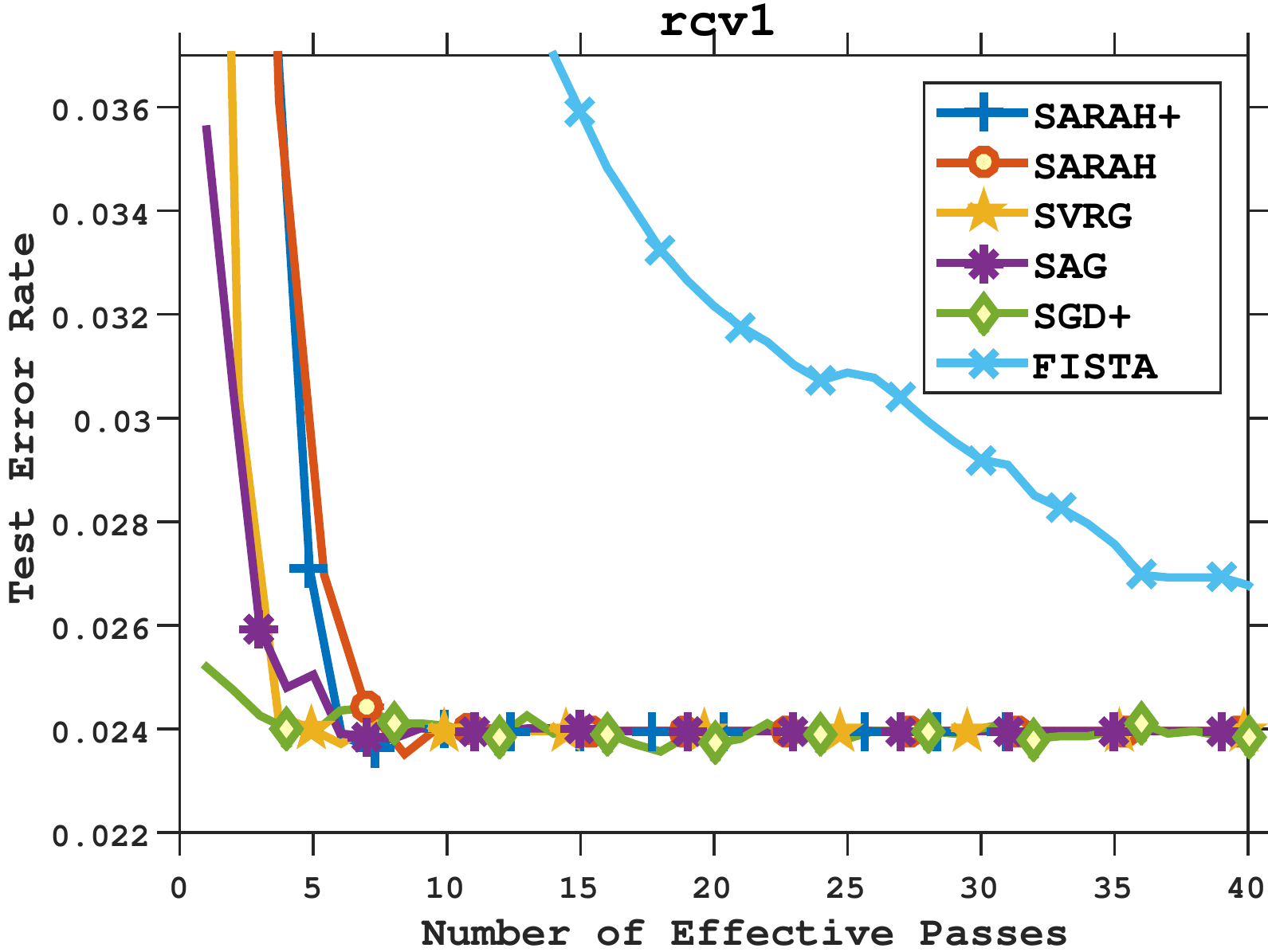,width=0.23\textwidth}
 \caption{\footnotesize Comparisons of loss residuals $P(w) - P(w^*)$ (top) and test errors (bottom) from different modern stochastic methods on \emph{covtype, ijcnn1, news20} and \emph{rcv1}.}
   \label{fig:test_errors}
 \end{figure*}

To support the theoretical analyses and insights, we present our empirical experiments, comparing SARAH and SARAH+  with the state-of-the-art first-order methods for $\ell_2$-regularized logistic regression problems with 
$$f_i(w) = \log (1+\exp(-y_ix_i^Tw)) + \tfrac{\lambda}{2}\|w\|^2,$$
on datasets \emph{covtype, ijcnn1, news20} and \emph{rcv1}~\footnote{All datasets are available at \burl{http://www.csie.ntu.edu.tw/~cjlin/libsvmtools/datasets/}.}. For \emph{ijcnn1} and \emph{rcv1}  we use the predefined testing and training sets, while  \emph{covtype} and \emph{news20} do not have  test data, hence we randomly   split the  datasets with $70\%$ for training and $30\%$ for testing. Some statistics of the datasets are summarized in Table~\ref{table:datasets}.
\begin{table} 
\scriptsize
\centering
\caption{Summary of datasets used for experiments.}
\label{table:datasets}
\begin{tabular}{|c|c|c|c|c|c|c|}
\hline
Dataset  & $d$  & $n$ (train) & Sparsity  & $n$ (test)   & $L$ \\
\hline \hline 
\emph{covtype} & 54  & 406,709  & 22.12\% & 174,303 & 1.90396 \\
\hline 
\emph{ijcnn1} & 22  & 91, 701  & 59.09\% & 49, 990 & 1.77662 \\
\hline
\emph{news20} & 1,355,191  & 13, 997  & 0.03375\% & 5, 999 & 0.2500 \\
\hline
\emph{rcv1} & 47,236 & 677,399 & 0.1549\%  & 20,242 & 0.2500\\
\hline 
\end{tabular}
\end{table}

 The penalty parameter $\lambda$ is set to $1/n$ as is common practice \cite{SAG}. Note that like SVRG/S2GD and SAG/SAGA, SARAH also allows an efficient sparse implementation named ``lazy updates"~\cite{konecny2015mini}.
We conduct and compare numerical results of SARAH with SVRG, SAG, SGD+ and FISTA. SVRG~\cite{SVRG} and SAG~\cite{SAG} are classic modern stochastic methods. SGD+ is SGD with decreasing learning rate $\eta=\eta_0/(k+1)$ where $k$ is the number of effective passes and $\eta_0$ is some initial constant learning rate. FISTA~\cite{fista} is the Fast Iterative Shrinkage-Thresholding Algorithm, well-known as an efficient accelerated version of the gradient descent. Even though for each method, there is a theoretical safe learning rate, we compare the results for the best learning rates in hindsight. 


Figure~\ref{fig:test_errors} shows numerical results in terms of loss residuals (top) and test errors (bottom) on the four datasets, SARAH is sometimes comparable or a little worse than other methods at the beginning. However, it quickly catches up to or surpasses all other methods, demonstrating a faster rate of decrease across all experiments. We observe that  on \emph{covtype} and \emph{rcv1}, SARAH, SVRG and SAG are comparable with some advantage of SARAH on  \emph{covtype}. On \emph{ijcnn1} and \emph{news20}, SARAH and SVRG consistently surpass the other methods. 

\begin{table} 
\scriptsize
\centering
\caption{Summary of best parameters for all the algorithms on different datasets.}
\label{table:stats}
\begin{tabular}{|c|C{1.25cm}|C{1.25cm}|C{0.7cm}|C{0.7cm}|C{0.7cm}|}
\hline
Dataset  & SARAH $(m^*,\eta^*)$ & SVRG $(m^*,\eta^*)$ & SAG ($\eta^*$)  & SGD+ ($\eta^*$)   & FISTA ($\eta^*$) \\
\hline \hline 
\emph{covtype} & (2n, 0.9/L)&  (n, 0.8/L) & 0.3/L & 0.06/L & 50/L\\ 
\hline 
\emph{ijcnn1} & (0.5n, 0.8/L)  & (n, 0.5/L) & 0.7/L & 0.1/L & 90/L \\
\hline
\emph{news20} & (0.5n, 0.9/L)  & (n, 0.5/L) & 0.1/L & 0.2/L & 30/L \\
\hline
\emph{rcv1} & (0.7n, 0.7/L) & (0.5n, 0.9/L) & 0.1/L & 0.1/L & 120/L \\ 
\hline 
\end{tabular}
\end{table}

In particular, to validate the efficiency of our practical variant SARAH+, we provide an insight into how important the choices of $m$ and $\eta$ are for SVRG and SARAH in Table~\ref{table:stats} and Figure~\ref{fig:ms}. Table~\ref{table:stats} presents the optimal choices of $m$ and $\eta$ for each of the algorithm, while Figure~\ref{fig:ms} shows the behaviors of SVRG and SARAH with different choices of $m$ for \emph{covtype} and \emph{ijcnn1}, where $m^*$s denote the best choices. 
In Table~\ref{table:stats}, the optimal learning rates of SARAH vary less among different datasets compared to all the other methods and they approximate the theoretical  upper bound for SARAH ($1/L$); on the contrary, for the other methods the empirical optimal rates can exceed their theoretical limits (SVRG with $1/(4L)$, SAG with $1/(16L)$, FISTA with $1/L$). This empirical studies suggest that it is much easier to tune and find the ideal learning rate for SARAH. As observed in Figure~\ref{fig:ms}, the behaviors of both SARAH and SVRG are quite sensitive to the choices of $m$. With improper choices of $m$, the loss residuals can be increased considerably from $10^{-15}$ to $10^{-3}$ on both \emph{covtype} in 40 effective passes and \emph{ijcnn1} in 17 effective passes for SARAH/SVRG.


 \begin{figure} 
\centering
 \epsfig{file=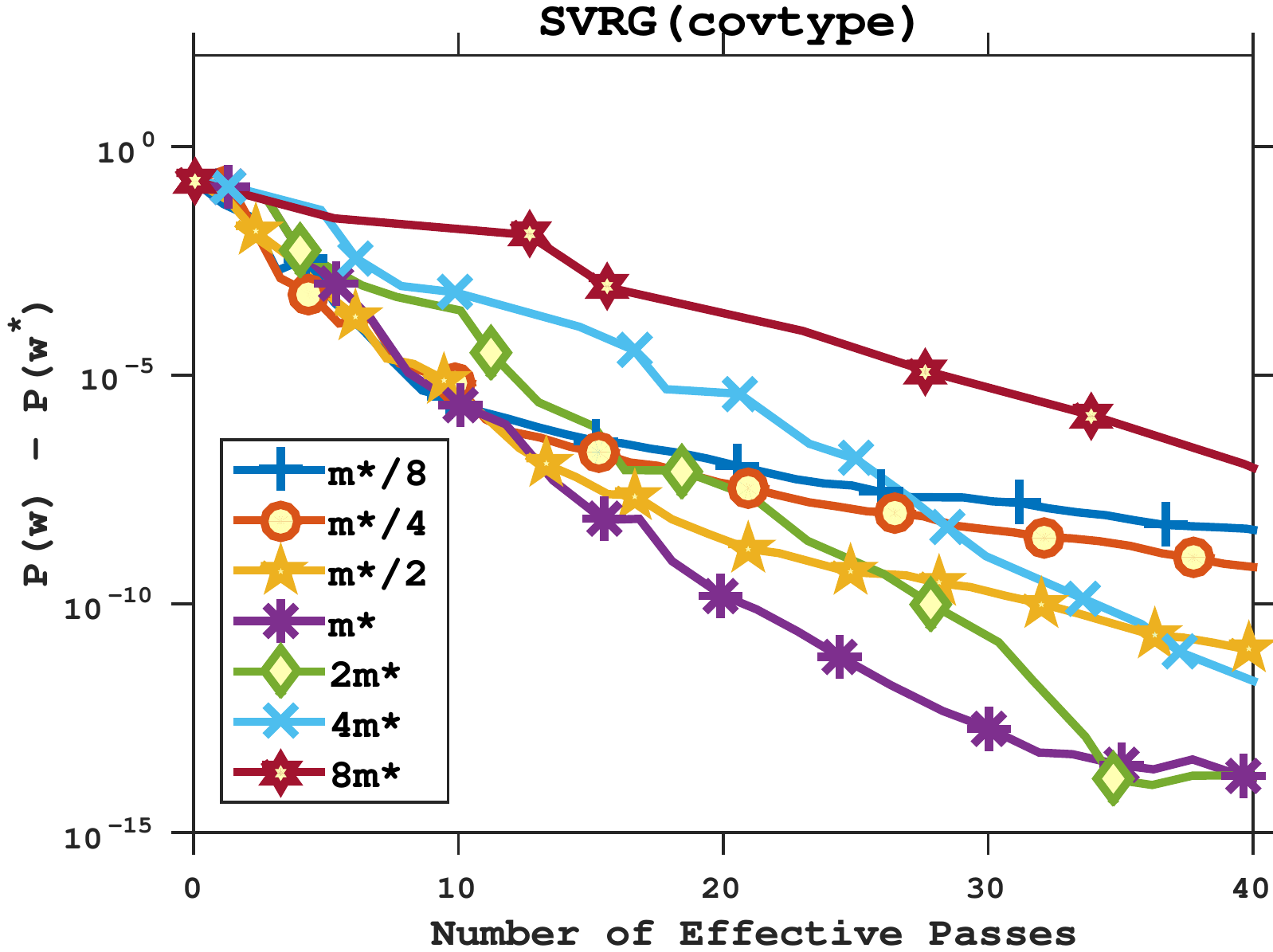,width=0.23\textwidth}
   \epsfig{file=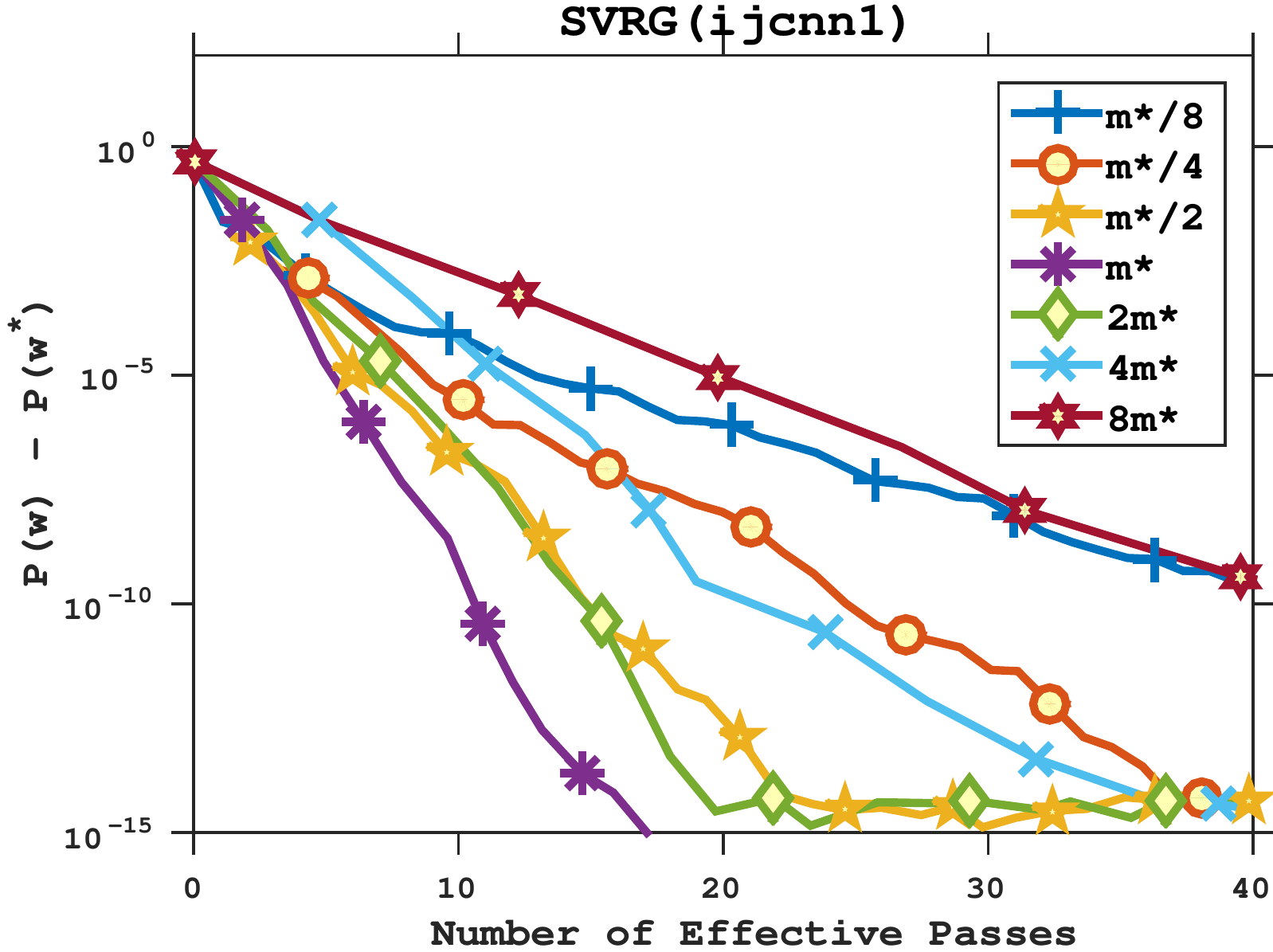,width=0.23\textwidth}
 \epsfig{file=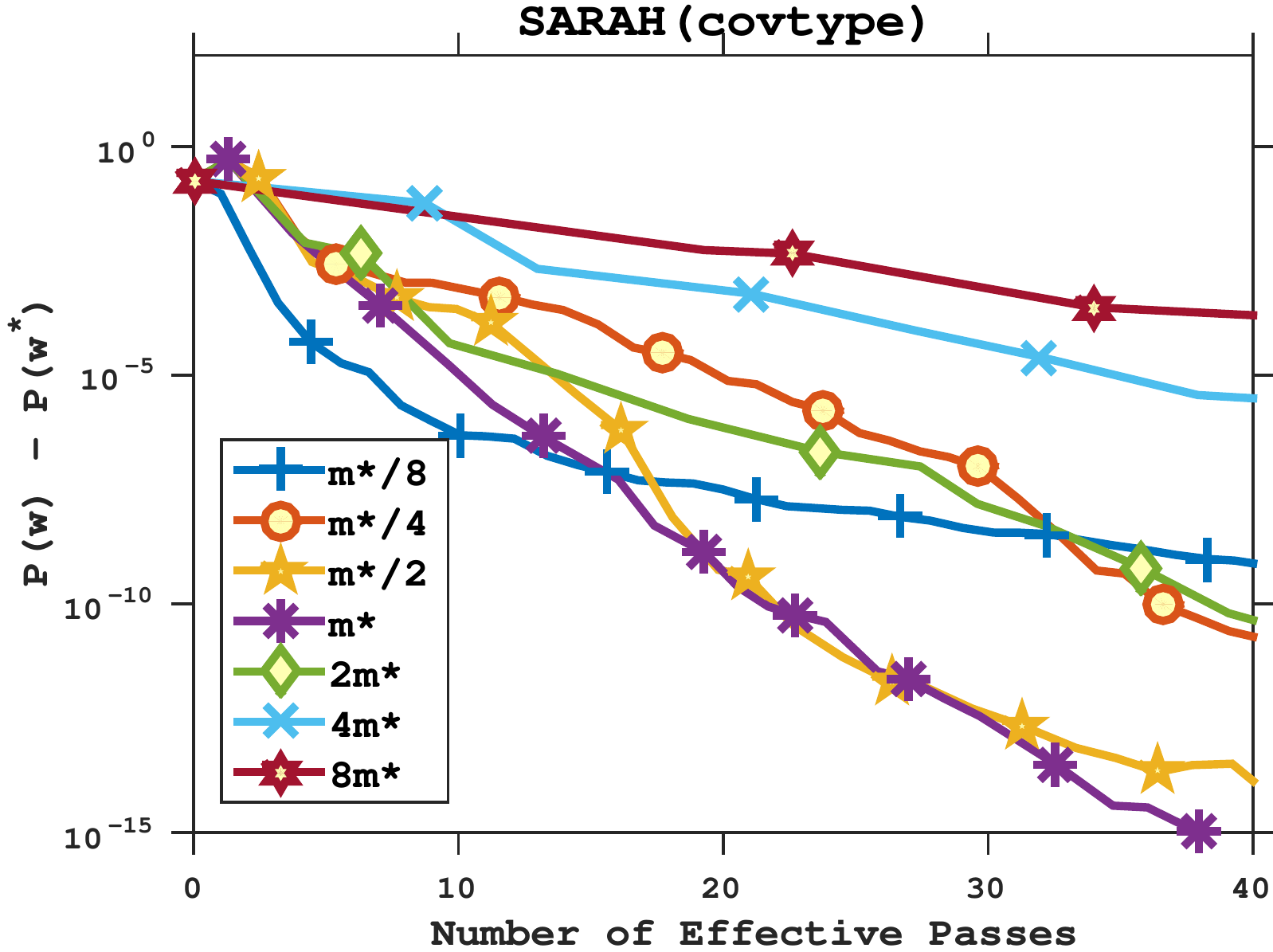,width=0.23\textwidth}
  \epsfig{file=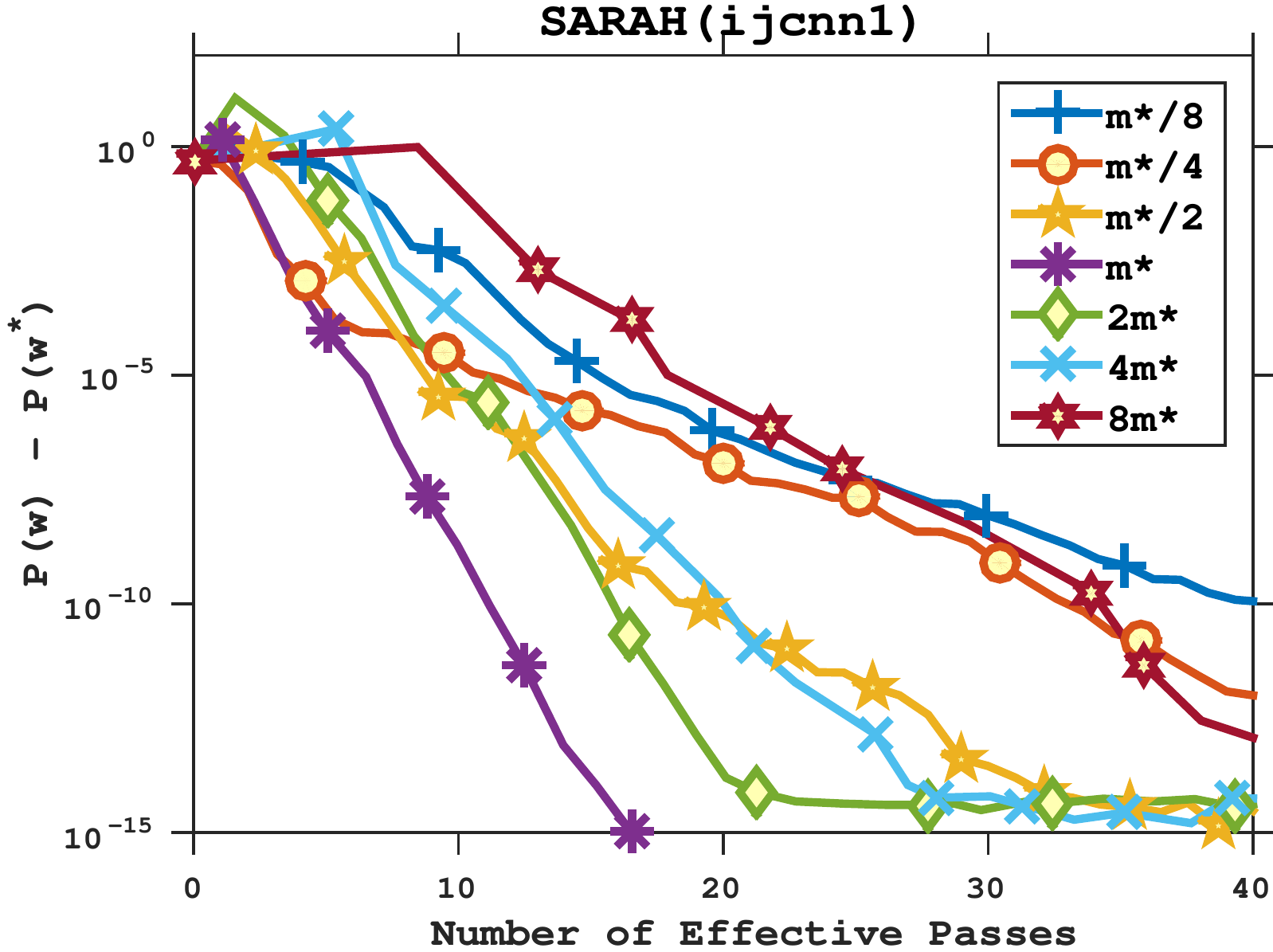,width=0.23\textwidth}
 \caption{\footnotesize Comparisons of loss residuals $P(w) - P(w^*)$ for different inner loop sizes with SVRG (top) and SARAH (bottom) on \emph{covtype} and \emph{ijcnn1}.}
   \label{fig:ms}
 \end{figure}


\section{Conclusion}
We propose a new variance reducing stochastic recursive gradient algorithm SARAH, which combines some of the properties of well known existing algorithms, such as SAGA and SVRG.   For smooth convex functions, we show a sublinear convergence rate, while for strongly convex cases, we prove the linear convergence rate and the  computational complexity as those of SVRG and SAG. However, compared to SVRG, SARAH's convergence rate constant is smaller and the algorithms is  more stable both theoretically and numerically. Additionally, we prove the linear convergence for inner loops of SARAH which support the claim of stability. Based on this convergence we derive a practical version of SARAH, with a simple stopping criterion for the inner loops.


\section*{Acknowledgements} 
The authors would like to thank the reviewers for useful suggestions which helped to improve the exposition in the paper. 

%
%


\bibliography{icml2017}

\begin{thebibliography}{23}
\providecommand{\natexlab}[1]{#1}
\providecommand{\url}[1]{\texttt{#1}}
\expandafter\ifx\csname urlstyle\endcsname\relax
  \providecommand{\doi}[1]{doi: #1}\else
  \providecommand{\doi}{doi: \begingroup \urlstyle{rm}\Url}\fi

\bibitem[{Allen-Zhu}(2017)]{Katyusha}
{Allen-Zhu}, Zeyuan.
\newblock {Katyusha: The First Direct Acceleration of Stochastic Gradient
  Methods}.
\newblock \emph{Proceedings of the 49th Annual ACM on Symposium on Theory of
  Computing (to appear)}, 2017.

\bibitem[{Allen-Zhu} \& Yuan(2016){Allen-Zhu} and Yuan]{SVRG++}
{Allen-Zhu}, Zeyuan and Yuan, Yang.
\newblock {Improved SVRG for Non-Strongly-Convex or Sum-of-Non-Convex
  Objectives}.
\newblock In \emph{ICML}, pp.\  1080--1089, 2016.

\bibitem[Beck \& Teboulle(2009)Beck and Teboulle]{fista}
Beck, Amir and Teboulle, Marc.
\newblock A fast iterative shrinkage-thresholding algorithm for linear inverse
  problems.
\newblock \emph{SIAM J. Imaging Sciences}, 2\penalty0 (1):\penalty0 183--202,
  2009.

\bibitem[Bottou(1998)]{Bottou1998}
Bottou, L{\'e}on.
\newblock Online learning and stochastic approximations.
\newblock In Saad, David (ed.), \emph{Online Learning in Neural Networks}, pp.\
   9--42. Cambridge University Press, New York, NY, USA, 1998.
\newblock ISBN 0-521-65263-4.

\bibitem[Bottou et~al.(2016)Bottou, Curtis, and
  Nocedal]{bottou2016optimization}
Bottou, L{\'e}on, Curtis, Frank~E, and Nocedal, Jorge.
\newblock Optimization methods for large-scale machine learning.
\newblock \emph{arXiv:1606.04838}, 2016.

\bibitem[Cotter et~al.(2011)Cotter, Shamir, Srebro, and
  Sridharan]{acceleratedmb}
Cotter, Andrew, Shamir, Ohad, Srebro, Nati, and Sridharan, Karthik.
\newblock Better mini-batch algorithms via accelerated gradient methods.
\newblock In \emph{NIPS}, pp.\  1647--1655, 2011.

\bibitem[Defazio et~al.(2014)Defazio, Bach, and Lacoste-Julien]{SAGA}
Defazio, Aaron, Bach, Francis, and Lacoste-Julien, Simon.
\newblock {SAGA}: A fast incremental gradient method with support for
  non-strongly convex composite objectives.
\newblock In \emph{NIPS}, pp.\  1646--1654, 2014.

\bibitem[Hastie et~al.(2009)Hastie, Tibshirani, and Friedman]{ESL}
Hastie, Trevor, Tibshirani, Robert, and Friedman, Jerome.
\newblock \emph{The Elements of Statistical Learning: Data Mining, Inference,
  and Prediction}.
\newblock Springer Series in Statistics, 2nd edition, 2009.

\bibitem[Johnson \& Zhang(2013)Johnson and Zhang]{SVRG}
Johnson, Rie and Zhang, Tong.
\newblock Accelerating stochastic gradient descent using predictive variance
  reduction.
\newblock In \emph{NIPS}, pp.\  315--323, 2013.

\bibitem[Kone{\v{c}}n{\'y} et~al.(2016)Kone{\v{c}}n{\'y}, Liu, Richt{\'a}rik,
  and Tak{\'a}{\v{c}}]{konecny2015mini}
Kone{\v{c}}n{\'y}, Jakub, Liu, Jie, Richt{\'a}rik, Peter, and Tak{\'a}{\v{c}},
  Martin.
\newblock Mini-batch semi-stochastic gradient descent in the proximal setting.
\newblock \emph{IEEE Journal of Selected Topics in Signal Processing},
  10:\penalty0 242--255, 2016.

\bibitem[Kone\v{c}n{\'y} \& Richt{\'a}rik(2013)Kone\v{c}n{\'y} and
  Richt{\'a}rik]{S2GD}
Kone\v{c}n{\'y}, Jakub and Richt{\'a}rik, Peter.
\newblock Semi-stochastic gradient descent methods.
\newblock \emph{arXiv:1312.1666}, 2013.

\bibitem[Le~Roux et~al.(2012)Le~Roux, Schmidt, and Bach]{SAG}
Le~Roux, Nicolas, Schmidt, Mark, and Bach, Francis.
\newblock A stochastic gradient method with an exponential convergence rate for
  finite training sets.
\newblock In \emph{NIPS}, pp.\  2663--2671, 2012.

\bibitem[Mairal(2013)]{mairal2013optimization}
Mairal, Julien.
\newblock Optimization with first-order surrogate functions.
\newblock In \emph{ICML}, pp.\  783--791, 2013.

\bibitem[Mokhtari et~al.(2017)Mokhtari, G{\"u}rb{\"u}zbalaban, and
  Ribeiro]{mokhtari2017double}
Mokhtari, Aryan, G{\"u}rb{\"u}zbalaban, Mert, and Ribeiro, Alejandro.
\newblock A double incremental aggregated gradient method with linear
  convergence rate for large-scale optimization.
\newblock \emph{Proceedings of IEEE International Conference on Acoustic,
  Speech and Signal Processing (to appear)}, 2017.

\bibitem[Nesterov(2004)]{nesterov2004}
Nesterov, Yurii.
\newblock \emph{Introductory lectures on convex optimization : a basic course}.
\newblock Applied optimization. Kluwer Academic Publ., Boston, Dordrecht,
  London, 2004.
\newblock ISBN 1-4020-7553-7.

\bibitem[Reddi et~al.(2016)Reddi, Hefny, Sra, P{\'{o}}czos, and
  Smola]{nonconvexSVRG}
Reddi, Sashank~J., Hefny, Ahmed, Sra, Suvrit, P{\'{o}}czos, Barnab{\'{a}}s, and
  Smola, Alexander~J.
\newblock Stochastic variance reduction for nonconvex optimization.
\newblock In \emph{ICML}, pp.\  314--323, 2016.

\bibitem[Robbins \& Monro(1951)Robbins and Monro]{RM1951}
Robbins, Herbert and Monro, Sutton.
\newblock A stochastic approximation method.
\newblock \emph{The Annals of Mathematical Statistics}, 22\penalty0
  (3):\penalty0 400--407, 1951.

\bibitem[Schmidt et~al.(2016)Schmidt, Le~Roux, and Bach]{SAGjournal}
Schmidt, Mark, Le~Roux, Nicolas, and Bach, Francis.
\newblock Minimizing finite sums with the stochastic average gradient.
\newblock \emph{Mathematical Programming}, pp.\  1--30, 2016.

\bibitem[Shalev-Shwartz \& Zhang(2013)Shalev-Shwartz and Zhang]{SDCA}
Shalev-Shwartz, Shai and Zhang, Tong.
\newblock Stochastic dual coordinate ascent methods for regularized loss.
\newblock \emph{Journal of Machine Learning Research}, 14\penalty0
  (1):\penalty0 567--599, 2013.

\bibitem[Shalev-Shwartz et~al.(2007)Shalev-Shwartz, Singer, and
  Srebro]{pegasosICML}
Shalev-Shwartz, Shai, Singer, Yoram, and Srebro, Nathan.
\newblock Pegasos: Primal estimated sub-gradient solver for {SVM}.
\newblock In \emph{ICML}, pp.\  807--814, 2007.

\bibitem[Shalev-Shwartz et~al.(2011)Shalev-Shwartz, Singer, Srebro, and
  Cotter]{pegasos}
Shalev-Shwartz, Shai, Singer, Yoram, Srebro, Nathan, and Cotter, Andrew.
\newblock Pegasos: Primal estimated sub-gradient solver for {SVM}.
\newblock \emph{Mathematical Programming}, 127\penalty0 (1):\penalty0 3--30,
  2011.

\bibitem[Tak{\'a}\v{c} et~al.(2013)Tak{\'a}\v{c}, Bijral, Richt{\'a}rik, and
  Srebro]{takac2013ICML}
Tak{\'a}\v{c}, Martin, Bijral, Avleen~Singh, Richt{\'a}rik, Peter, and Srebro,
  Nathan.
\newblock Mini-batch primal and dual methods for {SVM}s.
\newblock In \emph{ICML}, pp.\  1022--1030, 2013.

\bibitem[Xiao \& Zhang(2014)Xiao and Zhang]{Xiao2014}
Xiao, Lin and Zhang, Tong.
\newblock A proximal stochastic gradient method with progressive variance
  reduction.
\newblock \emph{{SIAM} Journal on Optimization}, 24\penalty0 (4):\penalty0
  2057--2075, 2014.

\end{thebibliography}
\bibliographystyle{template/icml2017}


\clearpage
\onecolumn
\appendix

\icmltitle{Supplementary Material}

\section{Technical Results}

\begin{lem}[Theorem 2.1.5 in \cite{nesterov2004}]
\label{lem_tech_01}
Suppose that $f$ is convex and $L$-smooth. Then, for any $w$, $w' \in \mathbb{R}^d$, 
\begin{gather*}
f(w) \leq f(w') + \nabla f(w')^T(w-w') + \frac{L}{2}\|w-w'\|^2,
\tagthis\label{eq:Lsmooth}
\\
f(w) \geq f(w') + \nabla f(w')^\top (w - w') + \frac{1}{2L}\| \nabla f(w) - \nabla f(w') \|^2, 
\tagthis 
\\
(\nabla f(w) - \nabla f(w'))^\top (w - w') \geq \frac{1}{L}\| \nabla f(w) - \nabla f(w') \|^2. 
\tagthis\label{ineq_convex} 
\end{gather*}
\end{lem}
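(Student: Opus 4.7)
The plan is to establish the three inequalities in sequence; only the first requires $L$-smoothness alone, while the second and third genuinely use both convexity and smoothness, and they essentially cascade from the first.

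First, for \eqref{eq:Lsmooth}, I would use only the $L$-smoothness of $f$ (convexity is not invoked here). Writing the fundamental theorem of calculus,
$$
f(w) - f(w') - \nabla f(w')^T(w-w') = \int_0^1 \bigl[\nabla f(w' + t(w-w')) - \nabla f(w')\bigr]^T(w-w')\, dt,
$$
then applying Cauchy--Schwarz inside the integral together with the Lipschitz bound $\|\nabla f(w' + t(w-w')) - \nabla f(w')\| \leq Lt\|w-w'\|$ yields the $\tfrac{L}{2}\|w-w'\|^2$ term after integrating $\int_0^1 t\, dt = 1/2$.

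Second, for the middle inequality, I would introduce the auxiliary function $g(z) := f(z) - \nabla f(w')^T z$. Since $f$ is convex and $L$-smooth, so is $g$, and the identity $\nabla g(w') = 0$ shows that $g$ attains its global minimum at $w'$. Applying the descent inequality \eqref{eq:Lsmooth} to $g$ between the points $w$ and $w - (1/L)\nabla g(w)$ gives
$$
g\bigl(w - \tfrac1L \nabla g(w)\bigr) \leq g(w) - \tfrac1{2L}\|\nabla g(w)\|^2,
$$
and combining with $g(w') \leq g(w - (1/L)\nabla g(w))$ (global optimality at $w'$) produces, after substituting $\nabla g(w) = \nabla f(w) - \nabla f(w')$ and unfolding the definition of $g$, exactly the claimed inequality
$f(w) \geq f(w') + \nabla f(w')^T(w-w') + \tfrac{1}{2L}\|\nabla f(w) - \nabla f(w')\|^2$.

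Finally, for the co-coercivity estimate \eqref{ineq_convex}, I would apply the second inequality once as stated, then again with the roles of $w$ and $w'$ swapped, and add the two. The function-value terms $f(w)$ and $f(w')$ cancel, the two linear terms combine to $(\nabla f(w) - \nabla f(w'))^T(w-w')$, and the two squared-norm terms give $\tfrac{1}{L}\|\nabla f(w) - \nabla f(w')\|^2$, which is precisely the co-coercivity bound.

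The statement is a well-known textbook result (Theorem 2.1.5 of \cite{nesterov2004}), so there is no serious obstacle; the only subtle step is the auxiliary-function construction in the middle inequality, where one must verify that $g$ inherits both convexity and $L$-smoothness from $f$ and that its stationary point $w'$ is therefore a global minimizer before invoking the descent lemma at the shifted point $w - (1/L)\nabla g(w)$.
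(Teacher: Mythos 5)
Your proof is correct; note that the paper itself does not prove this lemma at all --- it simply cites it as Theorem 2.1.5 of \cite{nesterov2004} --- and your argument (the descent inequality \eqref{eq:Lsmooth} via the fundamental theorem of calculus and the Lipschitz bound, the auxiliary function $g(z)=f(z)-\nabla f(w')^\top z$ whose stationary point $w'$ is a global minimizer by convexity, evaluated at the shifted point $w-\tfrac1L\nabla g(w)$, and finally symmetrizing the middle inequality to obtain \eqref{ineq_convex}) is exactly the standard textbook derivation given there. The one subtlety you correctly isolate is that convexity enters only through the global optimality of $w'$ for $g$; everything else follows from $L$-smoothness alone.
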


Note that \eqref{eq:Lsmooth} does not require the convexity of $f$. 

\begin{lem}[Theorem 2.1.11 in \cite{nesterov2004}]
\label{lem_convex_lowerbound}
Suppose that $f$ is $\mu$-strongly convex and $L$-smooth. Then, for any $w$, $w' \in \mathbb{R}^d$, 
\begin{align*}
&(\nabla f(w) - \nabla f(w'))^\top (w - w') \geq \frac{\mu L}{\mu + L} \|w - w'\|^2  
 + \frac{1}{\mu + L}\| \nabla f(w) - \nabla f(w') \|^2. \tagthis\label{eqdasfsadfsa} 
\end{align*}
\end{lem}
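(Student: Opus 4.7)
The plan is to reduce the claim to the co-coercivity inequality \eqref{ineq_convex} for convex, $L$-smooth functions, which is already available from Lemma \ref{lem_tech_01}. The trick is the classical device of peeling off the strongly-convex quadratic so that a shifted, purely convex function appears.

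First, I would introduce the auxiliary function $\phi(w) \eqdef f(w) - \tfrac{\mu}{2}\|w\|^2$, whose gradient is $\nabla \phi(w) = \nabla f(w) - \mu w$. Since $f$ is $\mu$-strongly convex, $\phi$ is convex. The crucial observation is that $\phi$ is in addition $(L-\mu)$-smooth: in the twice-differentiable case this is immediate from $\nabla^2 \phi = \nabla^2 f - \mu I$, whose eigenvalues lie in $[0, L-\mu]$; the general case follows by a standard smoothing/limiting argument using the Moreau envelope or infimal convolution with a small quadratic.

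Second, I would apply the co-coercivity bound \eqref{ineq_convex} from Lemma \ref{lem_tech_01} to $\phi$, which yields
\begin{equation*}
(\nabla \phi(w) - \nabla \phi(w'))^\top (w - w') \geq \frac{1}{L-\mu}\,\|\nabla \phi(w) - \nabla \phi(w')\|^2.
\end{equation*}
Substituting $\nabla \phi = \nabla f - \mu\,\mathrm{id}$ on both sides, the left side expands to $A - \mu B$ and the right side to $\tfrac{1}{L-\mu}(C - 2\mu A + \mu^2 B)$, where for brevity $A \eqdef (\nabla f(w) - \nabla f(w'))^\top(w-w')$, $B \eqdef \|w-w'\|^2$, and $C \eqdef \|\nabla f(w) - \nabla f(w')\|^2$. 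Multiplying through by $(L-\mu)$ and collecting the $\mu$-cross terms gives $(L+\mu)\,A \geq C + \mu L\,B$, and dividing by $L+\mu$ produces the desired bound \eqref{eqdasfsadfsa}.

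The main obstacle is the verification of $(L-\mu)$-smoothness of $\phi$ without assuming second-order differentiability; everything else is mechanical algebra. The degenerate case $\mu = L$ collapses $\phi$ to an affine function, so $\nabla \phi$ is constant and the intermediate co-coercivity inequality reads $0 \geq 0$; here one verifies the lemma directly, since $\mu = L$ forces $f(w) = \tfrac{L}{2}\|w\|^2 + \text{affine}$ and both sides of \eqref{eqdasfsadfsa} become $\tfrac{L}{2}\|w-w'\|^2$.
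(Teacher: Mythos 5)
Your argument is correct, but note that the paper itself offers no proof of this lemma: it is imported verbatim as Theorem 2.1.11 of \cite{nesterov2004}, and your proof is essentially the textbook proof of that result (subtract the strongly convex quadratic, apply co-coercivity to the residual convex function). Two small points. First, the one step you flag as an obstacle, the $(L-\mu)$-smoothness of $\phi(w) = f(w) - \tfrac{\mu}{2}\|w\|^2$, does not need a Moreau-envelope or limiting argument: from \eqref{eq:Lsmooth} applied to $f$ and the identity $-\tfrac{\mu}{2}\|w\|^2 + \tfrac{\mu}{2}\|w'\|^2 + \mu \,{w'}^\top(w-w') = -\tfrac{\mu}{2}\|w-w'\|^2$ one gets directly
\begin{align*}
\phi(w) \leq \phi(w') + \nabla \phi(w')^\top(w-w') + \tfrac{L-\mu}{2}\|w-w'\|^2 ,
\end{align*}
and for a \emph{convex} $\phi$ this quadratic upper bound already implies the co-coercivity inequality with constant $L-\mu$ (this is exactly the chain of implications behind Lemma \ref{lem_tech_01}), so no second-order differentiability is ever invoked. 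Second, in the degenerate case $\mu = L$ you state that both sides of \eqref{eqdasfsadfsa} become $\tfrac{L}{2}\|w-w'\|^2$; in fact each of the two terms on the right-hand side equals $\tfrac{L}{2}\|w-w'\|^2$, so both sides equal $L\|w-w'\|^2$ --- a trivial slip that does not affect the conclusion that equality holds there. With these cosmetic repairs the proposal is a complete and standard proof, matching the source the paper cites.
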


\begin{lem}[Choices of $m$ and $\eta$] \label{lem:complexity}
Consider the rate of convergence $\sigma_m$ in Theorem \ref{thm:stronglyconvexconvergence}. If we choose $\eta = 1/(\theta L)$ with $\theta>1$ and fix $\sigma_m$, then the best choice of $m$ is 
$$m^*=\frac{1}{2}(2\theta^*-1)^2\kappa - 1,$$
where $\kappa\eqdef L/\mu,$ with $\theta^*$ calculated as:
$$\theta^* = \frac{\sigma_m+1+\sqrt{\sigma_m+1}}{2\sigma_m}.$$
Furthermore, we require $\theta^* > 1+\sqrt{2}/2$ for $\sigma_m<1$. 
\end{lem}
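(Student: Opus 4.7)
The plan is to substitute the specific choice $\eta = 1/(\theta L)$ into the expression for $\sigma_m$ from Theorem~\ref{thm:stronglyconvexconvergence}, treat $\sigma_m$ as a constant, and view $m+1$ as a function of $\theta$ that we wish to minimize. Concretely, with $\eta L = 1/\theta$ and $\mu\eta = 1/(\theta\kappa)$, the two terms defining $\sigma_m$ reduce cleanly to
\begin{equation*}
\sigma_m = \frac{\theta\kappa}{m+1} + \frac{1}{2\theta - 1},
\end{equation*}
so solving for $m+1$ yields
\begin{equation*}
m + 1 = \frac{\theta\kappa(2\theta-1)}{\sigma_m(2\theta-1) - 1}.
\end{equation*}
Since $\kappa$ and $\sigma_m$ are fixed, minimizing $m$ is equivalent to minimizing the scalar function $g(\theta) = \theta(2\theta-1)/[\sigma_m(2\theta-1) - 1]$ over $\theta > 1$.

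Next, I would introduce the substitution $u = 2\theta - 1$ (so $\theta = (u+1)/2$). This converts $g$ into the more tractable form $(u^2+u)/[2(\sigma_m u - 1)]$, whose critical points satisfy a single quadratic equation. Setting the derivative to zero, after elementary simplification the numerator collapses to $\sigma_m u^2 - 2u - 1 = 0$, whose positive root is
\begin{equation*}
u^{*} = \frac{1 + \sqrt{1+\sigma_m}}{\sigma_m}.
\end{equation*}
Undoing the substitution gives $\theta^{*} = (u^{*}+1)/2 = (\sigma_m + 1 + \sqrt{\sigma_m+1})/(2\sigma_m)$, which matches the claimed formula. A second-derivative check (or the observation that $g \to +\infty$ at both the feasible boundary $\sigma_m(2\theta-1) = 1$ and $\theta \to \infty$) confirms this is the minimizer.

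To recover the closed form for $m^{*}$, I would exploit the quadratic relation $\sigma_m u^{*2} = 2u^{*} + 1$ to simplify the denominator: $\sigma_m u^{*} - 1 = (u^{*}+1)/u^{*}$. Plugging this back into the expression for $m+1$ and using $\theta^{*} = (u^{*}+1)/2$ leads to the cancellation
\begin{equation*}
m^{*} + 1 = \frac{\theta^{*}\kappa u^{*}}{(u^{*}+1)/u^{*}} = \frac{u^{*2}\kappa}{2} = \frac{(2\theta^{*}-1)^2\kappa}{2},
\end{equation*}
giving the stated formula. Finally, for the range of $\theta^{*}$: evaluating at $\sigma_m = 1$ yields exactly $\theta^{*} = 1 + \sqrt{2}/2$, and one checks that $\theta^{*}$ is strictly decreasing in $\sigma_m$, so the constraint $\sigma_m < 1$ is equivalent to $\theta^{*} > 1 + \sqrt{2}/2$. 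The main obstacle is purely algebraic bookkeeping — picking the $u = 2\theta-1$ substitution is what keeps the derivative computation and the back-substitution clean; without it, the expressions become unwieldy but the proof strategy is otherwise direct.
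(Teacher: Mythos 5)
Your proposal is correct and follows essentially the same route as the paper: substitute $\eta = 1/(\theta L)$ into $\sigma_m$, solve for $m(\theta)$, and minimize over $\theta$ by setting the derivative to zero, which yields the same optimality condition ($\sigma_m u^2 - 2u - 1 = 0$ with $u = 2\theta-1$ is exactly the paper's $\sigma_m = (4\theta^*-1)/(2\theta^*-1)^2$). Your $u = 2\theta - 1$ substitution, the boundary-behavior check confirming the critical point is a minimizer, and the monotonicity argument for $\theta^* > 1+\sqrt{2}/2$ are just tidier bookkeeping of the same argument.
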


\section{Proofs}

\subsection{Proof of Lemma \ref{lem_main_derivation}}

By Assumption \ref{ass_Lsmooth} and $w_{t+1} = w_{t} - \eta v_{t}$, we have
\begin{align*}
\mathbb{E}[ P(w_{t+1})] & \overset{\eqref{eq:Lsmooth}}{\leq}  \mathbb{E}[ P(w_{t})] - \eta \mathbb{E}[\nabla P(w_{t})^\top v_{t}] 
+ \frac{L\eta^2}{2} \mathbb{E} [ \| v_{t} \|^2 ] 
\\
& = \mathbb{E}[ P(w_{t})] - \frac{\eta}{2} \mathbb{E}[ \| \nabla P(w_{t})\|^2 ] 
+ \frac{\eta}{2} \mathbb{E}[ \| \nabla P(w_{t}) - v_{t} \|^2 ] 
- \left( \frac{\eta}{2} - \frac{L\eta^2}{2} \right) \mathbb{E} [ \| v_{t} \|^2 ],
\end{align*}
where the last equality follows from the fact
$a^Tb = \frac{1}{2}\left[\|a\|^2 + \|b\|^2 - \|a-b\|^2\right].$

By summing over $t = 0,\dots,m$, we have
\begin{align*}
\mathbb{E}[ P(w_{m + 1})] & \leq  \mathbb{E}[ P(w_{0})] - \frac{\eta}{2} \sum_{t=0}^{m} \mathbb{E}[ \| \nabla P(w_{t})\|^2 ] + \frac{\eta}{2} \sum_{t=0}^{m} \mathbb{E}[ \| \nabla P(w_{t}) - v_{t} \|^2 ]  
- \left( \frac{\eta}{2} - \frac{L\eta^2}{2} \right) \sum_{t=0}^{m} \mathbb{E} [ \| v_{t} \|^2 ],  
\end{align*}
which is equivalent to ($\eta>0$):
\begin{align*}
\sum_{t=0}^{m} \mathbb{E}[ \| \nabla P(w_{t})\|^2 ]  & \leq \frac{2}{\eta} \mathbb{E}[ P(w_{0}) - P(w_{m+1})] + \sum_{t=0}^{m} \mathbb{E}[ \| \nabla P(w_{t}) - v_{t} \|^2 ]  
 - ( 1 - L\eta ) \sum_{t=0}^{m} \mathbb{E} [ \| v_{t} \|^2 ] \\
& \leq \frac{2}{\eta} \mathbb{E}[ P(w_{0}) - P(w^{*})] + \sum_{t=0}^{m} \mathbb{E}[ \| \nabla P(w_{t}) - v_{t} \|^2 ]  
 - ( 1 - L\eta ) \sum_{t=0}^{m} \mathbb{E} [ \| v_{t} \|^2 ],    
\end{align*}

where the last inequality follows since $w^{*}$ is a global minimizer of $P$. 

\subsection{Proof of Lemma \ref{lem:var_diff_01}}
Note that $\mathcal{F}_{j}$ contains all the information of $w_{0},\dots,w_{j}$ as well as $v_0,\dots,v_{j-1}$. For $j \geq 1$, we have
\begin{align*}
\mathbb{E}[ \| \nabla P(w_{j}) - v_{j} \|^2 | \mathcal{F}_{j} ] 
& = \mathbb{E}[ \| [\nabla P(w_{j-1}) - v_{j-1} ] + [ \nabla P(w_{j}) - \nabla P(w_{j-1}) ]  - [ v_{j} - v_{j-1} ] \|^2 | \mathcal{F}_{j} ]
\\
& = \| \nabla P(w_{j-1}) - v_{j-1} \|^2 + \| \nabla P(w_{j}) - \nabla P(w_{j-1}) \|^2 + \mathbb{E} [ \| v_{j} - v_{j-1}  \|^2 | \mathcal{F}_{j} ] 
\\
&\quad + 2 ( \nabla P(w_{j-1}) - v_{j-1} )^\top ( \nabla P(w_{j}) - \nabla P(w_{j-1}) ) \\
&\quad - 2 ( \nabla P(w_{j-1}) - v_{j-1} )^\top \mathbb{E}[ v_{j} - v_{j-1} | \mathcal{F}_{j} ]  \\
&\quad - 2 ( \nabla P(w_{j}) - \nabla P(w_{j-1}) )^\top \mathbb{E}[ v_{j} - v_{j-1} | \mathcal{F}_{j} ] 
\\
& = \| \nabla P(w_{j-1}) - v_{j-1} \|^2 - \| \nabla P(w_{j}) - \nabla P(w_{j-1}) \|^2 + \mathbb{E} [ \| v_{j} - v_{j-1}  \|^2 | \mathcal{F}_{j} ], 
\end{align*}
where the last equality follows from
\begin{align*}
& \mathbb{E}[ v_{j} - v_{j-1} | \mathcal{F}_{j} ] \overset{\eqref{eq:vt}}{= }\mathbb{E}[ \nabla f_{i_{j}}(w_{j}) - \nabla f_{i_{j}}(w_{j-1}) | \mathcal{F}_{j} ] 
= \nabla P(w_{j}) - \nabla P(w_{j-1}).
\end{align*}

By taking expectation for the above equation, we have
\begin{align*}
\mathbb{E}[ \| \nabla P(w_{j}) - v_{j} \|^2 ] &= \mathbb{E}[ \| \nabla P(w_{j-1}) - v_{j-1} \|^2 ] - \mathbb{E}[ \| \nabla P(w_{j}) - \nabla P(w_{j-1}) \|^2 ] + \mathbb{E}[ \| v_{j} - v_{j-1} \|^2 ]. 
\end{align*}

Note that $\| \nabla P(w_{0}) - v_{0} \|^2 = 0$. By summing over $j = 1,\dots,t\ (t\geq 1)$, we have
\begin{align*}
& \mathbb{E}[ \| \nabla P(w_{t}) - v_{t} \|^2 ]  = \sum_{j = 1}^{t} \mathbb{E}[ \| v_{j} - v_{j-1} \|^2 ]  - \sum_{j = 1}^{t} \mathbb{E}[ \| \nabla P(w_{j}) - \nabla P(w_{j-1}) \|^2 ]. 
\end{align*}

\subsection{Proof of Lemma \ref{lem_bound_var_diff_str_02}}

For $j \geq 1$, we have
\begin{align*}
 \mathbb{E}[\| v_{j} \|^2 | \mathcal{F}_{j}] 
&= \mathbb{E}[\| v_{j-1} - (\nabla f_{i_{j}}(w_{j-1}) - \nabla f_{i_{j}}(w_{j}) ) \|^2 | \mathcal{F}_{j} ] 
\\ 
&= \|v_{j-1} \|^2 + \mathbb{E}\Big[\| \nabla f_{i_{j}}(w_{j-1}) - \nabla f_{i_{j}}(w_{j}) \|^2  
 - \tfrac{2}{\eta}(\nabla f_{i_{j}}(w_{j-1}) - \nabla f_{i_{j}}(w_{j}))^\top (w_{j-1} - w_{j}) | \mathcal{F}_{j} \Big] 
 \\ 
& \overset{\eqref{ineq_convex}}{\leq} \|v_{j-1} \|^2 + \mathbb{E}\Big[\| \nabla f_{i_{j}}(w_{j-1}) - \nabla f_{i_{j}}(w_{j}) \|^2 
- \tfrac{2}{L \eta} \| \nabla f_{i_{j}}(w_{j-1}) - \nabla f_{i_{j}}(w_{j}) \|^2 | \mathcal{F}_{j} \Big] 
\\ 
& = \|v_{j-1} \|^2 + \left(1 - \tfrac{2}{\eta L}\right) \mathbb{E} [ \| \nabla f_{i_{j}}(w_{j-1}) - \nabla f_{i_{j}}(w_{j}) \|^2 | \mathcal{F}_{j} ] \\
& \overset{\eqref{eq:vt}}{= } \|v_{j-1} \|^2 + \left(1 - \tfrac{2}{\eta L}\right) \mathbb{E} [ \| v_{j} - v_{j-1} \|^2 | \mathcal{F}_{j} ],
\end{align*}
which, if we take expectation, implies that
$$
\mathbb{E}[\| v_{j} - v_{j-1} \|^2] 
\leq \frac{\eta L}{2 - \eta L} \Big[ \mathbb{E}[ \|v_{j-1} \|^2] - \mathbb{E}[\| v_{j} \|^2] \Big], 
$$
when $\eta < 2/{L}$.

\quad By summing the above inequality over $j = 1,\dots, t\ (t\geq 1)$, we have
\begin{equation}\label{eq:sumover}
\sum_{j=1}^{t} \mathbb{E}[\| v_{j} - v_{j-1} \|^2] 
\leq \frac{\eta L}{2 - \eta L} \Big[ \mathbb{E}[ \|v_{0} \|^2] - \mathbb{E}[\| v_{t} \|^2] \Big].  
\end{equation}

By Lemma \ref{lem:var_diff_01}, we have
\begin{align*}
 \mathbb{E}[ \| \nabla P(w_{t}) - v_{t} \|^2 ] & \leq \sum_{j = 1}^{t} \mathbb{E}[ \| v_{j} - v_{j-1} \|^2 ]  \overset{\eqref{eq:sumover}}{\leq}  \frac{\eta L}{2 - \eta L} \Big[ \mathbb{E}[ \|v_{0} \|^2] - \mathbb{E}[\| v_{t} \|^2] \Big]. 
\end{align*}

\subsection{Proof of Lemma~\ref{lem:complexity}}

With $\eta=1/(\theta L)$ and $\kappa=L/\mu$, the rate of convergence $\alpha_m$ can be written as
\begin{align*}
\sigma_m &\overset{\eqref{eq:sigma0}}{=} \frac{1}{\mu \eta (m + 1)} +  \frac{\eta L}{2 - \eta L}
= \frac{\theta L}{\mu(m+1)} + \frac{1/\theta}{2-1/\theta} 
   = \left(\frac{\kappa}{m+1}\right)\theta + \frac{1}{2\theta-1},
\end{align*}
which is equivalent to
$$m(\theta) \eqdef m = \frac{\theta(2\theta-1)}{\sigma_m (2\theta-1)-1} \kappa - 1.$$
Since $\sigma_m$ is considered fixed, then the optimal choice of $m$ in terms of $\theta$ can be solved from $\min_\theta m (\theta)$, or equivalently, $0 = (\partial m)/(\partial \theta) = m'(\theta)$, and therefore we have the equation with the optimal $\theta$ satisfying
\begin{equation}\label{Ksigma_m}
\sigma_m = (4\theta^*-1)/(2\theta^*-1)^2,
\end{equation}
and by plugging it into $m(\theta)$ we conclude the optimal $m$:
$$m^* = m(K^*) = \frac{1}{2}(2K^*-1)^2\kappa -1,$$
while by solving for $\theta^*$ in \eqref{Ksigma_m} and taking into account that $\theta>1$, we have the optimal choice of $\theta$:
$$\theta^* = \frac{\sigma_m+1+\sqrt{\sigma_m+1}}{2\sigma_m}.$$
Obviously, for $\sigma_m <1$, we require $\theta^* > 1+\sqrt{2}/2$.

\subsection{Proof of Theorem~\ref{lem_bouned_moment_stronglyconvexP}}

For $t \geq 1$, we have
\begin{align*}
\| \nabla P(w_{t}) - \nabla P(w_{t-1})\|^2 &= \Big\| \frac{1}{n} \sum_{i=1}^{n} [ \nabla f_i(w_{t}) - \nabla f_i(w_{t-1})  ] \Big\|^2 \\
& \leq \frac{1}{n} \sum_{i=1}^{n} \| \nabla f_i(w_{t}) - \nabla f_i(w_{t-1})  \|^2 \\
& = \mathbb{E}[ \| \nabla f_{i_{t}}(w_{t}) - \nabla f_{i_{t}}(w_{t-1}) \|^2 | \mathcal{F}_{t} ].  \tagthis \label{eq:prthm001}
\end{align*}

Using the proof of Lemma \ref{lem_bound_var_diff_str_02}, for $t \geq 1$, we have
\begin{align*}
 \mathbb{E}[\| v_{t} \|^2 | \mathcal{F}_{t}] 
& \leq \|v_{t-1} \|^2 + \left(1 - \tfrac{2}{\eta L}\right) \mathbb{E} [ \| \nabla f_{i_{t}}(w_{t-1}) - \nabla f_{i_{t}}(w_{t}) \|^2 | \mathcal{F}_{t} ] \\
& \overset{\eqref{eq:prthm001}}{\leq } \|v_{t-1} \|^2 + \left(1 - \tfrac{2}{\eta L}\right) \| \nabla P(w_{t}) - \nabla P(w_{t-1})\|^2 \\
& \leq \|v_{t-1} \|^2 + \left(1 - \tfrac{2}{\eta L}\right) \mu^2 \eta^2 \|v_{t-1}\|^2.
\end{align*}

Note that $1 - \tfrac{2}{\eta L} < 0$ since $\eta < 2/L$. The last inequality follows by the strong convexity of $P$, that is, $\mu \|w_{t} - w_{t-1}\| \leq \| \nabla P(w_{t}) - \nabla P(w_{t-1})\|$ and the fact that $w_{t} = w_{t-1} - \eta v_{t-1}$. By taking the expectation and applying recursively, we have
\begin{align*}
\mathbb{E}[\| v_{t} \|^2] & \leq \left[ 1 - \left(\tfrac{2}{\eta L} - 1 \right) \mu^2 \eta^2  \right] \mathbb{E}[\| v_{t-1} \|^2] \\
& \leq \left[ 1 - \left(\tfrac{2}{\eta L} - 1 \right) \mu^2 \eta^2  \right]^{t} \mathbb{E}[\| v_{0} \|^2] \\
& = \left[ 1 - \left(\tfrac{2}{\eta L} - 1 \right) \mu^2 \eta^2  \right]^{t} \mathbb{E}[\| \nabla P(w_{0}) \|^2]. 
\end{align*}

\subsection{Proof of Theorem~\ref{thm:bound_moment}}
We obviously have $\mathbb{E}[\|v_0\|^2 | \mathcal{F}_{0}] = \|\nabla P(w_0)\|^2$. For $t \geq 1$, we have
\begin{align*}
\mathbb{E}[\| v_{t} \|^2 | \mathcal{F}_{t}] 
& \overset{\eqref{eq:vt}}{=}  \mathbb{E}[\| v_{t-1} - (\nabla f_{i_{t}}(w_{t-1}) - \nabla f_{i_{t}}(w_{t}) ) \|^2 | \mathcal{F}_{t} ] \\
&\overset{\eqref{eq:iterate}}{=} \|v_{t-1} \|^2    + \mathbb{E}[\| \nabla f_{i_{t}}(w_{t-1}) - \nabla f_{i_{t}}(w_{t}) \|^2  - \tfrac{2}{\eta}(\nabla f_{i_{t}}(w_{t-1}) - \nabla f_{i_{t}}(w_{t}))^\top (w_{t-1} - w_{t}) | \mathcal{F}_{t} ] \\
&\overset{\eqref{eqdasfsadfsa}
}{ \leq}  \|v_{t-1} \|^2  
 - \tfrac{2 \mu L \eta}{\mu + L}  \|v_{t-1} \|^2  
 + \mathbb{E}[\| \nabla f_{i_{t}}(w_{t-1}) - \nabla f_{i_{t}}(w_{t}) \|^2 | \mathcal{F}_{t} ] - \tfrac{2}{\eta} \cdot \tfrac{1}{\mu + L} \mathbb{E}[\| \nabla f_{i_{t}}(w_{t-1}) - \nabla f_{i_{t}}(w_{t}) \|^2 | \mathcal{F}_{t} ] \\
&= 
(1 - \tfrac{2 \mu L \eta}{\mu + L})  \|v_{t-1} \|^2  
+ (1- \tfrac{2}{\eta} \cdot \tfrac{1}{\mu + L} ) \mathbb{E}[\| \nabla f_{i_{t}}(w_{t-1}) - \nabla f_{i_{t}}(w_{t}) \|^2 | \mathcal{F}_{t} ] \\
& \leq  \left( 1 - \tfrac{2 \mu L \eta}{\mu + L} \right)  \|v_{t-1} \|^2,  
\tagthis \label{eqasfewafaw}
\end{align*}
where in last inequality we have used that $\eta \leq 2/(\mu+L)$. By taking the expectation and applying recursively, the desired result is achieved. 

\subsection{Proof of Theorem~\ref{thm:generalconvex_02}}
By Theorem \ref{thm:generalconvex_01}, we have
\begin{align*}
\mathbb{E}[ \| \nabla P(\tilde{w}_s)\|^2 ] & \leq \frac{2}{\eta(m + 1)} \mathbb{E}[ P(\tilde{w}_{s-1}) - P(w^{*})]  + \frac{ \eta L}{2 - \eta L}   \mathbb{E}[ \| \nabla P(\tilde{w}_{s-1})\|^2 ] \\
& = \delta_{s-1} + \alpha \mathbb{E}[ \| \nabla P(\tilde{w}_{s-1})\|^2 ] \\
& \leq \delta_{s-1} + \alpha \delta_{s-2} + \dots + \alpha^{s-1} \delta_{0} + \alpha^s \| \nabla P(\tilde{w}_{0})\|^2 \\
& \leq \delta + \alpha \delta + \dots + \alpha^{s-1} \delta + \alpha^s \| \nabla P(\tilde{w}_{0})\|^2 \\
& \leq \delta \frac{1 - \alpha^s}{1 - \alpha} + \alpha^s \| \nabla P(\tilde{w}_{0})\|^2 \\
& = \Delta(1 - \alpha^s) + \alpha^s  \| \nabla P(\tilde{w}_0)\|^2 \\
& = \Delta + \alpha^s ( \| \nabla P(\tilde{w}_0)\|^2 - \Delta),
\end{align*}
where the second last equality follows since
\begin{align*}
\frac{\delta}{1 - \alpha} = \delta \left( \frac{2 - \eta L}{2 - 2\eta L} \right) = \delta\left(1 + \frac{\eta L}{2(1 - \eta L)} \right) = \Delta. 
\end{align*}
Hence, the desired result is achieved. 

\subsection{Proof of Corollary \ref{cor:thm_3_complexity}}

Based on Theorem \ref{thm:generalconvex_02}, if we would aim for an $\epsilon$-accuracy solution, we can choose $\Delta = \epsilon/4$ and $\alpha=1/2$ (with $\eta = 2/(3 L)$). To obtain the convergence to an $\epsilon$-accuracy solution, we need to have $\delta = \Ocal(\epsilon)$, or equivalently, $m = \Ocal(1/\epsilon)$. Then we have
\begin{align*}
\mathbb{E}[ \| \nabla P(\tilde{w}_s)\|^2 ]
&\overset{\eqref{asdfsfas}}{\leq} \frac \Delta2 + \frac12 \mathbb{E}[ \| \nabla P(\tilde{w}_{s-1})\|^2 ] \\
&\leq \frac \Delta 2 + \frac\Delta{2^2} +\frac1{2^2} \mathbb{E}[ \| \nabla P(\tilde{w}_{s-2})\|^2 ]
\\
&\leq  \Delta \left(\frac12+\frac1{2^2}+\dots+\frac1{2^s}\right) +\frac1{2^s} \| \nabla P(\tilde{w}_{0})\|^2 \\
&\leq \Delta +\frac1{2^s} \| \nabla P(\tilde{w}_{0})\|^2.
\end{align*}

 To guarantee that 
$\mathbb{E}[ \| \nabla P(\tilde{w}_s)\|^2 ]\leq \epsilon$, 
it is sufficient to make
$\frac1{2^s} \| \nabla P(\tilde{w}_{0})\|^2 \leq \tfrac34 \epsilon$, or
$s = \Ocal(\log (1/\epsilon))$. This implies the total complexity to achieve an $\epsilon$-accuracy solution is $(n+2m)s = \Ocal((n + (1/\epsilon))\log(1/\epsilon))$.

\subsection{Proof of Corollary~\ref{cor:complexity}}
Based on Lemma~\ref{lem:complexity} and Theorem \ref{thm:stronglyconvexconvergence}, let us pick $\theta^* = 2$, i.e, then we have $m^* = 4.5\kappa -1.$ So let us run SARAH with $\eta = 1/(2L)$ and $m=4.5\kappa$, then we can calculate $\sigma_m$ in \eqref{eq:sigma0} as
$$\sigma_m = \frac{1}{\mu \eta (m + 1)} +  \frac{\eta L}{2 - \eta L}  = \frac{1}{[\mu/(2L)] (4.5\kappa + 1)} +  \frac{1/2}{2 - 1/2}  <  \frac{4}{9} + \frac{1}{3} = \frac{7}{9}.$$
 According to Theorem \ref{thm:stronglyconvexconvergence}, if we run SARAH for $\mathcal{T}$ iterations where
 $$\mathcal{T} = \lceil \log (\|\nabla P(\tilde{w}_0)\|^2 /\epsilon)/\log(9/7)\rceil = \lceil \log_{7/9}(\epsilon/\|\nabla P(\tilde{w}_0)\|^2)\rceil \geq \log_{7/9}(\epsilon/\|\nabla P(\tilde{w}_0)\|^2),$$
then we have
 $$ \Exp[\|\nabla P(\tilde{w}_\mathcal{T})\|^2] \leq (\sigma_m)^\mathcal{T} \|\nabla P(\tilde{w}_0)\|^2 < (7/9)^\mathcal{T}\|\nabla P(\tilde{w}_0)\|^2 \leq (7/9)^{\log_{7/9}(\epsilon/\|\nabla P(\tilde{w}_0)\|^2)}\|\nabla P(\tilde{w}_0)\|^2= \epsilon,$$
 thus we can derive \eqref{eq:accuracy}.
If we consider the number of gradient evaluations as the main computational complexity, then the total complexity can be obtained as 
$$(n+2m) \mathcal{T} = \Ocal\left((n+\kappa)\log(1/\epsilon)\right).$$

%
%
%
%
%
%
%
%
%

\end{document}